\newtheorem{lem}{Lemma}
\newtheorem{thm}{Theorem}
\newtheorem{pro}{Proposition}
\newtheorem{dfn}{Definition}
\newtheorem{exa}{Example}
\newtheorem{rmk}{Remark}
\newcommand{\argmin}{\mathop{\mathrm{argmin}}}
\newcommand{\1}{{\rm 1}\kern-0.24em{\rm I}}
\newcommand{\B}{\boldsymbol}
\newcommand{\R}{\mathbb{R}}
\newif\ifworkinprogress
\begin{document}

\jmlrheading{25}{2024}{1-\pageref{LastPage}}{3/21; Revised
11/23}{1/24}{21-0233}{Kayhan Behdin and Rahul Mazumder}
\ShortHeadings{Sparse NMF with Archetypal Regularization}{Behdin and Mazumder}

\title{Sparse NMF with Archetypal Regularization: Computational and Robustness Properties }

\author{\name Kayhan Behdin \email behdink@mit.edu \\
\addr Operations Research Center \\
Massachusetts Institute of Technology \\
Cambridge, MA 02139, USA
\AND \name Rahul Mazumder \email rahulmaz@mit.edu  \\
\addr MIT Sloan School of Management and Operations Research Center \\
Massachusetts Institute of Technology \\
Cambridge, MA 02139, USA}

\editor{John Cunningham}

\maketitle

\begin{abstract}%
    We  consider  the  problem  of  sparse  nonnegative matrix factorization (NMF)  using  archetypal  regularization. The goal is to represent a collection of data points as nonnegative linear combinations of  a  few  nonnegative  sparse  factors with appealing geometric properties, arising from the use of archetypal regularization.  We generalize the notion of robustness studied in  Javadi and Montanari (2019) (without sparsity) to the notions of (a) strong robustness that implies each estimated archetype is close to the underlying archetypes and (b) weak robustness that  implies  there  exists  at  least  one  recovered  archetype  that  is  close  to  the  underlying archetypes.  Our theoretical results on robustness guarantees hold under minimal  assumptions on the underlying data, and applies to settings where the underlying archetypes need not be sparse. We present theoretical results and illustrative examples to strengthen the insights underlying the notions of robustness. 
    We propose new algorithms for our optimization problem;
    and present
    numerical experiments on synthetic and real data sets that shed further insights into our proposed framework and theoretical developments.
\end{abstract}

\medskip

\begin{keywords}
  Sparse Nonnegative Matrix Factorization, Archetypal Analysis, Robustness to Perturbation, Model misspecification, Nonconvex Optimization
\end{keywords}

\section{Introduction}
Nonnegative Matrix Factorization (NMF) \citep{lee1999learning} is a well-known dimensionality reduction method where we represent a collection of data points as nonnegative linear combinations of a few nonnegative latent factors. Nonnegative factors are desirable from an interpretability standpoint in applications such as computational biology~\citep{kotliar2019identifying,brunet2004metagenes}, image processing \citep{kalayeh2014nmf,liu2011constrained}, text mining \citep{berry2005email}, and chemometrics \citep{lawton1971self}, among others.  Mathematically, given a $m \times n$ data matrix with nonnegative entries $\boldsymbol{X}\in\mathbb{R}_{\geq0}^{m\times n}$ (the rows of $\B{X}$ correspond to the $m$ samples and the columns the dimensions), NMF computes nonnegative lower-dimensional latent factors $\boldsymbol{W}\in\mathbb{R}_{\geq0}^{m\times k},\boldsymbol{H}\in\mathbb{R}_{\geq0}^{k\times n}$. Here, $k$ denotes the number of latent factors with $k<m,n$ and we desire the factors to lead to a good approximation of the underlying data matrix: $\boldsymbol{X}\approx \boldsymbol{WH}$, where rows of $\boldsymbol{H}$ are representatives of the data and rows of $\boldsymbol{W}$ denote the coefficient weights. NMF can be formulated \citep{6795860} as the following nonconvex optimization problem
\begin{align}\label{basic1}
     \min_{\boldsymbol{H},\B{W}} \quad & \|\B{X}-\B{WH}\|_F^2 ~~~\text{s.t.}~~~\boldsymbol{H}\in\R_{\geq 0}^{k\times n},\boldsymbol{W}\in\R_{\geq 0}^{m\times k}
\end{align}
where, $\| \cdot \|_{F}$ denotes the Frobenius norm of a matrix. The NMF problem is agnostic to scaling: multiplying $\B W$ with a positive scalar $a>0$ and $\B H$ with $a^{-1}$ does not change the objective and feasible set. Therefore, to remove such ambiguity, one needs to impose some form of regularization.

\smallskip 

\noindent \textbf{Archetypal Analysis.} Another classic dimension reduction method that is closely related to NMF is Archetypal Analysis (AA) due to \citet{Breiman}.  AA seeks to find an exact representation of the form $\B{X}=\B{WH}$ for the data. However, in contrast to NMF, representatives of the data (aka the {\emph {archetypes}}), given by the rows of $\boldsymbol{H}$, have an appealing geometric interpretation.  The archetypes are estimated so that they belong to the convex hull of the data and the data is contained within their convex hull. Such constraints on the archetypes can help make the factorization of the data unique, resulting in more interpretable solutions. In practice however, it may not be possible to find such archetypes due to exact factorization requirement, especially in applications pertaining to NMF~\citep{javadi2019nonnegative}. 
To this end, \citet{javadi2019nonnegative}, drawing ideas from AA, propose a regularization scheme for NMF: Among all possible sets of candidate representatives that contain the data in their convex hull with acceptable accuracy (i.e. sufficiently high data fidelity), we select the one that is the closest to the convex hull of the data. Figure~\ref{fig:0} (a), (b) illustrate the exact AA of \citet{Breiman} and regularized AA with a toy example\footnote{See Appendix \ref{toyexa} for details of the example.}. As shown by~\citet{javadi2019nonnegative}, the archetypal-regularized NMF framework leads to good performance in practice. Therefore, we pursue a deeper investigation of this regularization framework in this paper. For simplicity, we will use the term ``AA'' to refer to the framework of~\citet{javadi2019nonnegative}.
When it is not clear from the context, we use the term ``exact AA'' to refer to the method of~\citet{Breiman}.

\smallskip

\noindent \textbf{Robustness.} If $\B{X}$ admits an exact factorization of the form $\B{X}=\B{W}\B{H}$ for some $\B{W}\in\R^{m\times k}_{\geq 0}$ and $\B{H}\in\R^{k\times n}_{\geq 0}$, \citet{NIPS2003_2463} show that under the so-called {\emph{separability}} assumption\footnote{The factorization $\B{X}=\B{W}\B{H}$ is called separable if rows of $\B{H}$ are a subset of rows of $\B{X}$.}, it is possible to recover $\B{W}$ and $\B{H}$ from $\B{X}$ (up to permutation and scaling of rows of $\B{H}$/columns of $\B{W}$). The separability assumption has been consequently generalized to less restrictive cases, including noisy settings. In particular, \citet{arora2016computing} consider an approximately separable model, where the data is assumed to be a noisy version of a separable data set. They show that in this model, under additional regularity conditions, a polynomial-time algorithm exists that finds a factorization that is close to the factorization of the noiseless data in a suitable metric. Their results are further improved by \citet{NIPS2012_4518}, showing that noisy separable NMF can be solved by linear programs. \citet{pmlr-v37-geb15} show that a relaxed version of separability, the so-called \emph{subset separability} condition, suffices to achieve a good factorization from noisy data. Note that the separability/subset separability assumptions are usually  hard to verify on real data; and in our development we do not make use of this assumption. \citet{javadi2019nonnegative} show that AA enjoys {\em robustness} to perturbation: under certain assumptions, the resulting solution from AA on the perturbed data is close to the underlying model in a suitable metric (as discussed in Section \ref{robussec}). In particular, this implies that at least one of the recovered archetypes is close to the underlying archetypes that contain and represent the noiseless data. In this paper, we generalize this notion to a stronger version of robustness which implies that \emph{each} recovered archetype is close to some true archetype. 
 \\

\noindent Some earlier works have considered NMF formulations robust to outliers~\citep{chen2014fast,kong2011robust}. In this paper, we consider a different notion of robustness, unrelated to outliers that arise in the context of robust statistics~\citep{huber2004robust}.

\smallskip

\noindent \textbf{Sparsity.} Generally speaking, due to nonnegativity constraints on the latent factors, NMF is known to produce sparse solutions, that is, $\boldsymbol{H}$ and $\boldsymbol{W}$ have some zero entries~\citep{5438836}. A sparse representation of the data aids in interpretability and requires less storage space. This property of NMF has been utilized in different applications such as image processing \citep{hoyer2004non}, computational biology \citep{10.1093/bioinformatics/btm134}, medical imaging \citep{woo2018sparse}, document clustering \citep{kim2008sparse} and audio processing \citep{4100700}. Several papers have proposed formulations of NMF with additional penalties and/or constraints to encourage enhanced sparsity in NMF---we refer to these as sparse NMF methods. Specifically, \citet{hoyer2004non} consider a sparse NMF problem where they use a combination of the
$\ell_1$ and $\ell_2$ penalty on the entries of $\B{H}$.
\citet{peharz2012sparse} add a constraint of the form $\|\B{H}\|_0\leq \ell$ to problem \eqref{basic1}, where $\|\B{H}\|_0$ is the $\ell_0$-pseudonorm, the number of nonzero entries of $\B{H}$ and $\ell$ is the desired sparsity level.  \citet{kim2008sparse,10.1093/bioinformatics/btm134} add an $\ell_1$ norm penalty on the entries of $\B{H}$ to the cost function of \eqref{basic1} to impose sparsity on $\B{H}$. To the best of our knowledge, there is limited theoretical work on sparsity in NMF---in particular, towards understanding the effect of sparsity constraints on the robustness of the representation returned by NMF, an aspect we study here.
In this paper, we present a simultaneous analysis of sparsity and archetypal regularization in the form of Sparse AA (SAA). We study regularized AA~\citep{javadi2019nonnegative} in the presence of additional sparsity constraints on $\B{H}$. In other words, we look for $\boldsymbol{H}$ such that it has a few nonzero entries (i.e., $\|\B{H}\|_0$ is small), its rows describe the data well and are close to the convex hull of data. See Figure~\ref{fig:0} (c) for a numerical illustration of this problem.
In particular, we show that sparsity constrained AA leads to robust solutions, both in the weak sense of \citet{javadi2019nonnegative} and the stronger notion of robustness proposed here. An important feature of our analysis is that we do not assume the underlying archetypes are sparse---i.e., we can handle model misspecification---this makes our proofs different from existing work. We also discuss how noise and sparsity affect the robustness properties of the model.

\smallskip

\smallskip

\noindent \textbf{Algorithms.} Due to the bilinearity of the mapping $(\B{W}, \B{H}) \mapsto \B{W}\B{H}$,  most formulations of NMF end up in a nonconvex optimization problem, although some convex formulations exist \citep{bach2008convex} when the dimension of the latent factors grows to infinity. Some basic approaches to these nonconvex problems include projected gradient methods \citep{6795860}, multiplicative update rules \citep{lee1999learning,Gonzalez05acceleratingthe} and alternating optimization \citep{paatero1994positive,Chu04optimality}. More sophisticated algorithms for NMF have been proposed in recent years, for example see \citet{8682280,mizutani2014ellipsoidal,6656801}. 
In this paper we present algorithms to obtain good solutions for the regularized AA problem with sparsity constraints. To this end, we present proximal block coordinate methods, and establish that they lead to a stationary point. We discuss a useful initialization scheme based on Mixed Integer Programming (MIP)~\citep{nemhauser,bertsimas2016best} that leads to high-quality solutions. 
To further improve the quality of solutions available from our block coordinate procedure, we present local search based methods---to this end, our framework draws inspiration from the work of~\citet{beck,hazimeh2020fast} and adapts it to the setting of matrix factorization problems.
Note that~\citet{6247852,MORUP201254,abrolgeometric} have proposed algorithms for the original (exact) AA problem without sparsity constraints.
In addition, prior work has extended the notion of separability arising from NMF to address the exact AA problem: For example, \citet{damle2017geometric} use geometric interpretations of AA and separability to develop a new algorithm for NMF. Our numerical experiments on synthetic, and real  data sets validate our theoretical results, and suggest the superiority of SAA over other popular sparse NMF methods.

\smallskip

\noindent \textbf{Our Contributions.} Our contribution in this paper can be summarized as follows: 

\smallskip

\begin{compactitem}
    \item We generalize the robustness framework of \citet{javadi2019nonnegative} to notions of \emph{weak} and \emph{strong} robustness, introduced in this paper---these notions differ in how they describe the proximity of our estimators to the underlying archetypes. Furthermore, we prove robust solutions are good representatives of the noiseless data. 
    
    \smallskip
    
\item  We show how sparsity and AA can be used together to produce sparse factors that are robust to noise and perturbation in the data. Our results apply to the mis-specified setting--i.e., situations where the underlying archetypes are not necessarily sparse.

\smallskip

\item  We present algorithms\footnote{Implementation can be found at~\url{https://github.com/kayhanbehdin/SparseAA}.} based on block proximal descent and combinatorial local search, discuss initialization strategies based on mixed integer programming (MIP); and present convergence properties of our proposed algorithmic framework.  We demonstrate via numerical experiments on synthetic and real data sets the usefulness of our proposed approach.
\end{compactitem}

\smallskip

\noindent \textbf{Notation.} 
For a matrix $\boldsymbol{X}$, we let $\boldsymbol{X}_{i,j}$, $\boldsymbol{X}_{i,.}$ and $\boldsymbol{X}_{.,j}$ denote the $(i,j)$-th element, $i$-th row and $j$-th column of $\B{X}$, respectively. With a slight abuse of notation, for indexed variables, we move the subscript to superscript: that is, the $i$-th row of $\boldsymbol{X}_0$ is shown as $\boldsymbol{X}^0_{i,.}$. The number of rows of a matrix $\boldsymbol{X}$ is denoted as $\text{\#row}(\boldsymbol{X})$. The convex hull of rows of the matrix $\boldsymbol{X}\in\mathbb{R}^{m\times n}$ is denoted by 
$$\text{Conv}(\boldsymbol{X})=\left\{\sum\limits_{i=1}^m \alpha_i \boldsymbol{X}_{i,.}: \alpha_i\geq 0, \sum\limits_{i=1}^m \alpha_i = 1\right\}.$$
We define (i) the distance between a vector $\boldsymbol{x}$ and the convex hull $\text{Conv}(\boldsymbol{X})$ as $$D(\boldsymbol{x},\boldsymbol{X})=\min_{\boldsymbol{v}\in\text{Conv}(\boldsymbol{X})}\|\boldsymbol{x}-\boldsymbol{v}\|_2^2$$
(ii) the distance between a set of points (i.e., the rows of $\boldsymbol{X}$) and $\text{Conv}(\boldsymbol{Y})$ as $$D(\boldsymbol{X},\boldsymbol{Y})=\sum_{i=1}^{\text{\#row}(\boldsymbol{X})}D(\boldsymbol{X}_{i,.},\boldsymbol{Y})~~~
\text{and}~~~D(\boldsymbol{X},\boldsymbol{Y})^{1/2}=\sqrt{D(\boldsymbol{X},\boldsymbol{Y})}.$$ 
The number of nonzero entries in a matrix $\boldsymbol{A}$ is denoted as $\|\boldsymbol{A}\|_0$. We define $P_{\ell}(\boldsymbol{H})$ to be the projection of $\boldsymbol{H}\in\mathbb{R}^{k\times n}$ onto the $\ell$-sparse set $\{\boldsymbol{X}\in\mathbb{R}^{k\times n}:\|\boldsymbol{X}\|_0\leq \ell\}.$ Moreover, we define the complement as $P_{\ell}^{\perp}(\boldsymbol{H})=\boldsymbol{H}-P_{\ell}(\boldsymbol{H})$. The support of a matrix $\B{H}\in\R^{k\times n}$, $S(\B{H})$, is defined as the set of its nonzero coordinates:
$$S(\B{H})=\{(i,j)\in[k]\times [n]: |\B{H}_{i,j}|>0\}.$$
We set $\B{E}^{i,j}\in\R^{k\times n}$ to be the matrix with coordinate $(i,j)$ equal to one and other coordinates equal to zero. Throughout this paper, we use $\sigma_{\min}(\boldsymbol{H})$ and $\sigma_{\max}(\boldsymbol{H})$ to denote the smallest and largest singular values of $\boldsymbol{H}$ (respectively). We let $\kappa(\boldsymbol{H}):=\sigma_{\max}(\B{H})/\sigma_{\min}(\B{H})$ denote the condition number of $\boldsymbol{H}$. For a convex and subdifferentiable function $f:\R^d\to \R$, $\partial f(\B{x})$ denotes the set of subgradients of $f$ at $\B{x}\in\R^d$. Proofs of main results have been relegated to the appendix to improve readability.

\begin{figure}[t!]
     \centering
\begin{tabular}{ccc}
\includegraphics[width=0.3\linewidth]{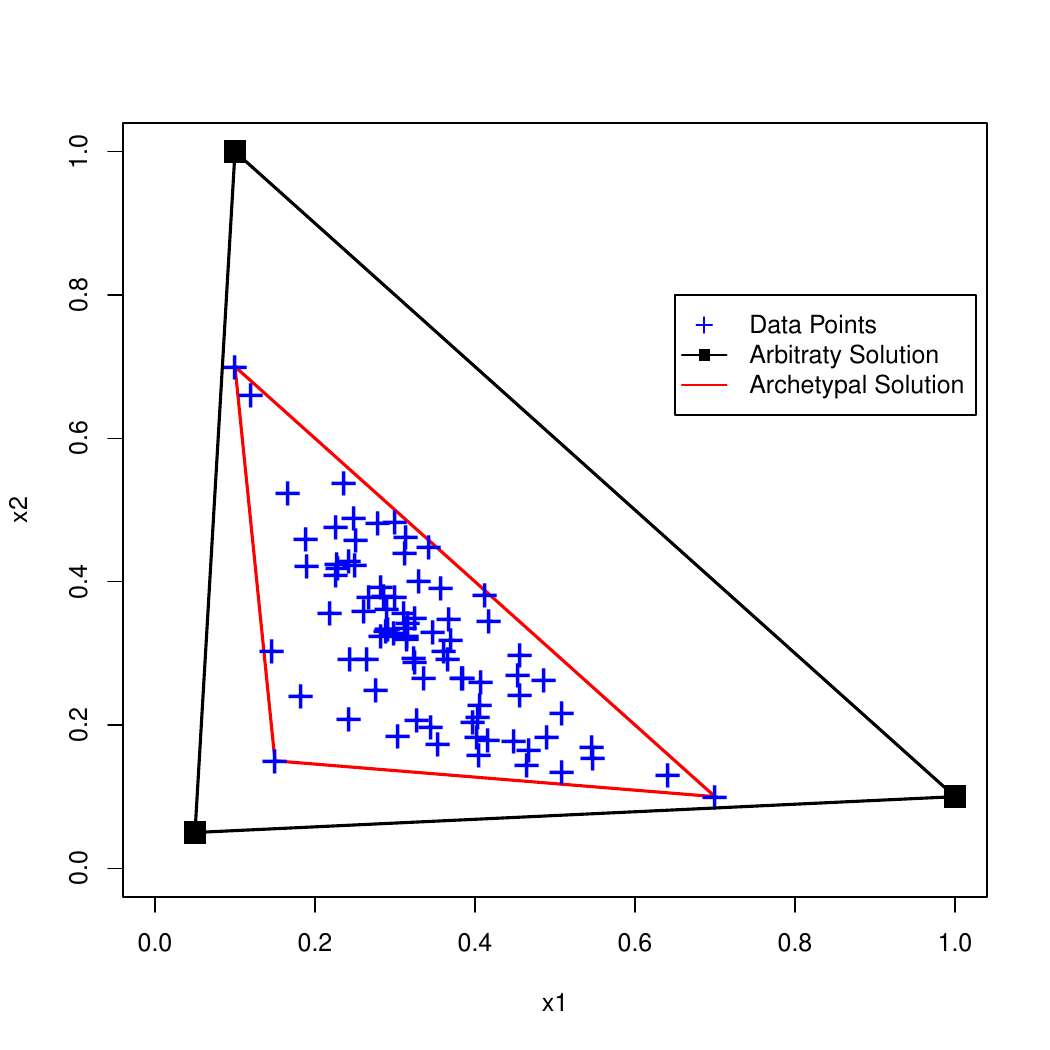}&
         \includegraphics[width=0.3\linewidth]{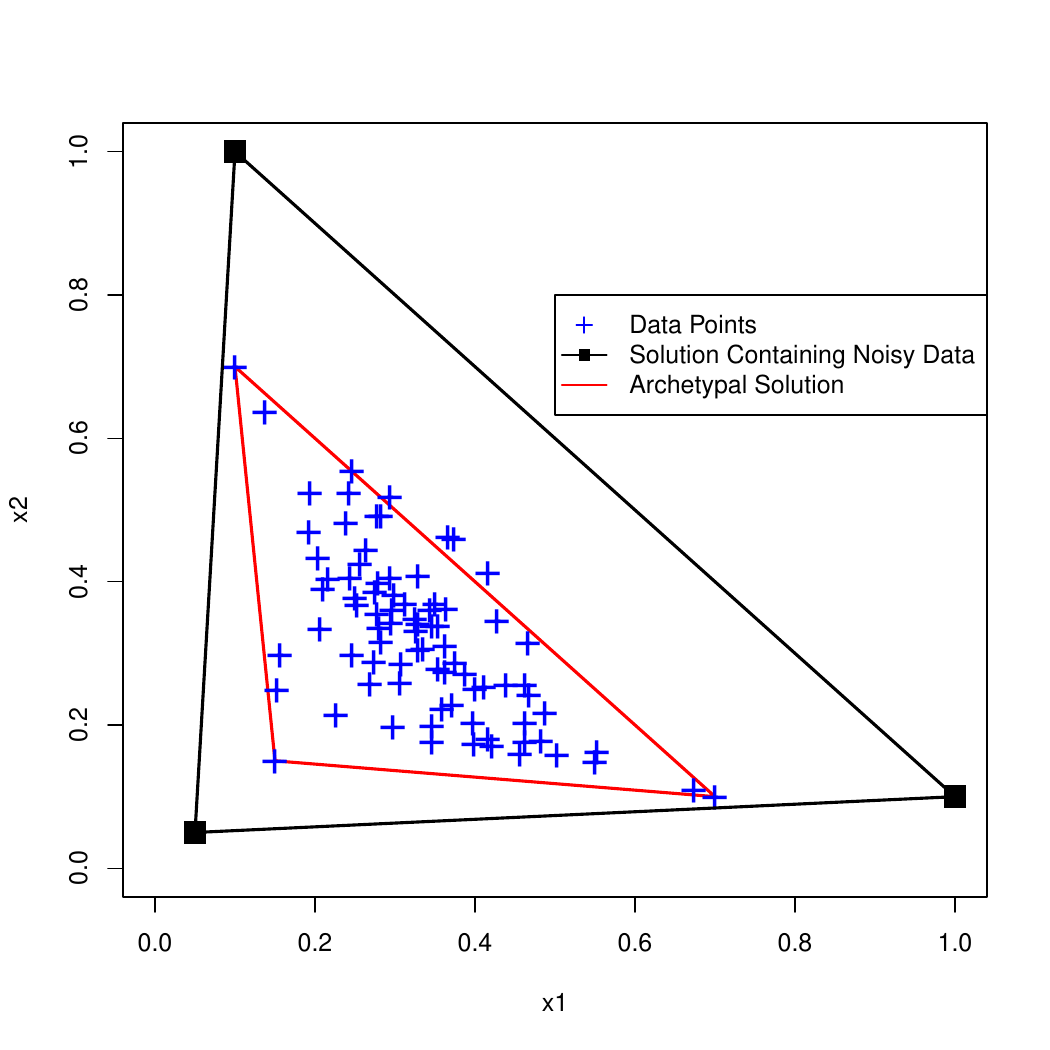}&
         \includegraphics[width=0.3\linewidth]{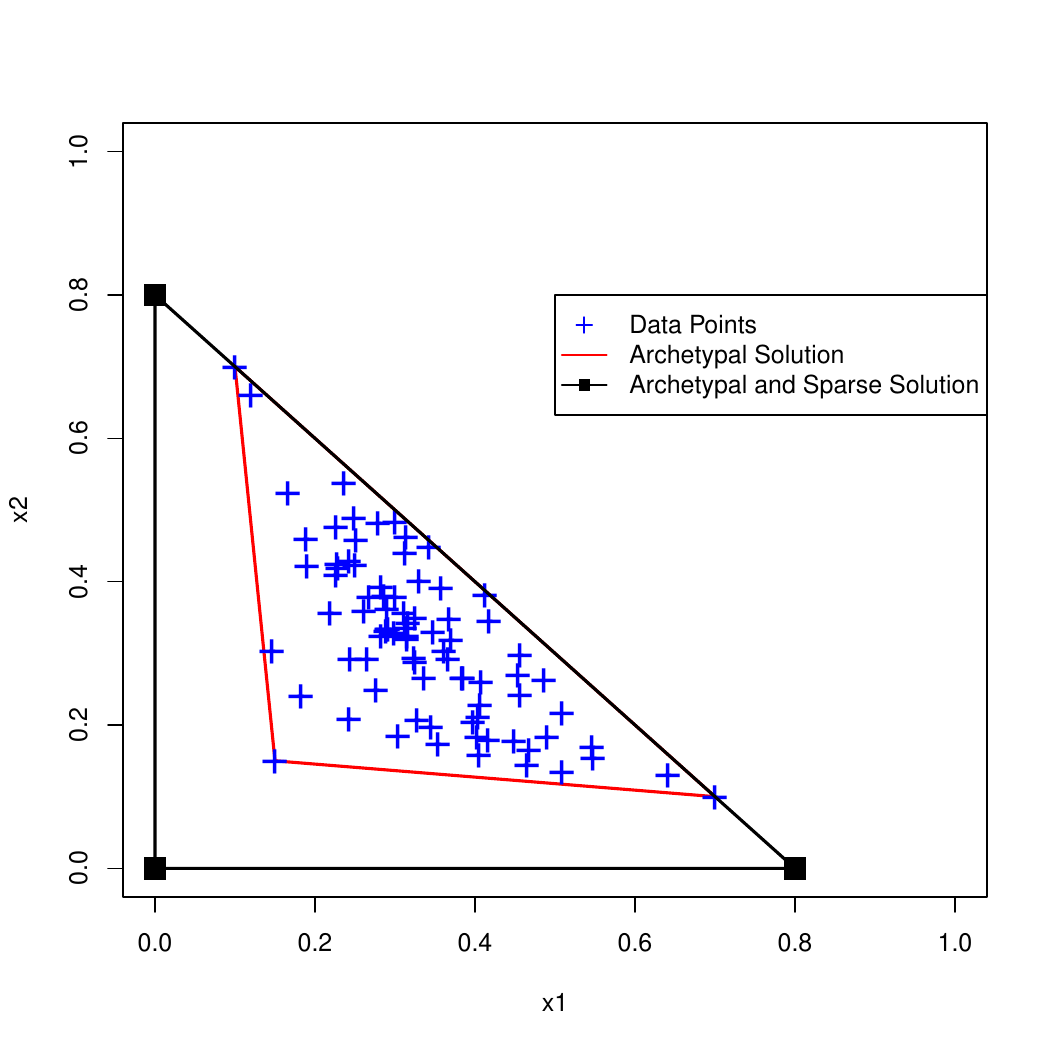} \\
         (a) & (b) & (c) 
\end{tabular}
        \caption{{\small In these figures, blue crosses (`+') represent the data points in $\mathbb{R}^2$. We seek to find 3 archetypes such that their convex hull contains the data. Panel (a): the black convex hull (triangle) shows an arbitrary solution to NMF, while the red convex hull shows the exact AA solution that is the smallest triangle containing the data. Panel (b): the black convex hull shows a solution that describes the data with no error, while it is not close to the convex hull of the data. The red convex hull shows the regularized AA solution, which both describes the data well (but with nonzero error) and is close to the data. Panel (c): the red convex hull shows the exact AA solution which does not have any zero coordinate, while the black convex hull shows a solution which is sparse and only has 2 nonzero coordinates. In addition, no other solution with the same sparsity can be found which is closer to the data.}}   \label{fig:0}
     \end{figure}

\section{Problem Formulation}

Given $m$ data points in $\mathbb{R}^n$, stacked along the rows of $\boldsymbol{X}\in\mathbb{R}^{m\times n}$, the goal of exact AA is to find $k$ archetypes  $\boldsymbol{H}_{1,.},\cdots,\boldsymbol{H}_{k,.}\in\mathbb{R}^{n}$ such that:  (i)~the rows of $\B{X}$ are contained in the convex hull of the rows of $\boldsymbol{H}$; and 
(ii)~the rows of $\boldsymbol{H}$ are themselves close to the convex hull of the rows of $\boldsymbol{X}$. 
Equivalently, we seek to learn $\boldsymbol{H}$ such that 
\begin{equation}\label{archa}
D(\boldsymbol{X},\boldsymbol{H})=D(\boldsymbol{H},\boldsymbol{X})=0
\end{equation}
where the first term ensures the data is described by $\boldsymbol{H}$ (as it implies there exists $\B{W}\in\mathbb{R}_{\geq 0}^{m\times k}$ such that $\B{X}=\B{WH}$ and each row of $\B{W}$ sums to one). The second term ensures that rows of $\boldsymbol{H}$ are in the convex hull of rows of $\B{X}$. 
As discussed earlier, the constraint (\ref{archa}) is too restrictive for most practical cases. Moreover, a general factorization of the form $\B{X}=\B{W}\B{H}$ is not unique due to multiplicative invariance. Therefore, to remove such ambiguity, offer some flexibility over the exact AA described above, \citet{javadi2019nonnegative} propose a relaxed version of (\ref{archa}) for the NMF problem. In this framework, among all the archetypes that contain the data with acceptable accuracy in their convex hull, they choose archetypes that are closest to the convex hull of data. We follow suit, but in addition, we impose a sparsity on $\B{H}$ via the constraint $\|\boldsymbol{H}\|_0\leq \ell$, where, $\ell$ is a budget on the number of nonzero entries.
Specifically, for a pre-specified value of $\alpha$, our proposed Sparse Archetypal Analysis (SAA) estimator considers:
\begin{align}\label{archl0noise}
     \hat{\boldsymbol{H}}\in
     \argmin_{\boldsymbol{H}\in\mathbb{R}_{\geq 0}^{k\times n}}~ D(\boldsymbol{H},\boldsymbol{X})~~~
     \text{s.t.}~~ D(\boldsymbol{X}_{i,.},\boldsymbol{H})^{1/2}\leq \alpha, i\in[m]; ~~ \|\boldsymbol{H}\|_0\leq \ell. 
\end{align}
Above the constraint ``$D(\boldsymbol{X}_{i,.},\boldsymbol{H})^{1/2}\leq \alpha, i\in[m]$" restricts the data points (rows of $\boldsymbol{X}$) to be close to the convex hull of the rows of $\boldsymbol{H}$. The archetypes (i.e., the right latent factors) are constrained to be sparse. The objective function in 
problem~\eqref{archl0noise} chooses the archetypes closest to the convex hull of data points among all feasible solutions. This serves to regularize the solution.
\begin{rmk}\label{rmk-feasible}
    We note that by taking the value of $\alpha$ to be too small in problem~\eqref{archl0noise}, problem~\eqref{archl0noise} can become infeasible. However, as we show in Theorem~\ref{robustnessthm}, under our model setup it is always possible find a value of $\alpha$ such that problem~\eqref{archl0noise} is feasible.
\end{rmk}

\subsection{Model setup}\label{modelsetting}
Suppose $\B{X}_0\in\R^{m\times n}_{\geq 0}$ with $\text{rank}(\B{X}_0)=k$ (where we assume $k$ is given a priori) admits a nonnegative factorization of rank $k$. That is, there exist $\bar{\B{W}}\in\R^{m\times k}_{\geq 0},\bar{\B{H}}\in\R^{k\times n}_{\geq 0}$ such that $\B{X}_0=\bar{\B{W}}\bar{\B{H}}$ and rows of $\bar{\B{W}}$ sum to one. This is equivalent to $D(\B{X}_0,\bar{\B{H}})=0$. However, such a  factorization is not generally unique. Hence, we let
\begin{align}\label{h0opt}
     \boldsymbol{H}_0\in\argmin_{\boldsymbol{H}\in\mathbb{R}_{\geq 0}^{k\times n}} ~~ D(\B{H},\boldsymbol{X}_0)~~\text{s.t.}~~ D(\boldsymbol{X}_0,\B{H})=0
\end{align}
and assume that $\B{H}_0$ is unique. The choice of $\B{H}_0$ in problem \eqref{h0opt} guarantees that $\B{X}_0$ has an exact factorization of the form $\B{X}_0=\B{W}_0\B{H}_0$ for some $\B{W}_0\in\R^{m\times k}_{\geq 0}$ such that its rows sum to one and $\B{H}_0$ is defined as in \eqref{h0opt}. Note that $\text{rank}(\B{H}_0)=k$---otherwise, if $\text{rank}(\B{H}_0)<k$, then $\text{rank}(\B{X}_0)<k$ which contradicts our assumption on $\B{X}_0$. 

We assume that the observed data matrix $\boldsymbol{X}$, is given by $\B{X}=\B{X}_0+\B{Z}$ where $\B{Z}$ is additive noise.  In what follows, we do not make any distributional assumptions on $\B{Z}$.

\begin{rmk}
Our analysis of robustness is goes through without the uniqueness assumption on $\B{H}_0$. We make the uniqueness assumption for simplicity of exposition. 
\end{rmk}
\begin{rmk}
Note that we do not assume that $\B{H}_0$ is sparse, though formulation~\eqref{archl0noise} imposes an explicit cardinality constraint on $\B{H}$. This model misspecification leads to
technical challenges: The analysis presented in \citet{javadi2019nonnegative} does not readily generalize to our setting, and 
we present a new analysis technique. 
\end{rmk}

\begin{rmk}
We also note that 
$$\B{X}_0=\B{W}_0\B{H}_0=\sum_{j=1}^k(\B{W}_0)_{:,k}(\B{H}_0)_{k,:}.$$
This implies 
$$\text{rank}(\B{X}_0)=\text{rank}_+(\B{X}_0)=k$$
where $\text{rank}_+$ denotes the nonnegative rank~\citep{cohen1993nonnegative}, defined as the minimum number of nonnegative rank-one matrices that sum up to $\B X_0$.
Therefore, nonnegative and ordinary ranks of $\B{X}_0$ are equal which might not be true in general. However, we note that this only concerns the noiseless data and the observed noisy data can be of any rank.
\end{rmk}

\subsection{Robustness to noise in Archetypal Analysis}\label{robussec}
We are interested to see if a solution $\hat{\B{H}}$ of (\ref{archl0noise}) is close to $\boldsymbol{H}_0$, the underlying set of archetypes. 
To this end, following \citet{javadi2019nonnegative}, we define a distance between two sets of archetypes $\boldsymbol{H}_1\in\mathbb{R}^{k_1\times n},\boldsymbol{H}_2\in\mathbb{R}^{k_2\times n}$ as
\begin{equation}\label{archdist}
\mathcal{L}(\boldsymbol{H}_1,\boldsymbol{H}_2)=\sum_{i=1}^{k_1}\min_{j\in [k_2]} \|\boldsymbol{H}^1_{i,.}-\boldsymbol{H}^2_{j,.}\|_2^2.
\end{equation}
\citet{javadi2019nonnegative} show that under the so-called \emph{uniqueness} assumption, one has $\mathcal{L}(\boldsymbol{H}_0,\boldsymbol{H}_{\boldsymbol{X}})^{1/2}\leq C\max_{i\in[m]}\|\boldsymbol{Z}_{i,.}\|_2$
for some constant $C>0$ where $\boldsymbol{H}_{\boldsymbol{X}}$ is the solution to the relaxed AA (i.e. (\ref{archl0noise}) with $\ell=nk$). Note that $\mathcal{L}(\boldsymbol{H}_1,\boldsymbol{H}_2)$ in (\ref{archdist}) is a sum of the squared distances between each row of $\B{H}_1$ and the row of $\B{H}_2$ closest to it.
Observe that $\mathcal{L}(\boldsymbol{H}_1,\boldsymbol{H}_2)$ is not symmetric in its arguments. In fact, a small value of $\mathcal{L}(\boldsymbol{H}_1,\boldsymbol{H}_2)$ does not imply that $\mathcal{L}(\boldsymbol{H}_2,\boldsymbol{H}_1)$ is also small (see Section~\ref{strongtoweak} for details). Definition \ref{robdef} below presents a 
 formal definition of weak and strong robustness.

  \begin{dfn}\label{robdef}
({\bf Robustness}) An estimator $\B{H}\in\mathbb{R}^{k\times n}_{\geq 0}$ is said to be:\\
\smallskip
~~(1) ({\it Weak robustness}): Weakly robust if $\mathcal{L}(\boldsymbol{H}_0,\B{H})^{1/2}\leq f_1(\max_{i\in[m]}\|\boldsymbol{Z}_{i,.}\|_2)$  where $f_1$ is an increasing real function that does not depend on $\B{X}$.\\
\smallskip
~~(2) ({\it Strong robustness}): Strongly robust if $\mathcal{L}(\B{H},\boldsymbol{H}_0)^{1/2}\leq f_2(\max_{i\in[m]}\|\boldsymbol{Z}_{i,.}\|_2)$  where $f_2$ is an increasing real function that does not depend on $\B{X}$.
    \end{dfn}

\noindent Note that based on Definition~\ref{robdef}, the result of \citet{javadi2019nonnegative} is an instance of weak robustness with $f_1(x)=Cx$ for $x \geq 0$ and some constant $C>0$.

\subsection{Strong robustness implies weak robustness}\label{strongtoweak}
Here we explain the differences between weak and strong robustness and provide some intuition around the choice of terminology: strong and weak.
\subsubsection{Illustrative Examples}

First, we present an illustrative example below exploring strong and weak robustness.
\begin{exa}\label{robustness-newexample}
Consider the example shown in Figure~\ref{fig:newexample}. In this example, we set $n=2,k=3,m=200$. In Figure~\ref{fig:newexample}(a), we show the underlying archetypes and their convex hull in blue. We also illustrate noisy data points with red dots. In Figure~\ref{fig:newexample}(b), we show the regularized AA solution of~\citet{javadi2019nonnegative} in black; in addition to the underlying archetypes and the noisy data. The regularized AA solution is close to the convex hull of the data, and while it does not completely contain the data, it contains the data with a small error. 
In Figure~\ref{fig:newexample}(c,d,e) we show 3 sets of candidate archetypes which contain the data perfectly (i.e., $D(\B{X},\B{H})=0$). In other words, these sets of candidate archetypes are feasible for problem~\eqref{archl0noise} (with $\ell=kn$, no sparsity). However, as we can see, these solutions are not that close to the underlying true archetypes that generate the noiseless data.
 Comparing Figure~\ref{fig:newexample}(b) to Figure~\ref{fig:newexample}(c,d,e), we see that the archetypal solution (that is closer to the convex hull of the data) is closer to the true model, and therefore more desirable in practice. We show that the solution in Figure~\ref{fig:newexample}(b), indeed, is the most robust solution among the ones presented here. For concreteness, we report strong and weak robustness quantities for each example in Table~\ref{table:new_example}.

\begin{figure}[t!]

     \centering
     \centering
    \begin{minipage}{.22\linewidth}
    \begin{tabular}{c}
         \includegraphics[width=0.98\textwidth, trim = 2cm .4cm 2cm .5cm,  clip = true]{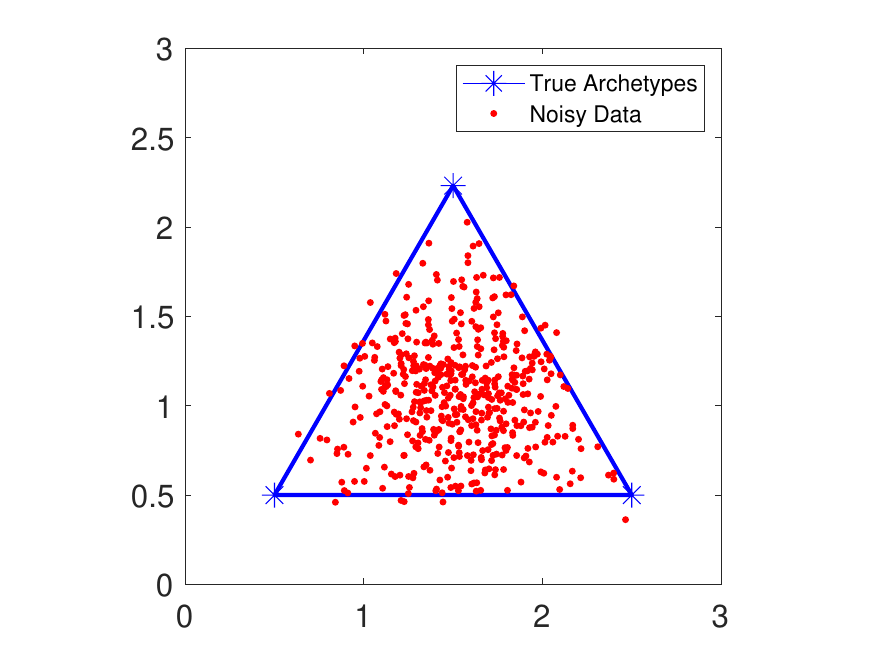} \\ 
         (a)\\
         \includegraphics[width=0.98\textwidth, trim = 2cm .4cm 2cm .5cm,  clip = true]{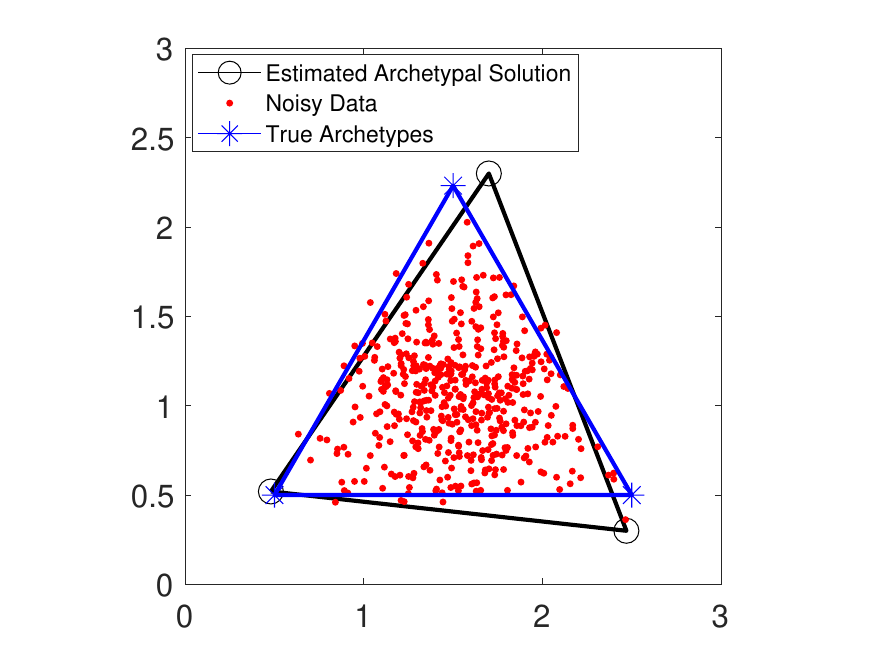}\\
         (b)
         \end{tabular}
    \end{minipage}
    \begin{minipage}{.22\linewidth}
    \begin{tabular}{c}
         \includegraphics[width=0.98\textwidth, trim = 5cm .5cm 5cm .5cm,  clip = true]{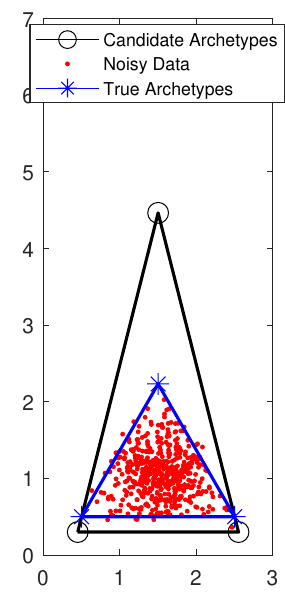}\\
         (c)
         \end{tabular}
    \end{minipage}
        \begin{minipage}{.22\linewidth}
    \begin{tabular}{c}
         \includegraphics[width=0.98\textwidth, trim = 5cm .5cm 5cm .5cm,  clip = true]{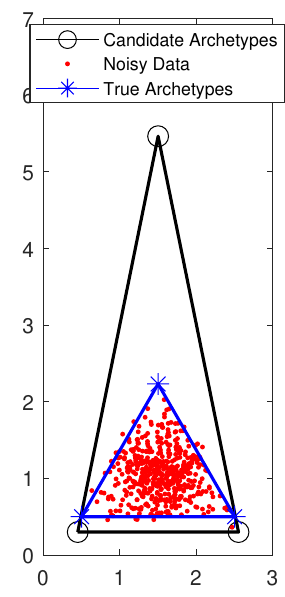}\\
         (d)
         \end{tabular}
    \end{minipage}
            \begin{minipage}{.22\linewidth}
    \begin{tabular}{c}
         \includegraphics[width=0.98\textwidth, trim = 5cm .5cm 5cm .5cm,  clip = true]{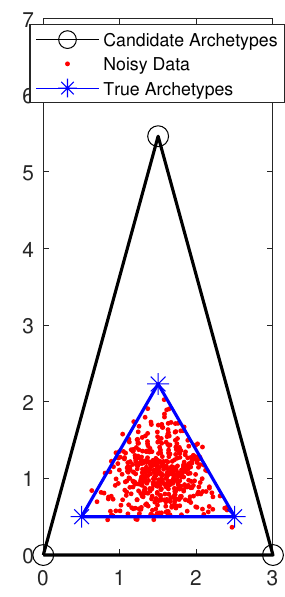}\\
         (e)
         \end{tabular}
    \end{minipage}
     \caption{\small  Solutions for Example~\ref{robustness-newexample}. Panel (a): The true (underlying) archetypes and the noisy data Panel (b): Noisy data and true archetypes, in addition to the AA solution which is strongly and weakly robust and is close to the true model. Panels (c,d,e): Noisy data and true archetypes, in addition to some candidate archetypes that contain the noisy data perfectly (i.e. are feasible for problem~\eqref{archl0noise} with $\ell=kn$). Particularly, the candidate archetypes in Panels (c) and (d) have equal weak robustness error, while the one in Panel (c) has a smaller strong robustness error. This shows that strong robustness can distinguish between sets of candidate solutions in Panels (c) and (d), unlike weak robustness.}    \label{fig:newexample}
\end{figure}

    \begin{table}[h!]
 
        \centering
        \begin{tabular}{|c|c|c|}
        \hline 
             Solution & $\mathcal{L}(\B{H}_0,\B{H})/\|\B{H}_0\|_F^2$    & $\mathcal{L}(\B{H},\B{H}_0)/\|\B{H}_0\|_F^2$ \\
                        &  (Weak)  & (Strong)\\\hline
             $\B{H}_0 (a)$ & 0 & 0 \\
             \hline
             $\B{H}_1 (b)$ & 0.0071 & 0.0071 \\
             \hline
             $\B{H}_2 (c)$ & 0.4022 & 0.4142\\
             \hline
             $\B{H}_3 (d)$ & 0.4022  & 0.8609\\
             \hline
             $\B{H}_4 (e)$ & 0.6730 & 0.9358\\
             \hline
        \end{tabular}
        \caption{\small  Robustness quantities for solutions appearing in Example~\ref{robustness-newexample}. }
        \label{table:new_example}
    \end{table}
    
   The ground truth candidate in Figure~\ref{fig:newexample}(a) has zero robustness error (i.e. is perfectly robust). As we move towards solutions that are further from the true model (and therefore less desirable/interpretable), robustness error quantities ($\mathcal{L}(\B{H}_0,\B{H}),\mathcal{L}(\B{H},\B{H}_0)$) become larger. We can also see the difference between weak and strong robustness here. We see that (c) and (d) have the same weak robustness quantities, while (c) has a lower strong robustness error. In other words, the cases (c) and (d) are the same in terms of weak robustness but differ in terms of strong robustness.
 This occurs because in both (c) and (d), each true archetype is fairly close to an estimated archetype, while in (d), there is an estimated archetype that is far from all true archetypes. Finally, (e) has larger weak and strong robustness errors compared to (c,d). This example, hence, shows how robustness, and specially strong robustness, can be useful in practice by quantifying the closeness to the correct model.
\end{exa}

In Example~\ref{example} we present a concrete instance where the estimator is weakly robust but not strongly robust.
\begin{exa}\label{example}
Let $m=3$, $n=2$ and $k=2$ and
$$\boldsymbol{X}_0=\begin{bmatrix}
0& 1 \\
1 & 0 \\
1/2 & 1/2
\end{bmatrix},~~~~\boldsymbol{H}_0=\begin{bmatrix}
1& 0 \\
0 & 1 
\end{bmatrix}.$$
For $\theta \in (0, \pi/4)$, we let the noisy data ($\B{X}_{\theta}$) and noise matrix ($\B{Z}_{\theta})$ be: 
$$\boldsymbol{X}_{\theta}=\begin{bmatrix}
\sqrt{1-\cos\theta}\cos (\tfrac\pi4-\tfrac\theta2)&1+\sqrt{1-\cos\theta}\sin (\tfrac\pi4-\tfrac\theta2)\\
1-\frac{\sin\theta}{\sqrt{2}\sin(\theta+\tfrac\pi4)}&0\\
\tfrac12 & \tfrac12
\end{bmatrix}~~~~~\text{and}$$
$$\boldsymbol{Z}_{\theta}=\begin{bmatrix}
\sqrt{1-\cos\theta}\cos (\tfrac\pi4-\tfrac\theta2)&\sqrt{1-\cos\theta}\sin (\tfrac\pi4-\tfrac\theta2)\\
-\frac{\sin\theta}{\sqrt{2}\sin(\theta+\tfrac\pi4)}&0\\
0 & 0
\end{bmatrix}.~~~~~\phantom{\text{and}}$$
\begin{figure}[t!]
     \centering
\begin{tabular}{cc}
         \includegraphics[width=0.47\textwidth,trim = 4cm 4cm 4cm 4cm,  clip = true]{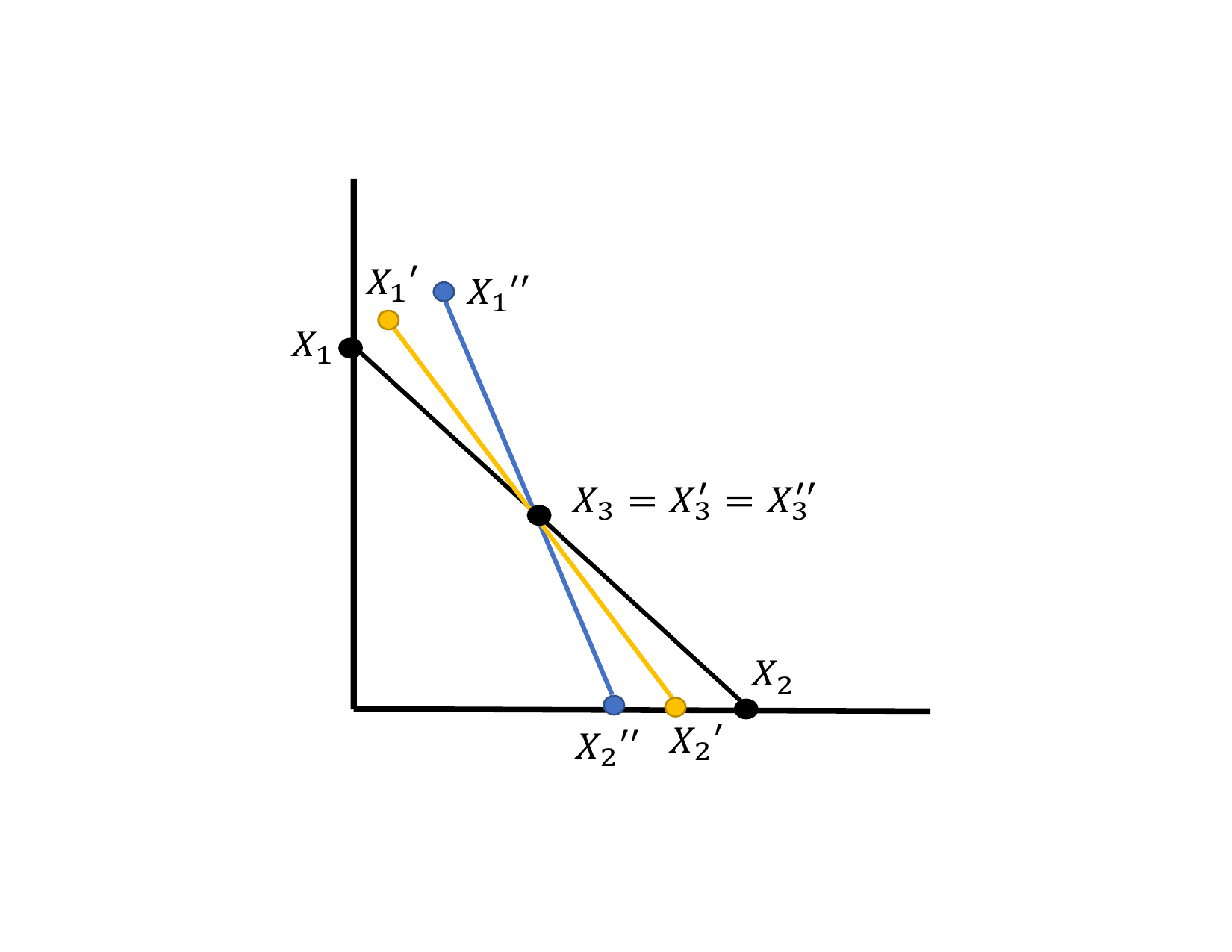}&
          \includegraphics[width=0.47\textwidth,trim = 4cm 4.0cm 4cm 4cm,  clip = true]{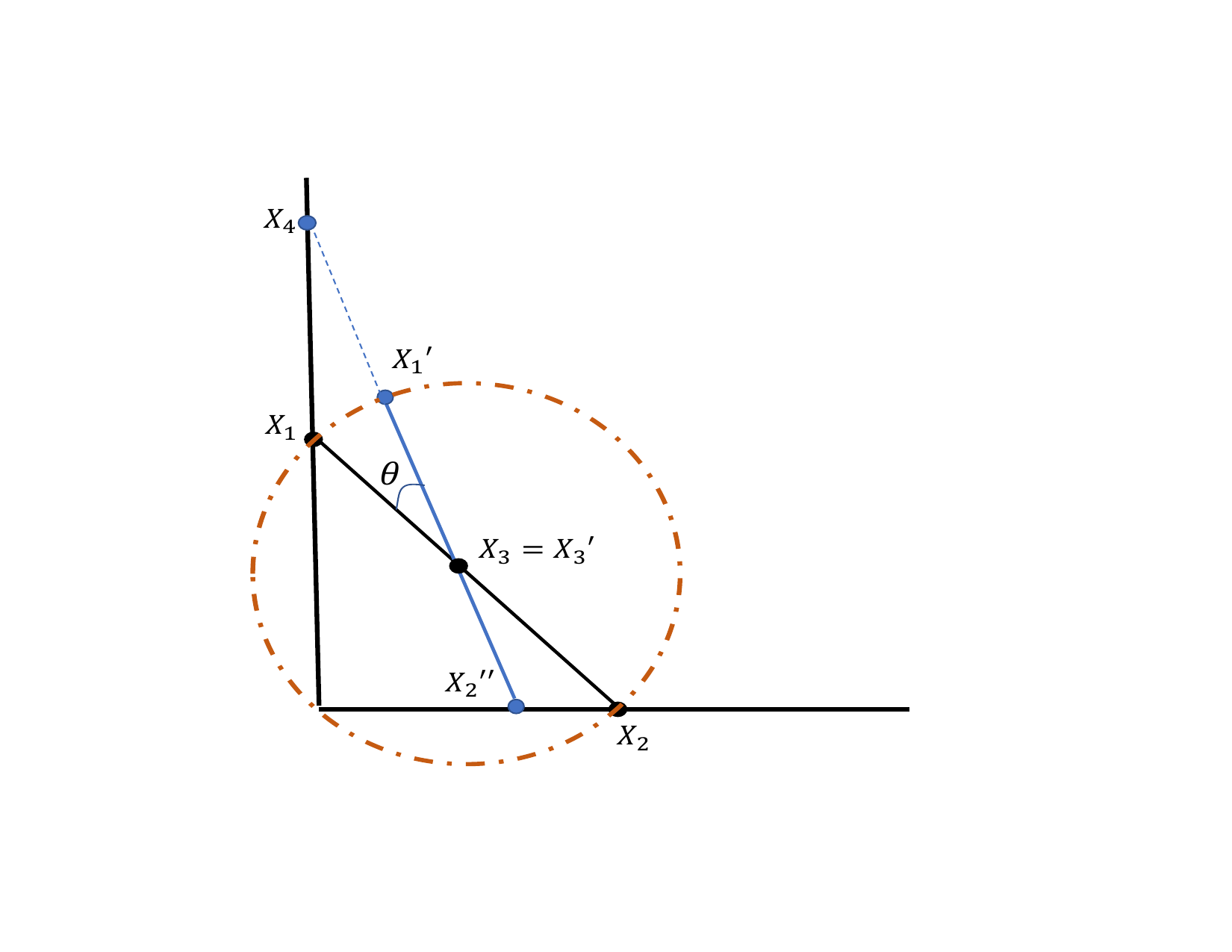} \\
          (a) & (b) 
         \end{tabular}
     \caption{\small Illustration of Example 1: (a) The  noiseless  data  and  two  examples  of noisy data. (b) Details of the example for a specific value of $\theta$. }    \label{fig12}
\end{figure}
Figure~\ref{fig12} presents an illustration of Example~\ref{example}. 
In this figure (panel (a)), $X_1,X_2,X_3$ correspond to the original data points (i.e., the rows of $\B{X}_0$) and points $X_1',X_2',X_3'$ and $X_1'',X_2'',X_3''$ are the noisy data points for two different values of $\theta$. Note that, in all cases, the data points lie on a line. 
The lines $X_1',X_2',X_3'$ and $X_1'',X_2'',X_3''$ are obtained by rotating the noiseless data line $X_1,X_2,X_3$ along its center $(1/2,1/2)$.
Figure~\ref{fig12} (panel (b)) shows line $X_{1}, X_{2}, X_{3}$ and its rotated version $X'_{1}, X_{2}', X_{3}'$ (after being rotated by an angle $\theta$).
Let 
$$\boldsymbol{H}_{\theta}=\begin{bmatrix}
0 &\left[1-\frac{\sin\theta}{\sqrt{2}\sin(\theta+\frac\pi4)}\right]\tan(\theta+\frac\pi4)\\
1-\frac{\sin\theta}{\sqrt{2}\sin(\theta+\frac\pi4)}&0
\end{bmatrix}.$$
Note that $\B{H}^{\theta}_{2,.}$ is the same as the point $\B{X}_{2,.}^{\theta}$. 
The line passing through the noisy data points intersects the y-axis at $\B{H}^{\theta}_{1,.}$. For the line, $X_{1}', X_{2}', X_{3}'$,  the point $\B{H}^{\theta}_{1,.}$ is given by 
$X_4$ in Figure \ref{fig12}, (b). As a result, $\B{X}^{\theta}_{1,.},\B{X}^{\theta}_{2,.},\B{X}^{\theta}_{3,.}$ are on the segment connecting $\B{H}^{\theta}_{1,.}$ and $\B{H}^{\theta}_{2,.}$ and $D(\B{X}_{\theta},\B{H}_{\theta})=0$. In addition, $\max_{i\in[3]}\|\B{Z}_{i,.}^{\theta}\|_2\leq \sqrt{2}$ showing the amount of noise added to the data is limited. Moreover,
\begin{align*}
    \mathcal{L}(\boldsymbol{H}_0,{\boldsymbol{H}}_{\theta})\leq \|\boldsymbol{H}^0_{1,.}-{\boldsymbol{H}}_{2,.}^{\theta}\|_2^2+\|\boldsymbol{H}^0_{2,.}-{\boldsymbol{H}}_{2,.}^{\theta}\|_2^2 \leq  4
\end{align*}
for all $\theta \in (0, \pi/4)$, showing $\B{H}_{\theta}$ is weakly robust. However, note that
\begin{align*}
    \mathcal{L}({\boldsymbol{H}}_{\theta},\boldsymbol{H}_0)& \geq \min_{i\in[2]}\|{\boldsymbol{H}}^{\theta}_{1,.}-\boldsymbol{H}^0_{i,.}\|_2^2 = \|{\boldsymbol{H}}^{\theta}_{1,.}-\boldsymbol{H}^0_{2,.}\|_2^2 \\&= \left(\left[1-\frac{\sin\theta}{\sqrt{2}\sin(\theta+\pi/4)}\right]\tan(\theta+\pi/4)-1\right)^2 
\end{align*}
and $\mathcal{L}({\boldsymbol{H}}_{\theta},\boldsymbol{H}_0)\to\infty$ as $\theta\to\pi/4$, showing $\B{H}_{\theta}$ is not strongly robust.
\end{exa}
\noindent $\B{H}_{\theta}$ is not a strongly robust estimator in Example \ref{example} as the first row of $\B{H}_{\theta}$ can be far from $\B{H}_0$, the set of underlying archetypes. 
Weak robustness implies that among recovered archetypes, there is at least one of them which is close to the correct archetypes (for example, the second row of $\B{H}_{\theta}$ in Example \ref{example} is close to the first row of $\B{H}_0$). On the other hand, strong robustness implies that all recovered archetypes are close to some underlying archetype.
This is true because considering the definition of $\mathcal{L}(\boldsymbol{H}_1,\boldsymbol{H}_2)$ in (\ref{archdist}), strong robustness limits the distance between each recovered archetype and the set of correct archetypes. However, weak robustness limits the distance between each true archetype and the set of recovered archetypes, therefore, this distance can be small even if some recovered archetypes are far.  
\subsubsection{Theoretical Results}
Theorem \ref{weakstrongthm} below states that strong robustness implies weak robustness (and in light of Example~\ref{example}, this containment is strict).
\begin{thm}[Strong robustness implies weak robustness]\label{weakstrongthm}
Let us define the quantity
\begin{equation}\label{bh}
    b(\B{H}_0)=\max_{i,j\in [k]}\|\boldsymbol{H}^0_{i,.}-\boldsymbol{H}^0_{j,.}\|_2.
\end{equation}
Then for any $\B{H}\in\mathbb{R}_{\geq 0}^{k\times n}$ we have
\begin{equation}\label{thm1-bound-1}
\mathcal{L}(\boldsymbol{H}_0,\boldsymbol{H})\leq 2kb(\B{H}_0)^2+2\mathcal{L}(\boldsymbol{H},\boldsymbol{H}_0).
\end{equation}
\end{thm}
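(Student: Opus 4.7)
The plan is to construct a bijection $\sigma:[k]\to[k]$ from rows of $\B{H}_0$ to rows of $\B{H}$, and then bound $\mathcal{L}(\B{H}_0,\B{H})\le \sum_{i=1}^k\|\B{H}^0_{i,\cdot}-\B{H}_{\sigma(i),\cdot}\|_2^2$ term-by-term. Naturally, $\sigma$ should come from inverting the nearest-neighbor map $\pi:[k]\to[k]$ defined by $\pi(j)=\argmin_{i\in[k]}\|\B{H}_{j,\cdot}-\B{H}^0_{i,\cdot}\|_2$, because then for each $j$ we have $\|\B{H}^0_{\pi(j),\cdot}-\B{H}_{j,\cdot}\|_2^2=\min_{i'}\|\B{H}_{j,\cdot}-\B{H}^0_{i',\cdot}\|_2^2$, which sums to $\mathcal{L}(\B{H},\B{H}_0)$ exactly.

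The main obstacle is that $\pi$ need not be a bijection: some rows of $\B{H}_0$ may never be the nearest neighbor of any $\B{H}_{j,\cdot}$, while other rows of $\B{H}_0$ can attract several $j$'s. To handle this, I would let $A=\mathrm{Im}(\pi)$ and define $\sigma$ as follows. For each $i\in A$, choose one $\sigma(i)\in\pi^{-1}(i)$ arbitrarily so that $\sigma|_A$ is injective. For $i\in B:=[k]\setminus A$, extend $\sigma$ to a bijection $[k]\to[k]$ by matching the $|B|$ remaining indices in $[k]$ to the $|B|$ leftover elements of $[k]$ in any way. Because $\pi^{-1}(A)=[k]$ and $|\sigma(A)|=|A|$, exactly $k-|A|=|B|$ indices in the codomain are unused, so the extension is well-defined.

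For $i\in A$, by construction, $\|\B{H}^0_{i,\cdot}-\B{H}_{\sigma(i),\cdot}\|_2^2=\min_{i'}\|\B{H}_{\sigma(i),\cdot}-\B{H}^0_{i',\cdot}\|_2^2$. For $i\in B$, I would use the triangle inequality followed by $(a+b)^2\le 2a^2+2b^2$ together with the diameter bound: writing $\pi(\sigma(i))$ for the nearest true archetype to $\B{H}_{\sigma(i),\cdot}$,
\begin{equation*}
\|\B{H}^0_{i,\cdot}-\B{H}_{\sigma(i),\cdot}\|_2^2\le 2\|\B{H}^0_{i,\cdot}-\B{H}^0_{\pi(\sigma(i)),\cdot}\|_2^2+2\|\B{H}^0_{\pi(\sigma(i)),\cdot}-\B{H}_{\sigma(i),\cdot}\|_2^2\le 2b(\B{H}_0)^2+2\min_{i'}\|\B{H}_{\sigma(i),\cdot}-\B{H}^0_{i',\cdot}\|_2^2.
\end{equation*}

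Summing over $i\in[k]$, the diameter contributions total at most $2|B|b(\B{H}_0)^2\le 2kb(\B{H}_0)^2$, and because $\sigma$ is a bijection the sum of $\min_{i'}\|\B{H}_{\sigma(i),\cdot}-\B{H}^0_{i',\cdot}\|_2^2$ over $i$ reindexes to $\sum_{j=1}^k\min_{i'}\|\B{H}_{j,\cdot}-\B{H}^0_{i',\cdot}\|_2^2=\mathcal{L}(\B{H},\B{H}_0)$. Keeping a factor of $2$ (the worst-case from the $B$-terms) on that sum gives the claimed bound $\mathcal{L}(\B{H}_0,\B{H})\le 2kb(\B{H}_0)^2+2\mathcal{L}(\B{H},\B{H}_0)$. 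The argument only uses the definition of $\mathcal{L}$, the triangle inequality, and the pigeonhole accounting above, so no further assumptions on $\B{H}$ or $\B{H}_0$ are needed.
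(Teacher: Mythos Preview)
Your proof is correct. It follows the same skeleton as the paper's proof---bound $\mathcal{L}(\B{H}_0,\B{H})$ by $\sum_i\|\B{H}^0_{i,\cdot}-\B{H}_{\sigma(i),\cdot}\|_2^2$ for some bijection $\sigma$, insert the nearest true archetype $\B{H}^0_{\pi(\sigma(i)),\cdot}$ via the triangle inequality, and use $(a+b)^2\le 2a^2+2b^2$---but differs in the choice of $\sigma$. The paper simply takes $\sigma=\mathrm{id}$, i.e.\ it bounds $\min_j\|\B{H}^0_{i,\cdot}-\B{H}_{j,\cdot}\|_2^2\le\|\B{H}^0_{i,\cdot}-\B{H}_{i,\cdot}\|_2^2$ and then splits through $\B{H}^0_{j_i,\cdot}$ where $j_i=\pi(i)$; no partial inversion of $\pi$ or pigeonhole bookkeeping is needed. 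Your more elaborate construction actually yields a slightly sharper intermediate inequality (only $2|B|b(\B{H}_0)^2$ in the diameter term, and no factor $2$ on the $A$-part of the $\mathcal{L}$-sum) before you loosen it to match the stated bound, so the extra work is not wasted---it just is not required for the theorem as written.
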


If $\B{H}$ is a strongly robust estimator, $\mathcal{L}(\boldsymbol{H},\boldsymbol{H}_0)$ is bounded and as $b(\B{H}_0)$ depends on the underlying archetypes, $b(\B{H}_0)$ is finite. Therefore, the right hand side in~\eqref{thm1-bound-1} is bounded and $\mathcal{L}(\boldsymbol{H}_0,\boldsymbol{H})$ is bounded, implying $\B{H}$ is also weakly robust.

\begin{rmk} Interestingly, Theorem~\ref{weakstrongthm} does not tell us that strong robustness can be obtained from weak robustness. Note that we can interchange $\B{H}$ and $\B{H}_0$ in Theorem~\ref{weakstrongthm}, which results in the following inequality:
\begin{equation}\label{vacuous}
    \mathcal{L}(\boldsymbol{H},\boldsymbol{H}_0)\leq 2kb(\B{H})^2+2\mathcal{L}(\boldsymbol{H}_0,\boldsymbol{H})
\end{equation}
where 
$$b(\B{H})=\max_{i,j\in [k]}\|\boldsymbol{H}_{i,.}-\boldsymbol{H}_{j,.}\|_2.$$
However, $b(\B H)$ can be arbitrarily large---from Example \ref{example}, we have:
$$b(\B{H}_\theta)=\left(\left[1-\frac{\sin\theta}{\sqrt{2}\sin(\theta+\tfrac\pi4)}\right]^2\tan^2(\theta+\tfrac\pi4) + \left[1-\frac{\sin\theta}{\sqrt{2}\sin(\theta+\tfrac\pi4)}\right]^2\right)^{\tfrac12}$$
and $b(\B{H}_\theta)\to\infty$ as $\theta \rightarrow \pi/4$, making the upper bound in~\eqref{vacuous} uninformative. 
\end{rmk}

\noindent Theorem \ref{noiseremove} shows that a weak/strong robust estimator $\B{H}$ serves as a good approximation to the noiseless data matrix $\B{X}_0$. 
\begin{thm}\label{noiseremove}
For any $\boldsymbol{H}\in\mathbb{R}_{\geq 0}^{k\times n}$ we have
\begin{equation}\label{noiseremoveeq}
   D(\boldsymbol{X}_0,\boldsymbol{H})^{1/2}\leq \sqrt{m}\min\left\{\mathcal{L}(\boldsymbol{H}_0,\boldsymbol{H})^{1/2}, k\|\boldsymbol{H}_0\|_F+ \mathcal{L}(\boldsymbol{H},\boldsymbol{H}_0)^{1/2}\right\}.
\end{equation}
\end{thm}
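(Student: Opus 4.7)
The plan is to prove the two upper bounds inside the minimum separately; the theorem then follows by taking the smaller one. Throughout I would use the model setup from Section \ref{modelsetting}, namely $\boldsymbol{X}_0=\boldsymbol{W}_0\boldsymbol{H}_0$ with $\boldsymbol{W}_0$ nonnegative and row-stochastic, so that each $\boldsymbol{X}^0_{i,.}=\sum_j (\boldsymbol{W}_0)_{i,j}\boldsymbol{H}^0_{j,.}$ is a convex combination of the archetypes and each entry $(\boldsymbol{W}_0)_{i,j}\in[0,1]$.

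For the first branch, $D(\boldsymbol{X}_0,\boldsymbol{H})^{1/2}\leq \sqrt{m}\,\mathcal{L}(\boldsymbol{H}_0,\boldsymbol{H})^{1/2}$, I would match each underlying archetype to its nearest row of $\boldsymbol{H}$: pick $\sigma(j)\in\argmin_{\ell}\|\boldsymbol{H}^0_{j,.}-\boldsymbol{H}_{\ell,.}\|_2$ and form the candidate $v_i=\sum_j (\boldsymbol{W}_0)_{i,j}\boldsymbol{H}_{\sigma(j),.}\in\text{Conv}(\boldsymbol{H})$. Since $\boldsymbol{X}^0_{i,.}-v_i=\sum_j (\boldsymbol{W}_0)_{i,j}(\boldsymbol{H}^0_{j,.}-\boldsymbol{H}_{\sigma(j),.})$ is itself a convex combination, Jensen's inequality applied to $\|\cdot\|_2^2$ gives $D(\boldsymbol{X}^0_{i,.},\boldsymbol{H})\leq \sum_j (\boldsymbol{W}_0)_{i,j}\min_\ell\|\boldsymbol{H}^0_{j,.}-\boldsymbol{H}_{\ell,.}\|_2^2$. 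Summing in $i$ and bounding $\sum_i (\boldsymbol{W}_0)_{i,j}\leq m$ yields $D(\boldsymbol{X}_0,\boldsymbol{H})\leq m\,\mathcal{L}(\boldsymbol{H}_0,\boldsymbol{H})$.

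The second branch is where the main obstacle lies. The symmetric construction---matching each row of $\boldsymbol{H}$ back to its closest $\boldsymbol{H}^0_{j,.}$---fails because the nearest-row map need not be surjective onto $[k]$, so some archetypes $\boldsymbol{H}^0_{j,.}$ are left with no nearby substitute to plug into the convex combination for $\boldsymbol{X}^0_{i,.}$. My workaround is to use a single fixed point $v_i=\boldsymbol{H}_{1,.}\in\text{Conv}(\boldsymbol{H})$; then $\boldsymbol{X}^0_{i,.}-\boldsymbol{H}_{1,.}=\sum_j (\boldsymbol{W}_0)_{i,j}(\boldsymbol{H}^0_{j,.}-\boldsymbol{H}_{1,.})$ and the convex-combination triangle inequality yields $\|\boldsymbol{X}^0_{i,.}-\boldsymbol{H}_{1,.}\|_2\leq \max_j \|\boldsymbol{H}^0_{j,.}-\boldsymbol{H}_{1,.}\|_2$. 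I would then control each term by inserting the intermediate point $\boldsymbol{H}^0_{\pi(1),.}$, where $\pi(1)\in\argmin_{j'}\|\boldsymbol{H}_{1,.}-\boldsymbol{H}^0_{j',.}\|_2$, so that two triangle inequalities give $\|\boldsymbol{H}^0_{j,.}-\boldsymbol{H}_{1,.}\|_2\leq \|\boldsymbol{H}^0_{j,.}\|_2+\|\boldsymbol{H}^0_{\pi(1),.}\|_2+\|\boldsymbol{H}_{1,.}-\boldsymbol{H}^0_{\pi(1),.}\|_2\leq 2\|\boldsymbol{H}_0\|_F+\mathcal{L}(\boldsymbol{H},\boldsymbol{H}_0)^{1/2}$, using $\|\boldsymbol{H}^0_{j,.}\|_2\leq \|\boldsymbol{H}_0\|_F$ and the fact that $\|\boldsymbol{H}_{1,.}-\boldsymbol{H}^0_{\pi(1),.}\|_2^2$ is a single summand of $\mathcal{L}(\boldsymbol{H},\boldsymbol{H}_0)$. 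For $k\geq 2$ this is bounded by $k\|\boldsymbol{H}_0\|_F+\mathcal{L}(\boldsymbol{H},\boldsymbol{H}_0)^{1/2}$; the degenerate $k=1$ case is immediate since then $\boldsymbol{X}^0_{i,.}=\boldsymbol{H}^0_{1,.}$ and $\|\boldsymbol{X}^0_{i,.}-\boldsymbol{H}_{1,.}\|_2=\mathcal{L}(\boldsymbol{H},\boldsymbol{H}_0)^{1/2}$. Squaring and summing over $i\in[m]$ then produces the $\sqrt{m}$ prefactor and finishes the second branch.
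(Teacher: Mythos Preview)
Your proof is correct. The first branch is essentially the paper's argument written rowwise: the paper encodes your map $\sigma$ as a $0/1$ matrix $\boldsymbol{U}_2$ with $\mathcal{L}(\boldsymbol{H}_0,\boldsymbol{H})=\|\boldsymbol{H}_0-\boldsymbol{U}_2\boldsymbol{H}\|_F^2$, takes $\boldsymbol{W}_0\boldsymbol{U}_2$ as the feasible weight matrix, and bounds $\|\boldsymbol{W}_0(\boldsymbol{H}_0-\boldsymbol{U}_2\boldsymbol{H})\|_F$ by $\sqrt{m}\,\mathcal{L}(\boldsymbol{H}_0,\boldsymbol{H})^{1/2}$ using $\|\boldsymbol{W}_0\|_F\le\sqrt{m}$.

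The second branch genuinely differs. The paper does \emph{not} move to a single anchor point; instead it uses $\boldsymbol{W}_0$ itself as the weight matrix for $\boldsymbol{H}$ and inserts the matching matrix $\boldsymbol{U}_1$ (defined by $\mathcal{L}(\boldsymbol{H},\boldsymbol{H}_0)=\|\boldsymbol{H}-\boldsymbol{U}_1\boldsymbol{H}_0\|_F^2$) to write
\[
\boldsymbol{X}_0-\boldsymbol{W}_0\boldsymbol{H}=\boldsymbol{W}_0(\mathbb{I}_k-\boldsymbol{U}_1)\boldsymbol{H}_0-\boldsymbol{W}_0(\boldsymbol{H}-\boldsymbol{U}_1\boldsymbol{H}_0),
\]
and then bounds the two pieces by $\sqrt{m}\,k\|\boldsymbol{H}_0\|_F$ (via $\|\mathbb{I}_k-\boldsymbol{U}_1\|_F\le k$) and $\sqrt{m}\,\mathcal{L}(\boldsymbol{H},\boldsymbol{H}_0)^{1/2}$ respectively. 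So the ``symmetric construction'' you thought would fail is exactly what the paper makes work: non-surjectivity of the nearest-row map is absorbed into the $\|\mathbb{I}_k-\boldsymbol{U}_1\|_F$ term rather than causing a problem. Your single-anchor argument is more elementary and, incidentally, yields the sharper constant $2\|\boldsymbol{H}_0\|_F$ (for $k\ge 2$) before you relax to $k\|\boldsymbol{H}_0\|_F$; the paper's decomposition is cleaner at the matrix level and treats both branches in a unified $\boldsymbol{U}$-matrix framework.
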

\noindent If the right hand side in (\ref{noiseremoveeq}) is small, which means $\boldsymbol{H}$ is either a weakly or strongly robust estimator, then the left hand side in (\ref{noiseremoveeq}) is small. This shows that the noiseless data is close to the convex hull of $\B{H}$---so rows of $\B{H}$ are good representatives of the noiseless data. See Figure \ref{fig:thm} for an illustrative example.

\begin{figure}[t]
         \centering
         \includegraphics[width=.4\textwidth,trim = 0cm .5cm 1cm 1cm,  clip = true]{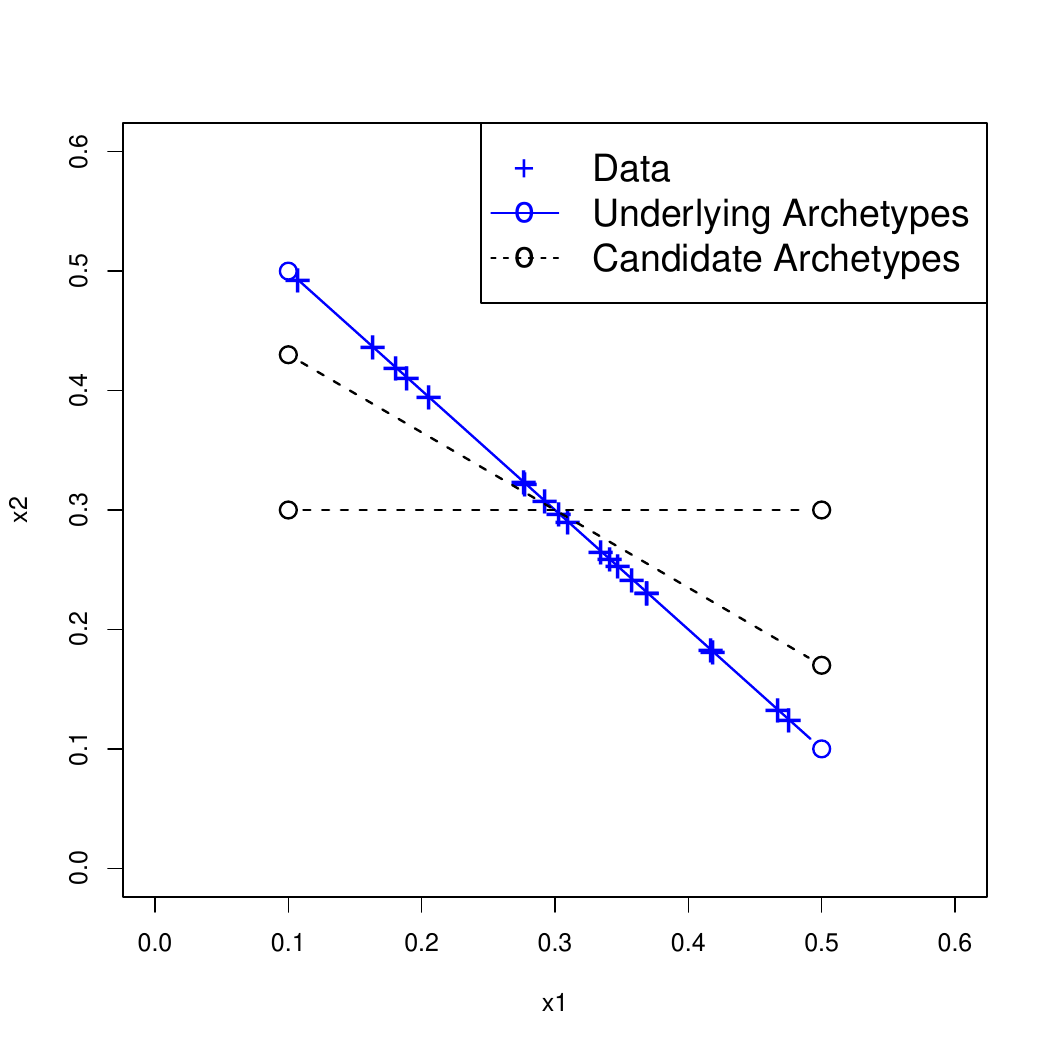}
        \caption{\small In this figure, blue crosses represent the data points in $\mathbb{R}^2$ and blue circles and line represent $\boldsymbol{H}_0$ and its convex hull containing the data. Black circles and lines represent two candidate set of archetypes and their convex hulls. Note that the set that is closer to $\boldsymbol{H}_0$ describes the data better as anticipated by Theorem \ref{noiseremove}.  }
        \label{fig:thm}
\end{figure}

\section{Archetypal Analysis with Sparsity Constraints}\label{SAA}
In this section, our primary goal is to show that under suitable conditions, both weak and strong robustness hold for the Sparse Archetypal Analysis (SAA) estimator given by (\ref{archl0noise}).

\begin{thm}\label{robustnessthm}
Let $\B{X}_0,\B{H}_0$ be as described in Section \ref{modelsetting} and $\hat{\B{H}}$ be a solution of problem \eqref{archl0noise}. Set $\alpha=\delta+\beta$ where $\delta= \max_{i\in[m]}\|\boldsymbol{Z}_{i,.}\|_2$ and $\beta = \|P_{\ell}^{\perp}(\boldsymbol{H}_0)\|_F$. Moreover, let $\B{X}=\B{X}_0+\B{Z}$ and $\tilde{\boldsymbol{X}}_0\in\mathbb{R}^{k\times n}$ be such that
$$\tilde{\boldsymbol{X}}^0_{i,.}=\argmin_{\boldsymbol{u}\in\{\boldsymbol{X}^0_{j,.}:j\in[m]\}} \|\boldsymbol{u}-\boldsymbol{H}^0_{i,.}\|_2.$$
There exist constants\footnote{To aid readability, the precise expressions for these constants are presented in Section~\ref{app:sec:proofthm3}.} $c_1,\ldots,c_{10}$ depending on $k,\kappa(\boldsymbol{H}_0),\sigma_{\min}(\boldsymbol{H}_0)$ such that the following bounds hold:\\
1. (Weak Robustness) 
\begin{equation}\label{weakrob}
    \mathcal{L}(\boldsymbol{H}_0,\hat{\boldsymbol{H}})^{1/2} \leq c_1 D(\boldsymbol{H}_0,\tilde{\boldsymbol{X}}_0)^{1/2} \\
     + c_2 \max_{i\in[m]}\|\boldsymbol{Z}_{i,.}\|_2  + c_3\|P^{\perp}_{\ell}(\boldsymbol{H}_0)\|_F 
\end{equation}
2. (Strong Robustness) If
\begin{equation}\label{condstr}
    c_4 D(\boldsymbol{H}_0,\tilde{\boldsymbol{X}}_0)^{1/2} + c_5 \max_{i\in[m]}\|\boldsymbol{Z}_{i,.}\|_2      \\+ c_6\|P^{\perp}_{\ell}(\boldsymbol{H}_0)\|_F\leq c_7,
\end{equation}
we have the following strong robustness guarantee
\begin{equation}\label{strrob}
    \mathcal{L}(\hat{\boldsymbol{H}},\boldsymbol{H}_0)^{1/2}\leq c_8 D(\boldsymbol{H}_0,\tilde{\boldsymbol{X}}_0)^{1/2} \\+ c_9 \max_{i\in[m]}\|\boldsymbol{Z}_{i,.}\|_2 + c_{10}\|P^{\perp}_{\ell}(\boldsymbol{H}_0)\|_F.
\end{equation}
\end{thm}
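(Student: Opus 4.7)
The plan is to argue in three stages: first, exhibit a sparse feasible competitor to $\hat{\B{H}}$ whose objective value is controlled; second, convert optimality and the feasibility constraint into a pair of complementary matrix approximations linking $\hat{\B{H}}$ and $\B{H}_0$; third, leverage the rank-$k$ structure of $\B{H}_0$ and the approximate-separability information encoded in $D(\B{H}_0,\tilde{\B{X}}_0)$ to extract row-wise archetype proximity in both directions.

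For the first stage, the natural candidate is $\tilde{\B{H}} := P_\ell(\B{H}_0)$. Writing $\B{X}_0 = \B{W}_0 \tilde{\B{H}} + \B{W}_0 P_\ell^\perp(\B{H}_0)$ and using that rows of $\B{W}_0$ lie in the probability simplex, each row of $\B{X}_0$ lies within distance $\|P_\ell^\perp(\B{H}_0)\|_F$ of $\text{Conv}(\tilde{\B{H}})$. Adding the noise budget $\delta$ (and absorbing a $\sqrt{m}$ factor when aggregating over rows) shows $\tilde{\B{H}}$ is feasible for (\ref{archl0noise}) with the prescribed $\alpha=\delta+\beta$, and a parallel computation bounds $D(\tilde{\B{H}},\B{X})^{1/2}$ in terms of $D(\B{H}_0,\tilde{\B{X}}_0)^{1/2}$, $\delta$, and $\|P_\ell^\perp(\B{H}_0)\|_F$---precisely the three error terms appearing in (\ref{weakrob}) and (\ref{strrob}).

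For the second stage, optimality of $\hat{\B{H}}$ gives $D(\hat{\B{H}},\B{X})\leq D(\tilde{\B{H}},\B{X})$ and the feasibility constraint gives $D(\B{X}_{i,.},\hat{\B{H}})\leq \alpha^2$ for each $i\in[m]$. After subtracting noise via $\max_i\|\B{Z}_{i,.}\|_2\leq \delta$, these translate into the existence of row-stochastic matrices $\B{W}'\in\R^{m\times k}_{\geq 0}$ and $\B{W}''\in\R^{k\times m}_{\geq 0}$ with $\B{X}_0\approx \B{W}'\hat{\B{H}}$ and $\hat{\B{H}}\approx \B{W}''\B{X}_0$, with quantitative errors. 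Substituting $\B{X}_0=\B{W}_0\B{H}_0$, setting $\tilde Q := \B{W}''\B{W}_0$ (which is again row-stochastic), and invoking the rank-$k$ property of $\B{H}_0$ to cancel it on the right via a right pseudoinverse yields the two coupled approximations $\hat{\B{H}}\approx \tilde Q\B{H}_0$ and $\B{W}_0\approx \B{W}'\tilde Q$. The $\sigma_{\min}(\B{H}_0)^{-1}$ factor introduced by the pseudo-inversion explains the dependence of the constants on $\sigma_{\min}(\B{H}_0)$ and $\kappa(\B{H}_0)$.

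The third stage is structural and is where I expect the main obstacle to lie. The key observation is that $D(\B{H}_0,\tilde{\B{X}}_0)$ being small forces $\B{W}_0$ to approximately contain a $k\times k$ identity block on the rows indexed by $\tilde{\B{X}}_0$, so the identity $\B{W}_0\approx \B{W}'\tilde Q$ restricts to $I_k \approx (\B{W}')_{\tilde{\B{X}}_0}\tilde Q$ on those rows, pinning $\tilde Q$ close to an invertible row-stochastic matrix whose (approximate) inverse is also row-stochastic---a classical setting forcing $\tilde Q$ close to a permutation. For weak robustness, combining $\hat{\B{H}}\approx \tilde Q\B{H}_0$ with the near-permutation structure of $\tilde Q$ shows each row of $\B{H}_0$ is within the claimed error of some row of $\hat{\B{H}}$, establishing (\ref{weakrob}). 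Strong robustness requires the converse direction and is more delicate: without extra control, a row of $\hat{\B{H}}$ could in principle drift far from every $\B{H}^0_{j,.}$ while the remaining rows still cover the data. Condition (\ref{condstr}) is used as a bootstrap: when the cumulative perturbation falls below the threshold $c_7$ (tuned to $\sigma_{\min}(\B{H}_0)$ and $\kappa(\B{H}_0)$), the near-permutation approximation of $\tilde Q$ can be inverted, ruling out such degenerate rows and yielding (\ref{strrob}). Making this stochastic-matrix-to-permutation reduction quantitatively tight with the claimed constants is the crux of the proof.
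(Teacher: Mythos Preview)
Your Stage~1 (feasibility of $P_\ell(\B{H}_0)$) matches the paper exactly, but from Stage~2 onward the two arguments diverge. You work with explicit row-stochastic matrices, form $\tilde Q=\B{W}''\B{W}_0$, and argue $\tilde Q$ is close to a permutation via the near-identity block in $\B{W}_0$ forced by approximate separability. The paper never materializes $\tilde Q$; it stays at the level of the convex-hull distances and proves directly that both $D(\hat{\B{H}},\B{H}_0)^{1/2}$ and $D(\B{H}_0,\hat{\B{H}})^{1/2}$ are bounded by linear combinations of the three error terms (using optimality, feasibility, and triangle-type inequalities for $\tilde D$). It then invokes a black-box lemma of \citet{javadi2019nonnegative},
\[
\mathcal{L}(\B{A},\B{B})^{1/2}\leq 2\kappa(\B{A})D(\B{A},\B{B})^{1/2}+(1+\sqrt{2})\sqrt{k}\,D(\B{B},\B{A})^{1/2},
\]
with $\B{A}=\B{H}_0$ for weak robustness, and with $\B{A}=\hat{\B{H}}$ for strong robustness. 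Condition~\eqref{condstr} enters only to certify, via a second Javadi--Montanari perturbation lemma, that $\kappa(\hat{\B{H}})\leq \tfrac{7}{2}\kappa(\B{H}_0)$, which is all that is needed to apply the display with $\B{A}=\hat{\B{H}}$. Your ``stochastic-matrix-to-permutation'' reduction is essentially a from-scratch reproof of these two cited lemmas; it can be pushed through, but it is considerably more work and its quantitative control (e.g.\ going from $\hat{\B{H}}\approx\tilde Q\B{H}_0$ with $\tilde Q$ near-permutation to a bound on $\mathcal{L}(\B{H}_0,\hat{\B{H}})$ with only the $\kappa(\B{H}_0)$ dependence) is exactly where the Javadi--Montanari lemma does the heavy lifting. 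The paper's route is shorter and yields explicit constants immediately; your route is more self-contained and makes the role of the near-permutation structure more transparent.
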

\noindent In Theorem \ref{robustnessthm}, $\delta$ controls the amount of additive noise $\B{Z}$; and $\beta = \|P_{\ell}^{\perp}(\boldsymbol{H}_0)\|_F$ captures the sparsity level of $\B{H}_0$. If the amount of noise in the data is high (i.e. $\delta$ is large) and/or $\B{H}_0$ is not sparse (i.e. $\beta$ is large), the value of $\alpha$ in problem \eqref{archl0noise} is larger. This implies that $D(\boldsymbol{X}_{i,.},\hat{\boldsymbol{H}})$ for $i\in[m]$ can be potentially larger because of the constraint $D(\B{X}_{i,.},\B{H})^{1/2}\leq \alpha$ in problem \eqref{archl0noise} becomes more loose---$\hat{\B{H}}$ might not represent the data points well. However, this is the price we pay to guarantee robustness.

\noindent If $\B{H}_0$ is not $\ell$-sparse, or equivalently $\|P^{\perp}_{\ell}(\boldsymbol{H}_0)\|_F>0$, problem \eqref{archl0noise}
obtains a sparse estimator $\hat{\B{H}}$, which approximates ${\B{H}_0}$.
This is an example of model misspecification; and even in this case, 
Problem~\eqref{archl0noise}  leads to an estimator that is weakly and strongly robust. 
In Theorem~\ref{robustnessthm}, the quantity $D(\boldsymbol{H}_0,\tilde{\boldsymbol{X}}_0)$ determines how close the underlying model is to the noiseless data; and it depends upon $\B{H}_0$ and $\B{X}_0$. Choosing $\B{H}_0$ as in \eqref{h0opt} results in a smaller value of $D(\boldsymbol{H}_0,\tilde{\boldsymbol{X}}_0)$ and improves our bounds. The constant $D(\boldsymbol{H}_0,\tilde{\boldsymbol{X}}_0)$ can be zero which is a generalization of the separable case \citep{NIPS2003_2463}, where the underlying archetypes are assumed to be among noiseless data points. In addition, condition (\ref{condstr}) ensures that the noise in the data is not too large and the underlying archetypes are suitably sparse --- this suffices to derive a strong robustness guarantee for $\hat{\B{H}}$.

Another interesting point is that the bounds in Theorem~\ref{robustnessthm} do not directly depend on the dimension of the data, $n$. This shows our results can be applied to the high-dimensional settings where $n\gg m$. In addition, the bounds in Theorem~\ref{robustnessthm} do not depend on the number of samples, $m$. This might seem counter-intuitive as increasing the number of samples and adding new observations does not necessarily result in a smaller robustness bound. We believe this is due to the nature of our worst-case analysis. 
In fact, in an extreme case, one can increase the number of samples by recycling old samples, and this does not add any `new information' to the problem. 
With more samples, however, we obtain a better representation of the (underlying) data. This is captured by the average error in the representation of the noiseless data, defined as
$$\frac{1}{m}D({\B{X}_0,\B{H}})^{1/2}.$$
Note that by Theorem~\ref{noiseremove} and Theorem~\ref{robustnessthm}, increasing $m$, while keeping other parameters constant, results in $D({\B{X}_0,\B{H}})^{1/2}/m\to 0$, showing the resulting archetypes can represent the original data well on average. We would also like to remind the reader that our analysis makes no distributional assumption on the data and is valid under minimal assumptions, making our result applicable to quite general situations.

It is also worth noting that parameters appearing in the bounds in Theorem~\ref{robustnessthm} depend on the underlying model and may be unknown in practice. Therefore, numerically computing these upper bounds on real data instances will be tricky. Nonetheless, our bounds here show the dependence of robustness on model parameters such as $m,n,k$. This can provide guidance on how the model performance degrades/improves if, for example, new data points are added or the rank of the factorization is changed. See Section~\ref{synthnumundrob} for a numerical exploration of this idea. 

\noindent Finally, we outline how our proof techniques differ from the ones by~\citet{javadi2019nonnegative}. The proof of robustness in \citet{javadi2019nonnegative} critically depends on constructing a feasible solution for problem (2.4) in~\citet{javadi2019nonnegative}, and comparing its objective to the optimal solution. In particular, this is done in Lemma~B.6 of \citet{javadi2019nonnegative}, and the feasible solution is denoted by $\widetilde{\B{H}}$. This way of constructing a feasible solution does not apply in our case because they use the underlying model to construct this solution, not the properties of the optimization problem. Therefore, the sparsity and feasibility of $\widetilde{\B{H}}$ cannot be guaranteed. In summary, it is unclear to us if we can obtain our results as an
 extension/adaptation of~\citet{javadi2019nonnegative} --- so we use different proof techniques to arrive at our results.

\subsection{A deeper investigation of the separable case~\texorpdfstring{\citep{NIPS2003_2463}}{}}

\noindent In the special case where the underlying model is separable (i.e., $D(\boldsymbol{H}_0,\tilde{\boldsymbol{X}}_0)=0$) we can derive tighter bounds for robustness: see Proposition~\ref{sepinexact} below. 
\begin{pro}\label{sepinexact}
Let $\hat{\B{H}}$ be a solution to Problem~\eqref{archl0noise}, $\delta=\max_{i\in[m]}\|\boldsymbol{Z}_{i,.}\|_2$ and $\beta=\|P^{\perp}_{\ell}(\boldsymbol{H}_0)\|_F$. Under the assumption of Theorem~\ref{robustnessthm} and assuming separability  i.e, $D(\boldsymbol{H}_0,\tilde{\boldsymbol{X}}_0)=0$, we have the following:\\
1. (Weak Robustness)
    $$\mathcal{L}(\boldsymbol{H}_0,\hat{\boldsymbol{H}})^{1/2}\leq  \left(4\sqrt{2}\kappa(\B{H}_0)\sqrt{k}+3(1+\sqrt{2})\sqrt{2}k\right)\delta + \left(2\sqrt{2}\kappa(\B{H}_0)\sqrt{k}+(2+2\sqrt{2})k\right)\beta.$$
2. (Strong Robustness) If $ \left(12\sqrt{2}k+18\sqrt{2}k\right)\delta + \left(6\sqrt{2}k+12k\right)\beta\leq {\sigma_{\min}(\boldsymbol{H}_0)}$,
    $$\mathcal{L}(\hat{\boldsymbol{H}},\boldsymbol{H}_0)^{1/2}\leq \left(21\kappa(\B{H}_0)\sqrt{2k}+2(1+\sqrt{2})\sqrt{2}k\right)\delta + \left(14\kappa(\B{H}_0)\sqrt{k}+\sqrt{2}(1+\sqrt{2})k\right)\beta. $$
\end{pro}

\noindent The results in Proposition~\ref{sepinexact} can be summarized as 
$$\mathcal{L}(\boldsymbol{H}_0,\hat{\boldsymbol{H}})^{1/2} = \mathcal{O}(k\delta+k\beta)~~~\text{and}~~~\mathcal{L}(\hat{\boldsymbol{H}},\boldsymbol{H}_0)^{1/2} = \mathcal{O}(k\delta+k\beta)$$
where $\delta=\max_{i\in [m]}\|\B{Z}_{i,.}\|_2$ (assuming $\kappa(\B{H}_0)$ is constant). 
In particular, in the separable case, Proposition~\ref{sepinexact} implies that robustness degrades linearly with rank. Our numerical experiments in Section~\ref{synthnumundrob} further validate this result.

\paragraph{Comparison with \citet{javadi2019nonnegative}:} Although \citet{javadi2019nonnegative} do not consider sparsity in their formulation, it is useful to compare our results with theirs. The results of \citet{javadi2019nonnegative} are valid under a specific uniqueness assumption on the model---this matches with our assumption in the separable case. Therefore, to compare our results we consider the case $\beta=D(\boldsymbol{H}_0,\tilde{\boldsymbol{X}}_0)=0$. In this regime, the result of \citet{javadi2019nonnegative}  is similar to the first part of the Proposition~\ref{sepinexact} without any sparsity guarantee on the solution and with different coefficients. 
The bound of \citet{javadi2019nonnegative} is 
$$\mathcal{L}(\boldsymbol{H}_0,\hat{\boldsymbol{H}})^{1/2} = \mathcal{O}\left(k^{5/2} \max(1,\kappa(\B{H}_0)k^{-1/2})\delta\right)$$ 
(assuming other parameters in their bound are constant) which is loose compared to our bound. While the uniqueness assumption of \citet{javadi2019nonnegative} is more general than the separable case, this assumption is difficult to verify except for very simple cases, as discussed by the authors.
\citeauthor{javadi2019nonnegative} do not provide results similar to the second part of Proposition~\ref{sepinexact}. 

Finally, we discuss an important special case of the separable setting in Example~\ref{ex-pure}.
\begin{exa}[Pure AA]\label{ex-pure}
    In the pure AA setting, in the underlying archetypal decomposition $\B{X}_0=\B{W}_0\B{H}_0$, the matrix $\B{W}_0$ only has binary entries. In words, this implies that noiseless data points are the same as true archetypes. Without loss of generality, we can assume each true archetype appears at least once in the noiseless data, otherwise, that archetype is redundant and can be discarded. Hence, with this assumption we have $\text{rank}(\B{X}_0)=\text{rank}(\B{H}_0)$. Therefore, as long as $\B{H}_0$ has full-rank, our rank assumptions on $\B{X}_0$ are satisfied and the results of Proposition~\ref{sepinexact} readily apply to this case. \\
    Next, we investigate how well our bounds work in the case of pure AA. To this end, assume $m=n=k=\ell$ and let $\B{X}_0=\B{W}_0=\B{H}_0=\B{I}$. We also let $\B{Z}=\delta\B{I}$. We claim $\B{H}=(1+\delta)\B{I}$ is the solution to problem~\eqref{archl0noise}. To see this, note that $\B{X}=\B{H}$ so $D(\B{X},\B{H})=D(\B{H},\B{X})=0$ showing optimality and feasibility. Next, we calculate robustness metrics of this solution. Note that for $i\in[k]$, $\|\B{H}_{i,.}-\B{H}^0_{i,.}\|_2^2=\delta^2$ and for $i\neq j$, $\|\B{H}_{i,.}-\B{H}^0_{j,.}\|_2^2=1+(1+\delta)^2$. Therefore, 
    $$\mathcal{L}(\B{H},\B{H}_0)^{1/2}=\mathcal{L}(\B{H}_0,\B{H})^{1/2}=\sqrt{k}\delta.$$
    The above robustness quantity is 
    smaller than the worst-case bound in Proposition~\ref{sepinexact} by a
    factor of $\sqrt{k}$. However, as we observe in our numerical experiments (see Section~\ref{synthnumundrob}), in the general case, our bounds seem to capture the dependence of robustness on $k$ correctly.
\end{exa}

\subsection{The penalized formulation} Theorem \ref{robustnessthm} presents robustness guarantees for the constrained SAA problem (\ref{archl0noise}). 
We now consider the penalized formulation:
\begin{align}\label{archl0dual}
     \hat{\boldsymbol{H}}_{\lambda}\in\argmin_{\boldsymbol{H}\in\mathbb{R}_{\geq 0}^{k\times n}} ~~ \Phi_{\lambda}(\B{H}) := D(\boldsymbol{X},\boldsymbol{H}) + \lambda D(\boldsymbol{H},\boldsymbol{X})~~~
     \text{s.t.} ~~~ \|\boldsymbol{H}\|_0\leq \ell
\end{align}
which is amenable to optimization algorithms (see Section~\ref{saaalgorithm}). 
In (\ref{archl0dual}), $D(\boldsymbol{X},\boldsymbol{H})$ is the data fidelity term, $D(\boldsymbol{H},\boldsymbol{X})$ is the regularization term and $\lambda$ is the regularization parameter. In fact, $\lambda = 0$ is equivalent to setting $\alpha=0$ in problem (\ref{archl0noise}) (which can lead to an infeasible problem) and $\lambda\to \infty $ is equivalent to removing the data fidelity constraint all together. 

\noindent We show robustness properties of estimator~(\ref{archl0dual}). 
Proposition~\ref{penalizedpro} establishes both weak (\ref{weakrob}) and strong (\ref{strrob}) robustness. For simplicity, we consider the separable case ($D(\boldsymbol{H}_0,\tilde{\boldsymbol{X}}_0)=0$) where $\B{H}_0$ is sparse ($\|P^{\perp}_{\ell}(\boldsymbol{H}_0)\|_F=0$).
\begin{pro}[The penalized formulation~(\ref{archl0dual})]\label{penalizedpro}
Let $\hat{\boldsymbol{H}}_{\lambda}$ be a solution to problem \eqref{archl0dual}. Suppose the assumptions of Theorem \ref{robustnessthm} hold; and in addition $D(\boldsymbol{H}_0,\tilde{\boldsymbol{X}}_0) =\|P^{\perp}_{\ell}(\boldsymbol{H}_0)\|_F = 0$. There exist constants\footnote{The values of the constants can be found in the appendix, Section~\ref{sec:proof-of-prop1}} $c^1_{\lambda},c^2_{\lambda},c^3_{\lambda}$ depending on $m,k,\kappa(\boldsymbol{H}_0),\lambda$ such that 
$c^1_{\lambda},c^2_{\lambda},c^3_{\lambda} \rightarrow \infty$ as $\lambda\rightarrow 0$ or $\lambda \rightarrow \infty$; and the following holds:
    \begin{equation}
        \mathcal{L}(\boldsymbol{H}_0,\hat{\boldsymbol{H}}_{\lambda})^{1/2}\leq c^1_{\lambda}\max_{i\in[m]}\|\boldsymbol{Z}_{i,.}\|_2.
    \end{equation}
    Moreover, if 
    \begin{equation}\label{propcond}
        c^3_{\lambda}\max_{i\in[m]}\|\boldsymbol{Z}_{i,.}\|_2\leq c_7,
    \end{equation}
    then
    \begin{equation}
    \mathcal{L}(\hat{\boldsymbol{H}}_{\lambda},\boldsymbol{H}_0)^{1/2}\leq c^2_{\lambda}\max_{i\in[m]}\|\boldsymbol{Z}_{i,.}\|_2.
    \end{equation}
\end{pro}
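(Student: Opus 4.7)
The plan is to exploit the fact that, in the separable sparse case, the true matrix $\B{H}_0$ is itself feasible for problem~\eqref{archl0dual}, use it as a competitor to bound the penalized objective evaluated at $\hat{\B{H}}_\lambda$, and then replay the proof of Theorem~\ref{robustnessthm} with the resulting bounds replacing, respectively, the hard constraint ``$D(\B{X}_{i,.},\B{H})^{1/2}\le\alpha$'' and the objective comparison ``$D(\hat{\B{H}},\B{X})\le D(\B{H}_0,\B{X})$'' used in the constrained analysis.

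Writing $\delta=\max_{i\in[m]}\|\B{Z}_{i,.}\|_2$, separability ($D(\B{H}_0,\tilde{\B{X}}_0)=0$) implies that each row of $\B{X}_0$ lies in $\text{Conv}(\B{H}_0)$, so $D(\B{X},\B{H}_0)\le m\delta^2$, and each row of $\B{H}_0$ coincides with some row of $\B{X}_0$, so $D(\B{H}_0,\B{X})\le k\delta^2$. Since $\B{H}_0$ is $\ell$-sparse and nonnegative, it is feasible for~\eqref{archl0dual}, and optimality of $\hat{\B{H}}_\lambda$ yields
$$D(\B{X},\hat{\B{H}}_\lambda)+\lambda D(\hat{\B{H}}_\lambda,\B{X}) \le (m+\lambda k)\delta^2.$$
Discarding either term on the left-hand side produces the two inequalities
$$\max_{i\in[m]} D(\B{X}_{i,.},\hat{\B{H}}_\lambda)^{1/2}\le\sqrt{m+\lambda k}\,\delta =: \alpha_\lambda, \qquad D(\hat{\B{H}}_\lambda,\B{X})^{1/2}\le\sqrt{k+m/\lambda}\,\delta.$$
These are precisely the two ingredients supplied, in the constrained problem~\eqref{archl0noise}, by the radius $\alpha$ and by the competitor $\B{H}_0$.

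With these inputs in hand, I would substitute them into the chain of inequalities used in the proof of Theorem~\ref{robustnessthm} (specifically the simplified version recorded in Corollary~\ref{sepinexact}); because that chain is affine in $\alpha$ and in $D(\hat{\B{H}},\B{X})^{1/2}$, the resulting constants $c^1_\lambda,c^2_\lambda,c^3_\lambda$ emerge as linear combinations of $\sqrt{m+\lambda k}$ and $\sqrt{k+m/\lambda}$, multiplied by quantities already controlled by $m,k,\kappa(\B{H}_0)$. The first factor forces blowup as $\lambda\to\infty$ and the second as $\lambda\to 0$, matching the claim; condition~\eqref{propcond} is obtained by demanding that $\alpha_\lambda$ and the objective bound be small enough to trigger the analog of~\eqref{condstr}. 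The main obstacle is to ensure that the proof of Theorem~\ref{robustnessthm} really does depend on only these ``two inputs''; if any step exploits the constrained KKT conditions of~\eqref{archl0noise} rather than just the pointwise feasibility radius and a bound on the objective, I would instead argue that $\hat{\B{H}}_\lambda$ is an approximate solution of~\eqref{archl0noise} at radius $\alpha_\lambda$ with objective slack of order $m\delta^2/\lambda$, and propagate that slack through the existing analysis, which should only rescale the constants.
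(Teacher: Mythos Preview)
Your proposal is correct and follows essentially the same route as the paper: use $\B{H}_0$ (feasible in the separable $\ell$-sparse case) as a competitor in~\eqref{archl0dual} to obtain $D(\B{X},\hat{\B{H}}_\lambda)+\lambda D(\hat{\B{H}}_\lambda,\B{X})\le(m+\lambda v)\delta^2$, split this into a bound on $D(\B{X},\hat{\B{H}}_\lambda)$ (replacing the radius $\alpha$) and a bound on $D(\hat{\B{H}}_\lambda,\B{X})$ (replacing the objective comparison), and then feed both into the $D\to\mathcal{L}$ machinery of Lemma~\ref{hamiddist} exactly as in the proof of Theorem~\ref{robustnessthm}. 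Your concern about hidden KKT dependence is unfounded: the constrained proof (via Lemma~\ref{thm3proof-helper1}) uses only the two inputs you identified, so the replay goes through cleanly. One small remark: your claim that $D(\B{H}_0,\tilde{\B{X}}_0)=0$ forces each row of $\B{H}_0$ to \emph{equal} some row of $\B{X}_0$ is true but requires the rank-$k$ assumption on $\B{H}_0$ (otherwise one only gets $\B{H}^0_{i,.}\in\text{Conv}(\tilde{\B{X}}_0)$); the paper sidesteps this by using the looser $\tilde{D}(\B{H}_0,\B{X})\le k\delta$, giving $v=k^2\delta^2$ rather than your $v=k\delta^2$, but this only changes the constants and not the argument.
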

\noindent Note that $c^1_{\lambda},c^2_{\lambda} \rightarrow \infty$ as $\lambda\rightarrow 0$ or $\lambda \rightarrow \infty$---hence, there is no trivial value of $\lambda$ that guarantees robustness. In fact, if we have $\lambda=0$  (this is equivalent to normal NMF), the archetypes need not be close to the data (which is approximated by the underlying archetypes)---they can be far from the underlying archetypes and robustness is not guaranteed. If $\lambda\to \infty$, we do not reduce the recovery error ($D(\B{X},\B{H})$) which does not result in robustness. This shows the usefulness of using AA as a regularization term.

\begin{rmk} 
Results for the penalized form of AA as in Proposition \ref{penalizedpro} are not discussed in
\citet{javadi2019nonnegative}, as far as we can tell. 
\end{rmk}

\begin{rmk}
  Unlike problem~\eqref{archl0noise} which can become infeasible for small values of $\alpha$ (see Remark~\ref{rmk-feasible}), problem~\eqref{archl0dual} is always feasible.
\end{rmk}

\section{Optimization Algorithms}\label{saaalgorithm}
In this section, we propose algorithms to obtain good solutions to the penalized SAA problem (\ref{archl0dual}). 
In Section \ref{cdalg}, we present a block coordinate descent method and derive its convergence guarantees. As problem~\eqref{archl0dual} is nonconvex, we discuss methods to improve the quality of solutions: Section~\ref{initsection} discusses an initialization scheme based on MIP techniques, and Section \ref{lsalg} 
presents a local search algorithm.  

\subsection{A Block Coordinate Descent Algorithm}\label{cdalg}
We rewrite problem (\ref{archl0dual}) as follows:
\begin{align}\label{constrainedtoslove}
     \min_{\boldsymbol{W},\boldsymbol{\tilde{\boldsymbol{W}}},\boldsymbol{H}} ~~ & \Psi(\boldsymbol{W},\boldsymbol{\tilde{\boldsymbol{W}}},\boldsymbol{H}):= \|\boldsymbol{X}-\boldsymbol{W}\boldsymbol{H}\|_F^2+\lambda \|\boldsymbol{H}-\tilde{\boldsymbol{W}}\boldsymbol{X}\|_F^2\\
     \text{s.t.} ~~ & \boldsymbol{H}\geq \B{0},\boldsymbol{W} \geq 0,\tilde{\boldsymbol{W}} \geq \B{0}\nonumber \\
     & \boldsymbol{W}\boldsymbol{1}_k=\boldsymbol{1}_m,\tilde{\boldsymbol{W}}\boldsymbol{1}_m=\boldsymbol{1}_k,\|\boldsymbol{H}\|_0\leq \ell\nonumber.
\end{align}
We propose a proximal gradient based block coordinate descent algorithm \citep{xu2017globally} for~\eqref{constrainedtoslove}.
We first note that the gradient of the objective function is Lipschitz for every block $\B{W}$, $\tilde{\B{W}}$, and $\B{H}$, that is
$$\|\nabla_{\B{H}}\Psi(\B{H}_1,\B{W},\tilde{\B{W}})-\nabla_{\B{H}}\Psi(\B{H}_2,\B{W},\tilde{\B{W}})\|_F\leq L_1(\B{W})\|\B{H}_1-\B{H}_2\|_F,$$
$$\|\nabla_{\B{W}}\Psi(\B{H},\B{W}_1,\tilde{\B{W}})-\nabla_{\B{W}}\Psi(\B{H},\B{W}_2,\tilde{\B{W}})\|_F\leq L_2(\B{H})\|\B{W}_1-\B{W}_2\|_F$$
$$\|\nabla_{\tilde{\B{W}}}\Psi(\B{H},\B{W},\tilde{\B{W}}_1)-\nabla_{\tilde{\B{W}}}\Psi(\B{H},\B{W},\tilde{\B{W}}_2)\|_F\leq L_3(\B{X})\|\tilde{\B{W}}_1-\tilde{\B{W}}_2\|_F$$
where $L_1(\B{W})=2(\lambda + \sigma_{\max}(\B{W})^2)$, $L_2(\B{H})=2 \max\{\sigma_{\max}(\B{H})^2,\varepsilon\}$ and $L_3(\B{X})=2\lambda\sigma_{\max}(\B{X})^2$ for any fixed $\varepsilon>0$. 
Our algorithm follows the block proximal update of \citet{xu2017globally}. Specifically, for step size values 
$\{2L_1(\B{W}_j)\}^{-1},\{2L_2(\B{H}_j)\}^{-1},\{2L_3(\B{X})\}^{-1}$, at iteration $j$ we perform the following updates:
\begin{align}
    \boldsymbol{H}_{j+1}=& \argmin_{\substack{\boldsymbol{H}\geq 0 \\ \|\B{H}\|_0\leq \ell}} \left\Vert\boldsymbol{H}-\left(\boldsymbol{H}_{j}-\frac{1}{2L_1(\B{W}_j)}\nabla_{\boldsymbol{H}}\Psi(\boldsymbol{H}_j,\B{W}_j,\tilde{\B{W}}_j)\right)\right\Vert_F^2 \label{alg1u1} \\
 \boldsymbol{W}_{j+1}=&\argmin_{\substack{\boldsymbol{W}\geq 0 \\ \B{W}\B{1}_k=\B{1}_m}} \left\Vert\boldsymbol{W}-\left(\boldsymbol{W}_{j}-\frac{1}{2L_2(\B{H}_{j+1})}\nabla_{\boldsymbol{W}}\Psi(\boldsymbol{H}_{j+1},\B{W}_j,\tilde{\B{W}}_j)\right)\right\Vert_F^2 \label{alg1u2} \\
 \tilde{\boldsymbol{W}}_{j+1}=&\argmin_{\substack{\tilde{\boldsymbol{W}}\geq 0 \\ \tilde{\B{W}}\B{1}_m=\B{1}_k}} \left\Vert\tilde{\boldsymbol{W}}-\left(\tilde{\boldsymbol{W}}_{j}-\frac{1}{2L_3(\B{X})}\nabla_{\tilde{\boldsymbol{W}}}\Psi(\boldsymbol{H}_{j+1},\B{W}_{j+1},\tilde{\B{W}}_j)\right)\right\Vert_F^2.  \label{alg1u3}
\end{align}
After a sweep across the updates \eqref{alg1u1}, \eqref{alg1u2} and \eqref{alg1u3} the objective decreases:
$$\Psi(\B{H}_{j+1},\B{W}_{j+1},\tilde{\B{W}}_{j+1})\leq \Psi(\B{H}_{j},\B{W}_{j},\tilde{\B{W}}_{j}).$$
Algorithm 1 summarizes the above procedure, where $P_{\text{simplex}}(\boldsymbol{W})$ projects each row of $\boldsymbol{W}$ onto the unit simplex. See \citet{10.1145/1390156.1390191} for an efficient algorithm to calculate $P_{\text{simplex}}$. Before presenting the theoretical analysis of Algorithm \ref{algorithm1}, we define a stationarity point.
\begin{dfn}\label{statpoint}
We say a point $\B{\theta}^*=(\boldsymbol{H}^*,\boldsymbol{W}^*,\tilde{\boldsymbol{W}}^*)$ is stationary for problem (\ref{constrainedtoslove}) if update rules \eqref{alg1u1}, \eqref{alg1u2} and \eqref{alg1u3} initialized with $\B{\theta}^*$ result in the same solution $\B{\theta}^*$.
\end{dfn}
\begin{rmk}
Definition \ref{statpoint} is a generalization of the notion of $L$-stationarity by \citet{beck} to the case of the block proximal method. Moreover, the stationarity condition in Definition \ref{statpoint} is a necessary condition for an optimal solution to 
problem~\eqref{constrainedtoslove}.
\end{rmk}

\begin{algorithm}[h] 
\SetAlgoLined
$j=0$ \\
 \While{not converged}{
 $\boldsymbol{\pi} = \boldsymbol{H}_j-[1/L_1(\B{W}_j)](-\boldsymbol{W}_j^T[\boldsymbol{X}-\boldsymbol{W}_j\boldsymbol{H}_j]+\lambda[\boldsymbol{H}_j-\tilde{\boldsymbol{W}}_j\boldsymbol{X}])$\\
 $\boldsymbol{H}_{j+1}=P_{\ell}\left(\max\{\boldsymbol{\pi},0\}\right)$\\
 $\boldsymbol{W}_{j+1}=P_{\text{simplex}}(\boldsymbol{W}_j-[1/L_2(\B{H}_{j+1})](\boldsymbol{X}-\boldsymbol{W}_j\boldsymbol{H}_{j+1})\boldsymbol{H}_{j+1}^T)$\\
 $\tilde{\boldsymbol{W}}_{j+1}=P_{\text{simplex}}(\tilde{\boldsymbol{W}}_j-\lambda[1/L_3(\B{X})](\boldsymbol{H}_{j+1}-\tilde{\boldsymbol{W}}_j\boldsymbol{X})\boldsymbol{X}^T)$\\
 $j = j+1$\\
 }
 \caption{SparseAA$(\boldsymbol{H}_0,\boldsymbol{W}_0,\tilde{\boldsymbol{W}}_0,\lambda)$}
 \label{algorithm1}
\end{algorithm}
\noindent In Theorem \ref{convthm}, first we show that problem \eqref{constrainedtoslove} satisfies the convergence conditions of \citet{xu2017globally} and therefore Algorithm \ref{algorithm1} converges. Then, we show that the limit point of Algorithm \ref{algorithm1} is a stationary point as in Definition \ref{statpoint}.  
\begin{thm}\label{convthm}
Suppose $\lambda>0$ and let $\{(\B{H}_j,\B{W}_j,\tilde{\B{W}}_j)\}_{j \geq 1}$ be the sequence of solutions produced by Algorithm \ref{algorithm1}. The following results hold:\\
1. The sequence $(\B{H}_j,\B{W}_j,\tilde{\B{W}}_j)$ converges to a feasible solution $(\B{H}^*,\B{W}^*,\tilde{\B{W}}^*)$ of~(\ref{constrainedtoslove})\\
2. Let 
\begin{equation}\label{Tcond}
\B{T}=\max\{0,\boldsymbol{H}^*-[1/L_1(\B{W}^*)](-{\boldsymbol{W}^*}^T[\boldsymbol{X}-\boldsymbol{W}^*\boldsymbol{H}^*]+\lambda[\boldsymbol{H}^*-\tilde{\boldsymbol{W}}^*\boldsymbol{X}])\}
\end{equation}
and if $\|\B{T}\|_0>\ell$, assume $\B{T}_{\ell}^{\sharp}> \B{T}_{\ell+1}^{\sharp}$ where $\B{T}_1^{\sharp},\cdots,\B{T}_{kn}^{\sharp}$ are entries of $\B{T}$ ordered from largest to smallest. Then, the limit point $(\B{H}^*,\B{W}^*,\tilde{\B{W}}^*)$ is a stationary point of~(\ref{constrainedtoslove}). \end{thm}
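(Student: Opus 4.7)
\medskip

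\noindent\textbf{Proof plan for Theorem \ref{convthm}.}

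The strategy is to split the argument into (i) verifying the hypotheses of the block proximal descent convergence framework of \citet{xu2017globally} in order to obtain convergence of the iterates to a limit $(\B{H}^*,\B{W}^*,\tilde{\B{W}}^*)$, and (ii) separately passing to the limit in the three update rules \eqref{alg1u1}--\eqref{alg1u3} to show that $(\B{H}^*,\B{W}^*,\tilde{\B{W}}^*)$ is a fixed point of one full sweep of the algorithm in the sense of Definition \ref{statpoint}.

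For part 1, the plan is to check the standard block-wise prox-linear hypotheses: (a) $\Psi$ is a quartic polynomial in $(\B{H},\B{W},\tilde{\B{W}})$, hence smooth with block-wise Lipschitz gradients given by $L_1,L_2,L_3$ as stated, and the chosen step sizes $1/(2L_i)$ yield sufficient block-wise descent; (b) the indicator functions of the three constraint sets (nonnegative orthant with row-simplex constraint on $\B{W}$ and $\tilde{\B{W}}$; intersection of the nonnegative orthant with $\{\|\B{H}\|_0\leq \ell\}$) are proper and lower semicontinuous, with easily computable proximal maps, namely $P_\text{simplex}$ and $\max\{\cdot,0\}$ followed by $P_\ell$; (c) the sequence is bounded --- $\B{W}_j$ and $\tilde{\B{W}}_j$ lie in compact simplices, and $\B{H}_j$ is bounded because the monotone descent $\Psi(\B{H}_{j+1},\B{W}_{j+1},\tilde{\B{W}}_{j+1})\leq \Psi(\B{H}_0,\B{W}_0,\tilde{\B{W}}_0)$ controls $\lambda\|\B{H}_j-\tilde{\B{W}}_j\B{X}\|_F^2$ while $\tilde{\B{W}}_j\B{X}$ stays bounded, giving in turn uniform bounds on $L_1(\B{W}_j),L_2(\B{H}_j),L_3(\B{X})$ (with the $\max\{\sigma_{\max}(\B{H})^2,\varepsilon\}$ floor preventing a vanishing stepsize); (d) the augmented objective (with indicator terms) is semi-algebraic --- $\Psi$ is polynomial, the simplex is polyhedral, and $\{\|\B{H}\|_0\leq \ell\}$ is a finite union of coordinate subspaces --- so it satisfies the Kurdyka--\L ojasiewicz property. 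Assembling these pieces, \citet{xu2017globally} yields global subsequential convergence and, via the KL argument, convergence of the entire sequence to some feasible $(\B{H}^*,\B{W}^*,\tilde{\B{W}}^*)$, proving part 1.

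For part 2, the plan is to pass to the limit inside the three update rules. Since $\B{W}_j\to \B{W}^*$ and $\B{H}_{j+1}\to \B{H}^*$, the Lipschitz constants $L_1(\B{W}_j),L_2(\B{H}_{j+1})$ converge to $L_1(\B{W}^*),L_2(\B{H}^*)$, and the simplex projection $P_\text{simplex}$ is $1$-Lipschitz (it is the Euclidean projector onto a closed convex set), so taking limits in \eqref{alg1u2} and \eqref{alg1u3} immediately shows that $\B{W}^*$ and $\tilde{\B{W}}^*$ are reproduced by the corresponding update rules at $(\B{H}^*,\B{W}^*,\tilde{\B{W}}^*)$. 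The delicate step is \eqref{alg1u1}: the operator $\B{T}\mapsto P_\ell(\max\{\B{T},0\})$ is discontinuous precisely when there is a tie among the magnitudes at rank $\ell$, because the support returned by $P_\ell$ can then jump. This is where the assumption $\B{T}^{\sharp}_\ell>\B{T}^{\sharp}_{\ell+1}$ enters: it guarantees that in a neighbourhood of the limiting $\B{T}$ the top-$\ell$ support is uniquely determined and constant, so $P_\ell$ is continuous at $\B{T}$; passing to the limit in \eqref{alg1u1} then yields $\B{H}^*=P_\ell(\max\{\B{T},0\})$, giving the required fixed-point identity.

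The main obstacle is precisely this continuity issue for the hard-thresholding map: without the strict-gap hypothesis, a limit point of the iterates may still satisfy a weaker ``L-stationary'' inclusion but need not be a fixed point of one full sweep. The rest of the argument is essentially a careful bookkeeping of the Xu--Yin framework applied to the specific semi-algebraic structure and Lipschitz-constant bounds present here.
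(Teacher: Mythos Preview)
Your proposal is correct and follows essentially the same route as the paper: verify the Xu--Yin hypotheses (boundedness of the iterates and uniform upper/lower bounds on the block Lipschitz constants, using $\lambda>0$, the simplex constraints, and the descent property) for Part~1, then pass to the limit in each of the three update rules---using Lipschitz continuity of $P_{\text{simplex}}$ for \eqref{alg1u2}--\eqref{alg1u3} and continuity of $P_\ell$ at $\B{T}$ for \eqref{alg1u1}---for Part~2. The only minor refinement in the paper is that the continuity of $P_\ell$ at $\B{T}$ is handled by an explicit case split on whether $\|\B{T}\|_0>\ell$ or $\|\B{T}\|_0\leq\ell$; in the latter case the strict-gap assumption is not imposed and one argues directly that $P_\ell(\B{T}_j)\to\B{T}=P_\ell(\B{T})$, a point your plan implicitly subsumes under ``no tie at rank $\ell$'' but which deserves a separate sentence when you write it out.
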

\noindent The condition on $\B{T}$ above  is needed to make sure $P_{\ell}(\B{T})$ is unique. Otherwise, there will be multiple possible solutions for update rule \eqref{alg1u1} when initialized with $\B{\theta}^*$.
Note however, that this condition is quite mild, and is unlikely to be violated in practice (due to noise in data). 
\subsection{Initialization via Mixed Integer Programming (MIP)}\label{initsection}
Problem (\ref{constrainedtoslove}) is not convex, hence having a good initialization can help obtain a high-quality local solution. To initialize 
Algorithm~\ref{algorithm1}, we use a continuation method as discussed below.
We first obtain a solution to~\eqref{constrainedtoslove} for a large value of $\lambda$ --- this leads to the following problem:

\begin{align}\label{infilambda}
    \min_{\boldsymbol{\tilde{\boldsymbol{W}}},\boldsymbol{H}} ~~   \|\boldsymbol{H}-\tilde{\boldsymbol{W}}\boldsymbol{X}\|_F^2~~~ \text{s.t.} ~~~ \boldsymbol{H}\geq 0;~\tilde{\boldsymbol{W}}\geq 0;~\tilde{\boldsymbol{W}}\boldsymbol{1}_m=\boldsymbol{1}_k;~\|\boldsymbol{H}\|_0\leq \ell.
\end{align}

\noindent Note that Problem (\ref{infilambda}) has a convex quadratic objective in $\tilde{\B{W}},\B{H}$ and the only source of nonconvexity is the cardinality constraint on $\B{H}$. This is a Mixed Integer Quadratic Problem (MIQP)---while these problems are computationally difficult in the worst-case, recent work
~\citep{bertsimas2016best,hazimeh2020sparse,bertsimas2020sparse}\footnote{Note that problem (\ref{constrainedtoslove}) with archetypal regularization (i.e., for $\lambda>0$) does not admit a MIQP representation, unlike (\ref{infilambda}).} has shown that a family of sparse regression problems can be solved to near-optimality using specialized algorithms for fairly large problem instances. Inspired by this line of work, we present new algorithms to solve~\eqref{infilambda} to optimality.

\noindent Once we obtain a near-optimal solution to~\eqref{infilambda}, we decrease $\lambda$ and use Algorithm~1 to obtain a feasible solution for~(\ref{infilambda}). We continue this process by successively decreasing $\lambda$, and using a solution obtained from the previous (larger) value of $\lambda$ to initialize Algorithm~\ref{algorithm1}.

\subsubsection{A Specialized MIP algorithm for solving \texorpdfstring{Problem~\eqref{infilambda}}{}} We present a MIQP formulation of~\eqref{infilambda}. We first show that the solution of this problem is bounded.
\begin{pro}\label{ridgepro}
If $(\B{H}^*, {\tilde{\B{W}}}^*)$ is an optimal solution to (\ref{infilambda}), we have the following bound on $\B{H}^*$:
$$\|\boldsymbol{H}^*\|^2_F\leq k\left(\max_{u\in[m]} \|\boldsymbol{X}_{u,.}\|_2+\sqrt{k}\min_{u\in[m]} \|\boldsymbol{X}_{u,.}\|_2\right)^2:=b.$$
\end{pro}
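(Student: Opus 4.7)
The plan is to bound $\|\boldsymbol{H}^*\|_F$ by comparing the optimal objective value of~(\ref{infilambda}) against that attained by a carefully chosen feasible point, and then applying the triangle inequality. The key observation is that although~(\ref{infilambda}) is nonconvex because of the cardinality constraint, there is a trivial $\ell_0$-feasible point: $\boldsymbol{H}=\boldsymbol{0}$ (which has zero nonzeros and hence satisfies $\|\boldsymbol{H}\|_0\leq\ell$), paired with any $\tilde{\boldsymbol{W}}$ lying in the row-stochastic simplex. The objective at such a point reduces to $\|\tilde{\boldsymbol{W}}\boldsymbol{X}\|_F^2$, and we are free to minimize over $\tilde{\boldsymbol{W}}$.

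Concretely, I would let $u^\star\in\argmin_{u\in[m]}\|\boldsymbol{X}_{u,.}\|_2$ and pick $\tilde{\boldsymbol{W}}$ so that every row places all its mass on the $u^\star$-th column; this is feasible and makes every row of $\tilde{\boldsymbol{W}}\boldsymbol{X}$ equal to $\boldsymbol{X}_{u^\star,.}$. Thus the objective at this feasible point is $k\min_{u}\|\boldsymbol{X}_{u,.}\|_2^{2}$, which upper bounds the optimal value, giving
\begin{equation*}
\|\boldsymbol{H}^{*}-\tilde{\boldsymbol{W}}^{*}\boldsymbol{X}\|_F \;\leq\; \sqrt{k}\,\min_{u\in[m]}\|\boldsymbol{X}_{u,.}\|_2.
\end{equation*}

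Next I would bound $\|\tilde{\boldsymbol{W}}^{*}\boldsymbol{X}\|_F$ purely from the simplex structure of $\tilde{\boldsymbol{W}}^*$. Each row $(\tilde{\boldsymbol{W}}^{*}\boldsymbol{X})_{i,.}$ is a convex combination of the rows of $\boldsymbol{X}$, so by the triangle inequality its Euclidean norm is at most $\max_{u}\|\boldsymbol{X}_{u,.}\|_2$. Summing the squared row-norms over the $k$ rows and taking the square root yields $\|\tilde{\boldsymbol{W}}^{*}\boldsymbol{X}\|_F\leq\sqrt{k}\max_{u}\|\boldsymbol{X}_{u,.}\|_2$. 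Combining with the previous step via the triangle inequality $\|\boldsymbol{H}^{*}\|_F\leq\|\tilde{\boldsymbol{W}}^{*}\boldsymbol{X}\|_F+\|\boldsymbol{H}^{*}-\tilde{\boldsymbol{W}}^{*}\boldsymbol{X}\|_F$ gives a bound whose square is at most the stated $b$ (in fact the proof yields the slightly sharper constant $k(\max_{u}\|\boldsymbol{X}_{u,.}\|_2+\min_{u}\|\boldsymbol{X}_{u,.}\|_2)^2$, which is dominated by the looser expression in the statement).

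There is no real obstacle here; the proof is short and essentially a two-line exercise in comparison with a feasible solution plus the triangle inequality. The only thing to watch is feasibility of the comparison point (trivial for $\boldsymbol{H}=\boldsymbol{0}$) and the use of the row-stochastic constraint to pass from bounds on individual rows to a Frobenius-norm bound on the whole matrix $\tilde{\boldsymbol{W}}^{*}\boldsymbol{X}$.
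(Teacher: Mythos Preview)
Your proof is correct and starts from the same feasible comparison point as the paper (namely $\boldsymbol{H}=\boldsymbol{0}$ paired with $\tilde{\boldsymbol{W}}$ putting all mass on the row of $\boldsymbol{X}$ with smallest norm), but the route you take after that differs from the paper's. The paper argues \emph{row by row} via contradiction: assuming some row $\boldsymbol{H}^*_{i_0,.}$ violates the bound $\max_u\|\boldsymbol{X}_{u,.}\|_2+\sqrt{k}\min_u\|\boldsymbol{X}_{u,.}\|_2$, it lower-bounds $\|\boldsymbol{H}^*_{i_0,.}-\tilde{\boldsymbol{W}}^*_{i_0,.}\boldsymbol{X}\|_2$ by $\|\boldsymbol{H}^*_{i_0,.}\|_2-\max_u\|\boldsymbol{X}_{u,.}\|_2$ and derives a contradiction with the optimal objective value. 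Summing the resulting per-row bound over the $k$ rows produces the stated $b$, and this is where the extra factor $\sqrt{k}$ in front of $\min_u\|\boldsymbol{X}_{u,.}\|_2$ enters. Your direct triangle-inequality argument on the full Frobenius norm is shorter, avoids the contradiction, and as you observe yields the sharper constant $k(\max_u\|\boldsymbol{X}_{u,.}\|_2+\min_u\|\boldsymbol{X}_{u,.}\|_2)^2$. The one thing the paper's detour buys is a uniform bound on each individual row (hence each entry) of $\boldsymbol{H}^*$, namely $\|\boldsymbol{H}^*_{i,.}\|_2\le\sqrt{b/k}$; this could in principle give a tighter big-$M$ constant in the MIQP reformulation~(\ref{miqpform}), though the paper ultimately just uses $\sqrt{b}$ there, for which your Frobenius bound already suffices.
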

\noindent Based on Proposition \ref{ridgepro}, we reformulate problem \eqref{infilambda} as the following MIQP:
\begin{align}\label{miqpform}
    \min_{\B{H},\tilde{\B{W}},\B{Y}} \quad & \|\B{H}-\tilde{\B{W}}\B{X}\|_F^2 \\
     \text{s.t.} \quad & \boldsymbol{H} \geq 0,\tilde{\boldsymbol{W}}\geq 0, \B{Y}\in\{0,1\}^{k\times n}\nonumber\\
     &\tilde{\boldsymbol{W}}\boldsymbol{1}_m=\boldsymbol{1}_k, \sum_{i,j}\B{Y}_{i,j}\leq \ell \nonumber \\
     & \B{H}_{i,j}\leq \sqrt{b}\B{Y}_{i,j}\quad \forall (i,j)\in[k]\times[n],\nonumber
\end{align}
where $b$ is as defined in Proposition~\ref{ridgepro}.
Note that the last constraint in \eqref{miqpform} does not change the optimal solution because of Proposition \ref{ridgepro}. 

\noindent Problem~\eqref{miqpform} can be formulated and solved (to optimality) by off-the-shelf MIP solvers (e.g., Gurobi, CPLEX, GLPK) for small/moderate instances---however, the runtimes become long as soon as the number of variables are of the order of a few thousand. With efficiency in mind, we present a cutting plane approach to obtain a certifiably optimal solution\footnote{That is, along with delivering a feasible solution, we also present a dual bound (aka lower bound) on the optimal objective value of~\eqref{miqpform}.}. 
To this end, we rewrite \eqref{miqpform} as a binary convex optimization problem:
\begin{align}\label{fcut}
    \min_{\B{Y}} ~~~   F(\B{Y}) ~~~\text{s.t.} ~~~ \B{Y}\in\{0,1\}^{k\times n},~~&\sum_{(i,j)\in [k]\times [n]} \B{Y}_{i,j}\leq \ell 
\end{align}
where, for any $\B{Y} \in [0,1]^{k \times n}$, the objective function $F(\B{Y})$ is defined via the following convex optimization problem: 
\begin{align}\label{ftomin}
    F(\B{Y})=\min_{\B{H},\tilde{\B{W}}} \quad &   \|\B{H}-\tilde{\B{W}}\B{X}\|_F^2 \\
     \text{s.t.} \quad & \boldsymbol{H} \geq 0,~~\tilde{\boldsymbol{W}} \geq 0,~~\tilde{\boldsymbol{W}}\boldsymbol{1}_m=\boldsymbol{1}_k \nonumber \\
     & \B{H}_{i,j}\leq \sqrt{b}\B{Y}_{i,j}\quad \forall (i,j)\in[k]\times[n].\nonumber
\end{align}
Proposition \ref{dualprop} presents some properties of the function $F(\B{Y})$.
\begin{pro}\label{dualprop}
Let $(\B{H}^*,\tilde{\B{W}}^*)$ be an optimal solution to the minimization problem~\eqref{ftomin}. Then, we have the following:
\begin{enumerate}
    \item The function $\B{Y} \mapsto F(\B{Y})$ is convex on $\B{Y} \in [0,1]^{k \times n}$. 
    \item $\B{G}=-\sqrt{b}\B{\Lambda}$ is a subgradient of $F(\B{Y})$, where for $(i,j)\in[k]\times [n]$,
    $$\B{\Lambda}_{i,j}=\begin{cases}
    2(\tilde{\B{W}}^*\B{X}-\B{H}^*)_{i,j} &\mbox{ if } (\tilde{\B{W}}^*\B{X}-\B{H}^*)_{i,j}>0 \\
    0 & \mbox{ if } (\tilde{\B{W}}^*\B{X}-\B{H}^*)_{i,j}\leq 0
    \end{cases}.$$
\end{enumerate}
\end{pro}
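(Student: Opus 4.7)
\textbf{Plan for Proposition~\ref{dualprop}.} The two claims are of quite different flavors: convexity is a routine perturbation argument, whereas identifying the subgradient in closed form needs a careful Lagrangian/KKT analysis. I would treat them in that order and reuse the optimization structure of~\eqref{ftomin} throughout.

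For part~1, I would argue that $F$ is the partial infimum of a jointly convex problem. The objective $\|\B{H}-\tilde{\B{W}}\B{X}\|_F^2$ does not depend on $\B{Z}$ and is convex in $(\B{H},\tilde{\B{W}})$; the constraints $\B{H}\geq 0$, $\tilde{\B{W}}\geq 0$, $\tilde{\B{W}}\B{1}_m=\B{1}_k$ are independent of $\B{Z}$; and the only coupling constraint $\B{H}_{i,j}\leq \sqrt{b}\,\B{Z}_{i,j}$ is linear, hence jointly convex in $(\B{H},\B{Z})$. Taking $\B{Z}_1,\B{Z}_2\in[0,1]^{k\times n}$ with respective minimizers $(\B{H}_1,\tilde{\B{W}}_1)$ and $(\B{H}_2,\tilde{\B{W}}_2)$, and any $t\in[0,1]$, the convex combination $(t\B{H}_1+(1-t)\B{H}_2,\; t\tilde{\B{W}}_1+(1-t)\tilde{\B{W}}_2)$ is feasible for $t\B{Z}_1+(1-t)\B{Z}_2$; convexity of the objective then gives $F(t\B{Z}_1+(1-t)\B{Z}_2)\leq tF(\B{Z}_1)+(1-t)F(\B{Z}_2)$.

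For part~2, I plan to derive the subgradient through Lagrangian duality on the coupling constraint. Introduce $\mu\in\R_{\geq 0}^{k\times n}$ for $\B{H}_{i,j}\leq\sqrt{b}\,\B{Z}_{i,j}$ and keep all other constraints in the primal feasible set $\mathcal{C}$. Let
\begin{equation*}
g(\mu;\B{Z})=\min_{(\B{H},\tilde{\B{W}})\in\mathcal{C}}\Bigl\{\|\B{H}-\tilde{\B{W}}\B{X}\|_F^2+\sum_{i,j}\mu_{i,j}\B{H}_{i,j}\Bigr\}-\sqrt{b}\,\langle\mu,\B{Z}\rangle.
\end{equation*}
Slater's condition holds (take $\B{H}=\B{0}$, any row-stochastic $\tilde{\B{W}}\geq 0$), so strong duality gives $F(\B{Z})=\max_{\mu\geq 0}g(\mu;\B{Z})$. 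For any $\B{Z}'$, the inequality $F(\B{Z}')\geq g(\mu^*;\B{Z}')=F(\B{Z})-\sqrt{b}\,\langle\mu^*,\B{Z}'-\B{Z}\rangle$ shows that $-\sqrt{b}\,\mu^*$ is a subgradient of $F$ at $\B{Z}$ for any optimal dual $\mu^*$. It therefore remains to verify that the choice $\mu^*=\B{\Lambda}$ from the statement is admissible, i.e., satisfies the KKT conditions jointly with $(\B{H}^*,\tilde{\B{W}}^*)$.

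I would check this via the stationarity condition for $\B{H}$: with $\nu\geq 0$ the multiplier for $\B{H}\geq 0$, one has $2(\B{H}^*-\tilde{\B{W}}^*\B{X})_{i,j}+\mu_{i,j}-\nu_{i,j}=0$, together with complementary slackness $\nu_{i,j}\B{H}^*_{i,j}=0$ and $\mu_{i,j}(\B{H}^*_{i,j}-\sqrt{b}\,\B{Z}_{i,j})=0$. A brief case split (both bounds inactive; only the lower bound $\B{H}^*_{i,j}=0$ active with $\B{Z}_{i,j}>0$; only the upper bound active; both active when $\B{Z}_{i,j}=0$) shows that setting $\mu_{i,j}=\B{\Lambda}_{i,j}$ and $\nu_{i,j}=\max\{-2(\tilde{\B{W}}^*\B{X}-\B{H}^*)_{i,j},0\}$ always yields nonnegative multipliers compatible with all KKT relations; nonnegativity of $\tilde{\B{W}}^*\B{X}$ is used in the case $\B{H}^*_{i,j}=0$ and $\B{Z}_{i,j}>0$. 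The degenerate case $\B{Z}_{i,j}=0$ is the only one where the dual is non-unique, but the stated choice $\mu_{i,j}=2(\tilde{\B{W}}^*\B{X})_{i,j}$ still lies in the KKT set; handling this non-uniqueness cleanly is the main (though mild) obstacle, and explains why the statement only claims a subgradient rather than a gradient.
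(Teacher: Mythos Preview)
Your proposal is correct. Part~1 matches the paper's argument exactly (partial minimization of a jointly convex function over a convex set; the paper simply cites Boyd--Vandenberghe, while you spell out the convex-combination argument).

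For Part~2 you take a somewhat different route. The paper introduces an auxiliary variable $\B{U}=\tilde{\B{W}}\B{X}$, forms the \emph{full} Lagrangian with five multiplier blocks, writes down the dual of~\eqref{ftomin} explicitly, and then applies Danskin's theorem to read off $-\sqrt{b}\,\B{\Lambda}$ as a subgradient, identifying $\B{\Lambda}$ via the stationarity relation $\B{M}_3=2(\B{H}^*-\tilde{\B{W}}^*\B{X})$ together with $\B{\Lambda}=\max\{0,-\B{M}_3\}$. You instead dualize only the $\B{Z}$-coupling constraint, keep the remaining constraints in the primal feasible set, and obtain the subgradient directly from the inequality $F(\B{Z}')\geq g(\mu^*;\B{Z}')=F(\B{Z})-\sqrt{b}\langle\mu^*,\B{Z}'-\B{Z}\rangle$; you then verify by a KKT case split that the stated $\B{\Lambda}$ is an admissible optimal multiplier. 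Your approach is more elementary and avoids tracking the extra dual variables; the paper's approach is more mechanical and sidesteps the case analysis. One small remark: in your ``$\B{H}^*_{i,j}=0$, $\B{Z}_{i,j}>0$'' case, nonnegativity of $\tilde{\B{W}}^*\B{X}$ is not actually needed---optimality of $\B{H}^*$ in the $\B{H}$-subproblem already forces $(\tilde{\B{W}}^*\B{X})_{i,j}\leq 0$, so $(\tilde{\B{W}}^*\B{X}-\B{H}^*)_{i,j}\leq 0$ and $\B{\Lambda}_{i,j}=0$ regardless.
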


\noindent The function $F(\B{Y})$ is convex and subdifferentiable. Specifically, for any $\B{Y}_0\in\R^{k\times n}$ and any subgradient $\B{G}_0\in\partial F(\B{Y}_0)$, \begin{equation}\label{linearlower}
    F(\B{Y})\geq F(\B{Y}_0) + \langle \B{G}_0 ,\B{Y}-\B{Y}_0\rangle.
\end{equation}
\noindent {\bf MIP Algorithm based on outer approximation:} We present an outer approximation algorithm~\citep{duran1986outer} 
to solve the binary convex program~\eqref{fcut}. This algorithm starts from an initial point $\B{Y}_0$ which is feasible for~\eqref{fcut}.  At iteration $t$, using a list of subgradient-based inequalities~\eqref{linearlower}, 
we consider the following piecewise linear lower bound of $F(\B{Y})$:
\begin{equation}\label{piecewise}
    F(\B{Y})\geq \max\left\{ F(\B{Y}_0) + \langle \B{G}_0 ,\B{Y}-\B{Y}_0\rangle,\cdots, F(\B{Y}_{t-1}) + \langle \B{G}_{t-1} ,\B{Y}-\B{Y}_{t-1}\rangle\right\}
\end{equation}
where $\B{Y}_{0},\cdots,\B{Y}_{t-1}$ are feasible for Problem~\eqref{fcut}
and $\B{G}_t\in\partial F(\B{Y}_t)$ for all $t$. We define $\B{Y}_t$ as a minimizer of the right hand side of~\eqref{piecewise} under the constraints of Problem~\eqref{fcut}. Mathematically, this can be written as a Mixed Integer Linear Program (MILP)
    \begin{align}\label{outerMILP}
    (\B{Y}_t,\eta_t) \in\argmin_{\B{Y},\eta} \quad  & \eta \\
     \text{s.t.} \quad & \B{Y}\in\{0,1\}^{k\times n},\eta\in\mathbb{R}\nonumber\\
     & \eta \geq F(\B{Y}_i)+\langle\B{G}_{i},\B{Y}-\B{Y}_{i} \rangle\quad i = 0,\cdots, t-1\nonumber\\
     & \sum_{(i,j)\in [k]\times [n]} \B{Y}_{i,j}\leq \ell\nonumber .
    \end{align}
The optimal objective value of~\eqref{outerMILP} is a lower bound (aka dual bound) for~\eqref{fcut}; and these lower bounds improve as the iterations progress (i.e., $t$ increases).    
As the feasible set of problem \eqref{fcut} is finite, after finitely many iterations $t$, an optimal solution to~\eqref{fcut} is found. The optimality gap (OG)  of the outer approximation can be calculated as $\text{OG}=(\text{UB}-\text{LB})/\text{UB}$ where $\text{LB}$ is the current (and the best) lower bound achieved by the piecewise approximation and $\text{UB}$ is the best upper bound for~\eqref{fcut} found so far. We summarize the procedure in Algorithm \ref{outer} where, `$\text{tol}$' denotes a pre-specified tolerance level.

    \begin{algorithm}[h]
\SetAlgoLined
$t = 1$\\
\While {$\text{OG} > \text{tol}$}{
$(\B{Y}_t,\eta_t)$ are solutions of \eqref{outerMILP}.\\
$F_{\text{best}}=\min_{i=0,\cdots,t-1} F(\B{Y}_i)$\\
$\text{OG} = (F_{\text{best}}-\eta_t)/F_{\text{best}}$\\
$t = t + 1$
}
 \caption{An outer approximation method to solve~\eqref{fcut}}
 \label{outer}
\end{algorithm}

 The optimization Problem \eqref{ftomin} is convex in $(\B{H}, \tilde{\B{W}})$ and we use an accelerated proximal gradient method~\citep{fista} to solve it.

\begin{rmk} We note that our algorithm is different from that of \citet{bertsimas2020sparse} who consider the sparse linear regression problem with an additional ridge regularization. \citet{bertsimas2020sparse} use an outer approximation algorithm to solve an equivalent convex integer program, with an explicit closed-form expression. In contrast, in our work, the function $F(\B{Y})$ is given (implicitly) by the optimal objective value of a convex optimization problem --- there is no closed form expression for $F(\B{Y}).$ Furthermore, our formulation of~\eqref{miqpform} uses the binary variable as a linear constraint --- this is different from~\cite{bertsimas2020sparse} where, the binary variable appears as a nonlinear expression within the objective function.  
 \end{rmk}
\subsubsection{Quantifying the effect of a MIP-based warm-start}As discussed earlier, we use MIP-techniques to obtain a solution to Problem~\eqref{infilambda}, which is subsequently used to initialize our coordinate descent algorithm (i.e., Algorithm~\ref{algorithm1}) for Problem~\eqref{archl0dual}. The MIP-approach provides an optimality certificate (via dual bounds) for Problem~\eqref{infilambda} and Proposition~\ref{OGprop} below presents a guarantee on using such a initialization.
 \begin{pro}\label{OGprop}
 Let $\B{H}^*_{\infty}$ be the solution returned by Algorithm~\ref{outer} with the optimality gap $\text{OG}$. Moreover, let $\B{H}^*_{\lambda}$ be the solution returned by Algorithm~\ref{algorithm1} initialized with $\B{H}^*_{\infty}$ and $\hat{\B{H}}_{\lambda}$ be the optimal solution to problem~\eqref{archl0dual}. Then,
\begin{align}
    \frac{ \Phi_{\lambda}(\B{H}^*_{\lambda}) - \Phi_{\lambda}(\hat{\B{H}}_{\lambda}) }{\Phi_{\lambda}(\B{H}^*_{\lambda})}\leq \frac{D(\B{X},\B{H}^*_{\infty})+\lambda(\text{OG})D({\B{H}}^*_{\infty},\B{X})}{\lambda(1-\text{OG})D({\B{H}}^*_{\infty},\B{X})}
\end{align}
 where recall that $\Phi_{\lambda}(\cdot)$ is the objective function in problem~\eqref{archl0dual}.
 \end{pro}
When $\lambda$ is sufficiently large, we have
 \begin{align}
    \frac{\Phi_{\lambda}(\B{H}^*_{\lambda}) - \Phi_{\lambda}(\hat{\B{H}}_{\lambda})}{\Phi_{\lambda}(\B{H}^*_{\lambda})}\leq \frac{\text{OG}}{1-\text{OG}} + \epsilon_{\lambda},
\end{align}
where, $\epsilon_{\lambda}$ is of the order $\mathcal{O}(1/\lambda)$.
Note that when our MIP algorithm delivers a near-optimal solution, OG will be close to zero --- hence, $\B{H}^*_{\lambda}$ will be a near-optimal solution to problem~\eqref{archl0dual}. The proposition presents theoretical support towards why our initialization framework appears to work well in practice---this is also seen in 
our numerical experiments in Section~\ref{numericalex}.

\subsection{Improving Algorithm~\ref{algorithm1} with Local Search}\label{lsalg}
As discussed in Section~\ref{cdalg}, the block CD method (Algorithm~\ref{algorithm1}) is guaranteed to deliver a stationary point for Problem~\eqref{constrainedtoslove}. We present some heuristics to improve the solution quality based on local search, drawing inspiration from the work of~\cite{beck,hazimeh2020fast} who also explore local search based updates for a different problem.

\noindent Once we are at a stationary point delivered by Algorithm~\ref{algorithm1}, our local search algorithm swaps a coordinate in the support of $\B{H}$ with a coordinate from outside the support. That is, a nonzero coordinate of $\B{H}$ is set to zero and a zero coordinate of $\B{H}$ is allowed to become nonzero. Then, the optimization is solely done on the coordinate entering the support. If this optimization leads to a lower objective value, we retain the new support. Mathematically, let $(\B{H}^*,\B{W}^*,\tilde{\B{W}}^*)$ be a feasible solution of problem \eqref{constrainedtoslove} with $\|\B{H}^*\|_0=\ell$. This solution can be an output of Algorithm \ref{algorithm1}. Suppose $(i_1,j_1)\in S(\B{H}^*)$ (here, $S(\B{H}^*)$ is the support of $\B{H}^*$) leaves the support and $(i_2,j_2)\notin S(\B{H}^*)$ enters the support. We perform an optimization on the coordinate $(i_2,j_2)$ of $\B{H}$ to decide whether this change in the support improves the objective function. Let $\B{E}^{i_1,j_1}$ be a matrix with all entries equal to zero except coordinate $(i_1,j_1)$ equal to one. We denote
$\B{H}=\B{H}^*-\B{H}^*_{i_1,j_1}\B{E}^{i_1,j_1}$ as the solution with coordinate $(i_1,j_1)$ removed from the support.
The candidate solution with the new support has the form $\B{H}+t\B{E}^{i_2,j_2}$ for $t\geq 0$. This leads to the following problem:
\begin{align}\label{constrainedtosloveonedim}
     \min_{\boldsymbol{W},\boldsymbol{\tilde{\boldsymbol{W}}},t} ~~ &  \|\boldsymbol{X}-\boldsymbol{W}\boldsymbol{H}-t\B{W}\B{E}^{i_2,j_2}\|_F^2+\lambda \|\boldsymbol{H}+t\B{E}^{i_2,j_2}-\tilde{\boldsymbol{W}}\boldsymbol{X}\|_F^2\\
     \text{s.t.} ~~ & t\geq 0,\boldsymbol{W} \geq 0,\tilde{\boldsymbol{W}}\geq 0\nonumber \\
     & \boldsymbol{W}\boldsymbol{1}_k=\boldsymbol{1}_m,\tilde{\boldsymbol{W}}\boldsymbol{1}_m=\boldsymbol{1}_k\nonumber
\end{align}
where, we are optimizing over $(\B{W}, \tilde{\B{W}}, t)$ for a given $(i_{2}, j_{2})$ and $(i_{1}, j_{1})$.
For a fixed value of $t$, the optimal values of $\B{W}$ and $\tilde{\B{W}}$ in~\eqref{constrainedtosloveonedim} are given as
\begin{align} 
    \hat{\B{W}}\in& \argmin_{\B{W}}~~ \|\B{X}-\B{W}(\B{H}+t\B{E}^{i_2,j_2})\|_F^2~~~~\text{s.t.}~~\B{W}\geq 0;~~ \boldsymbol{W}\boldsymbol{1}_k=\boldsymbol{1}_m \label{wupdate1} \\
    \hat{\tilde{\B{W}}}\in& \argmin_{\tilde{\B{W}}}~~ \|\B{H}+t\B{E}^{i_2,j_2} -\tilde{\B{W}}\B{X}\|_F^2~~~~\text{s.t.}~~\tilde{\B{W}}\geq 0;~~ \tilde{\boldsymbol{W}}\boldsymbol{1}_m=\boldsymbol{1}_k. \label{wupdate2}
\end{align}
Problems \eqref{wupdate1} and \eqref{wupdate2} are convex and can be efficiently solved by standard first order methods such as proximal gradient. 
Note that these first order methods also benefit from warm-starts available from prior estimates of $(\B{W}, \tilde{\B{W}})$. 
Once $\B{W}$ and $\tilde{\B{W}}$ are updated by \eqref{wupdate1} and \eqref{wupdate2}, the value of $t$ that minimizes \eqref{constrainedtosloveonedim} with $\B{W}=\hat{\B{W}}$ and $\tilde{\B{W}}=\hat{\tilde{\B{W}}}$ is:
\begin{equation}\label{tupdate}
t = \max\left\{\frac{\sum_{r=1}^ m \B{U}_{r,j_2}\B{W}_{r,i_2}-\lambda \B{V}_{i_2,j_2}}{\lambda+\|\B{W}_{.,i_2}\|_2^2},0\right\}
\end{equation}
 where $\B{U}=\B{X}-\B{W}\B{H}$ and $\B{V}=\B{H}-\tilde{\B{W}}\B{X}$.\\
\noindent 
We use an alternating optimization scheme where the three updates \eqref{wupdate1},\eqref{wupdate2} and \eqref{tupdate} are performed sequentially until convergence. These updates result in a descent method by construction, though there may not be a strict decrease in the objective value (in which case, the swap may not result in a better solution).  

\noindent In the discussion above, we assumed a fixed pair of indices $(i_{1}, j_{1})$ and $(i_{2}, j_{2})$. Ideally, we would like to try all possible choices of such indices and consider the one that leads to the maximal decrease in objective value (if any).
Instead of using an exhaustive search over all such pairs, we use a heuristic to select a suitable pair of indices.
We choose $(i_1,j_1)$ to be the smallest nonzero entry in $\B{H}^*$:
\begin{equation}\label{leaving}
    (i_1,j_1)\in\argmin_{(i,j)\in S(\B{H}^*)}\B{H}^*_{i,j}.
\end{equation}
For the pair $(i_{2}, j_{2})$ from outside the current support of $\B{H}$, we choose the coordinate of $\B{H}^*$ that has the smallest (most negative) gradient of the objective function:
\begin{equation}\label{entering}
    (i_2,j_2)\in\argmin_{(i,j)\in[k]\times[n]\setminus S(\B{H}^*)}\frac{\partial}{\partial \B{H}_{i,j}}\Psi(\B{H}^*,\B{W}^*,\tilde{\B{W}}^*).
\end{equation}
The overall procedure of local search is shown in Algorithm \ref{cwsalg}. 

\begin{algorithm}[!t]
\SetAlgoLined
Initialize with $\B{H}^*,\B{W}^*,\tilde{\B{W}}^*$.\\
 \While{not converged}{

\If {$\|\boldsymbol{H}^*\|_0<\ell$}{
Choose $(i_2,j_2)$ as in \eqref{entering}.\\
$\B{H}=\B{H}^*$\\
} \Else {
Choose $(i_1,j_1)$ as in \eqref{leaving} and $(i_2,j_2)$ as in \eqref{entering}.\\
$\B{H}=\B{H}^* -\B{H}^*_{i_1,j_1}\B{E}^{i_1,j_1}$\\
}
 \While{not converged}{
 Update $\B{W}$, $\tilde{\B{W}}$ and $t$ via \eqref{wupdate1}, \eqref{wupdate2} and \eqref{tupdate}.\\
 }
 \If {$\Psi(\B{H}+t\B{E}^{i_2,j_2},\B{W},\tilde{\B{W}})<\Psi(\B{H}^*,\B{W}^*,\tilde{\B{W}}^*)$} {
 $\B{H}^*=\B{H}+t\B{E}^{i_2,j_2}, \B{W}^*=\B{W}, \tilde{\B{W}}^*=\tilde{\B{W}}$
 }
}
 \caption{A Local Search improvement for Algorithm~\ref{algorithm1}}
 \label{cwsalg}
\end{algorithm}

\subsection{Tuning Parameter Selection}
Our SAA model has three tuning parameters:  rank of the factorization ($k$), sparsity level ($\ell$) and the archetypal regularization tuning parameter ($\lambda$). 
The problem of selecting the rank $k$ in a factorization arises in earlier work on NMF: For example, \citet{Owen_bi-cross-validationof,bro2008cross} discuss cross-validation-type approaches to guide the selection of $k$.  
The choice of $\ell$ controls the sparsity of the latent factor $\B{H}$---this can improve the interpretability of the model while potentially decreasing the accuracy and robustness of the model (especially, if the underlying factors are not sparse), as discussed in Section~\ref{SAA}. In many cases however, the choices of $k,\ell$ are made by the practitioner depending upon problem-specific considerations.  
 Assuming $k$ and $\ell$ are given (or have been pre-specified by the user), we consider the problem of selecting $\lambda$. To this end, we propose a validation-based scheme. Let $\mathcal{S}\subseteq [m]$ be a held-out subset of the data. Suppose $\B{H}^*_{\lambda}$ be the estimator from problem~\eqref{archl0dual} obtained on the data points $i\in[m]\setminus \mathcal{S}$. We choose the value of $\lambda$ that minimizes the validation loss $\text{V}_{\lambda}(\mathcal{S})$, which is defined as
\begin{equation}\label{validloss}
    \text{V}_{\lambda}(\mathcal{S})=\sum_{i\in\mathcal{S}} D(\B{X}_{i,.},\B{H}^*_{\lambda}).
\end{equation}
 Intuitively, the validation loss above captures the fact that a robust solution can describe the data better than other solutions. Proposition~\ref{loothm} formalizes this intuition.  
 \begin{pro}\label{loothm}
 Let $\B{H}^{-\mathcal{S}}$ be the estimator trained on the training data (with the validation set $\mathcal{S}$ removed). Moreover, let 
$$d(\boldsymbol{X})=\max_{i\in[m]}\min_{j\in[m]\setminus\mathcal{S}}\|\boldsymbol{X}_{i,.}-\boldsymbol{X}_{j,.}\|_2.$$
Then,
\begin{multline*}
    D(\boldsymbol{X}_{\mathcal{S}},\boldsymbol{H}_{(-i)})^{1/2}\leq \sqrt{|\mathcal{S}|}\bigg[ d(\boldsymbol{X}) + \max_{j\in[m]}\|\boldsymbol{Z}_{j,.}\|_2 +\\\min\left\{ k\|\boldsymbol{H}_0\|_F + \mathcal{L}(\boldsymbol{H}^{-\mathcal{S}},\boldsymbol{H}_0)^{1/2},\mathcal{L}(\boldsymbol{H}_0,\boldsymbol{H}^{-\mathcal{S}})^{1/2}\right\}\bigg]
\end{multline*}
where $\B{X}_{\mathcal{S}}\in\R^{|\mathcal{S}|\times n}$ is the validation set.
 \end{pro}
 Proposition~\ref{loothm} shows that a more robust solution leads to a lower validation loss $\text{V}_{\lambda}$, and we seek to choose a $\lambda$ that results in the smallest validation loss.

\section{Numerical Experiments}\label{numericalex}
In this section, we discuss results of our numerical experiments on synthetic and real data and investigate how our framework performs in practice. Our experiments are done on a computer equipped with \verb+Intel(R) Core(R) i7 6700HQ CPU @ 2.60GHz+, running \verb+Microsoft(R)+ \verb+Windows(R) 10+ and using 16GB of RAM. We implemented all of our algorithms in \verb+Julia+ and we use \verb+Gurobi(R)+ to solve MILPs arising in our initialization scheme. An implementation of our framework in Julia is available at:

~~~~~~~~~~~~~\url{https://github.com/kayhanbehdin/SparseAA}.

\subsection{Synthetic Data}\label{synthetic}
In this section, we consider synthetic data to validate the theory we developed in Sections \ref{SAA} and \ref{saaalgorithm}; and gather further insights into the operating characteristics of SAA.

\smallskip

\noindent {\bf Data set generation:} The entries of $\boldsymbol{H}_0$ are drawn iid from $\text{Unif}[0,1]$ and $20\%$ of entries are set to zero at random to produce a matrix with at most $0.8nk$ nonzero entries. Independent of $\B{H}_0$, the entries of $\boldsymbol{W}_0$ are drawn iid from $\text{Unif}[0,1]$ and each row is normalized to sum to one. The noiseless data is $\boldsymbol{X}_0=\boldsymbol{W}_0\boldsymbol{H}_0$ and the noisy data is produced as $\boldsymbol{X}=\max\{\boldsymbol{X}_0+\boldsymbol{Z},0\}$ where entries of $\boldsymbol{Z}$ are from an independent Gaussian ensemble with mean zero and variance $\sigma^2_{z}$.

\subsubsection{Understanding robustness}\label{synthnumundrob}
We compare the performance of our algorithm with other sparse NMF algorithms in terms of robustness of the solution. 
We consider algorithms by \citet{10.1093/bioinformatics/btm134} (shown as KP), \citet{peharz2012sparse} (shown as PP) and \citet{hoyer2004non} (shown as Hoyer). To quantify the usefulness of having a sparse solution over a dense one, we compare our algorithms with the archetypal regularization framework of~\citet{javadi2019nonnegative}, which delivers a dense solution (shown as AA).
We set $m=200,k=15,n=5000, \lambda=1$. 
We consider  settings for our ablation studies: 
\begin{compactitem}
        \item We keep the sparsity level $\ell$ fixed and change the noise level $\sigma_z$
    \item We keep $\sigma_z$ fixed and vary the sparsity level $\ell$
    \item We keep all parameters fixed except $k$ and explore how changing the rank can affect the performance.
    \item We investigate the effect of varying $\lambda$ in our framework
\end{compactitem}

\smallskip

\noindent {\bf{Robustness versus varying $\sigma_{z}$}:} First, we set $\ell/nk=0.8$ and set the tuning parameters for different algorithms to get solutions that have $0.8nk$ nonzeros. This can be thought of as a case where the sparsity level is well-specified. Specifically, as KP considers an $\ell_1$ regularized version of NMF, we start with a small value of the $\ell_1$ regularization parameter and gradually increase it till we reach the target sparsity-level. PP uses an $\ell_0$ constrained version of original NMF (without archetypal regularization) and we set the constraint such that a solution has at most $\ell$ nonzeros. 
The sparsity constraint of Hoyer is set such that the result has $\ell$ nonzeros. We vary $\sigma_z$ and plot the average value of normalized weak ($\mathcal{L}(\B{H}_0,\B{H})/\|\B{H}_0\|_F^2$) and strong ($\mathcal{L}(\B{H},\B{H}_0)/\|\B{H}_0\|_F^2$) robustness quantities\footnote{Note that the robustness error values are reported relative to the baseline estimator $\B{H}=\B{0}$.} over 10 independent repetitions. We also report the average runtime (in seconds) for different methods. The runtime for our approach corresponds to the continuation framework. To compute the initialization which is a one-time cost, we cap the MIP procedure to 3 minutes, and do not include this runtime. The results for this scenario are shown in Figure~\ref{fig:synt1}. We observe that the SAA solutions almost always outperforms other algorithms in terms of strong and weak robustness metrics. Specifically, the difference between SAA and other algorithms is most noticeable when the noise is small. This is expected as other (sparse) algorithms in our experiments do not use any regularization that results in robustness. Moreover, solutions of SAA become less robust as noise is increased, as anticipated by Theorem~\ref{robustnessthm} and Proposition~\ref{penalizedpro}. In addition, SAA has even smaller error compared to AA, while the AA solution is almost fully dense. This shows that the sparsity constraint helps us to obtain a better solution as the underlying model is also sparse. In terms of runtime, most algorithms are generally of the same order of magnitude, but we note that our method obtains a path of solutions due to our continuation scheme, and therefore, can be faster for a given value of $\lambda$.

\begin{figure*}[t!]

     \centering
     \begin{tabular}{lclclc}

&  Weak Robustness & & Strong Robustness & & Runtime \\
     \rotatebox{90}{\tiny~~~~~~~~~~$\mathcal{L}(\B{H}_0,\hat{\B{H}})/\|\B{H}_0\|_F^2$}& \includegraphics[width=0.26\linewidth,trim =1cm 0cm 1cm 0cm, clip = true]{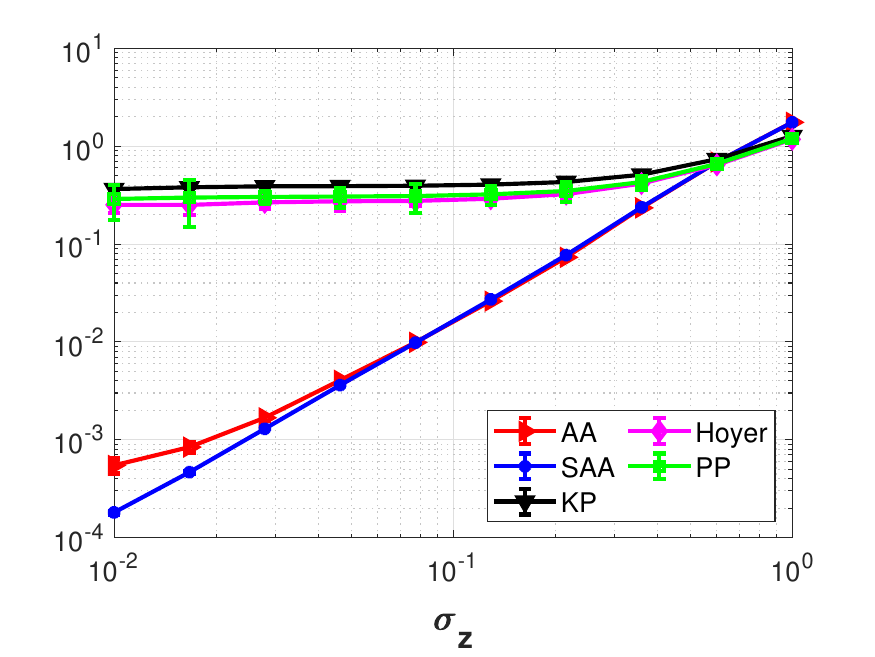}& 
 \rotatebox{90}{\tiny~~~~~~~~~~$\mathcal{L}(\hat{\B{H}},\B{H}_0)/\|\B{H}_0\|_F^2$}&        \includegraphics[width=0.26\linewidth,trim =1cm 0cm 1cm 0cm, clip = true]{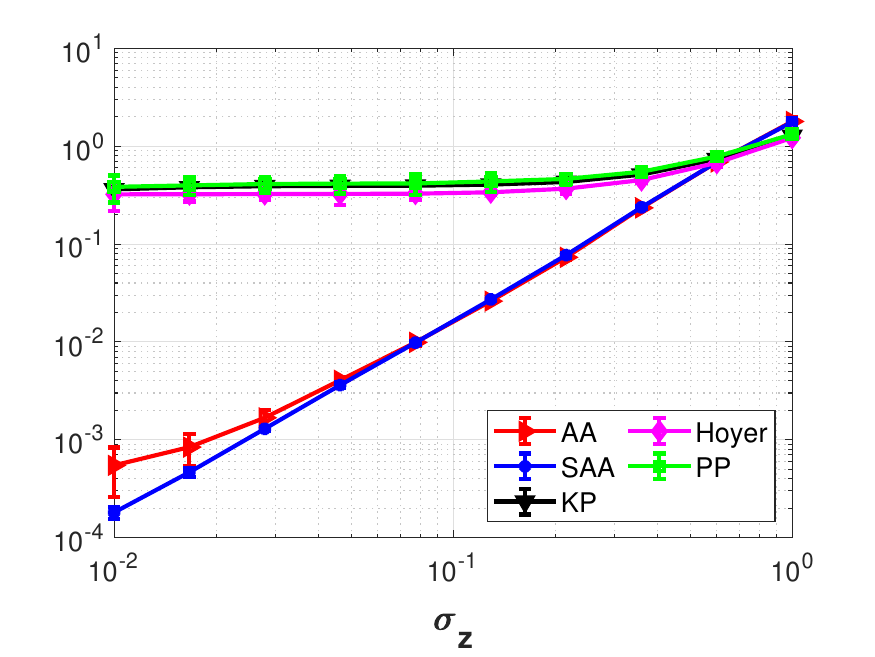}
& &\includegraphics[width=0.26\linewidth,trim =1cm 0cm 1cm 0cm, clip = true]{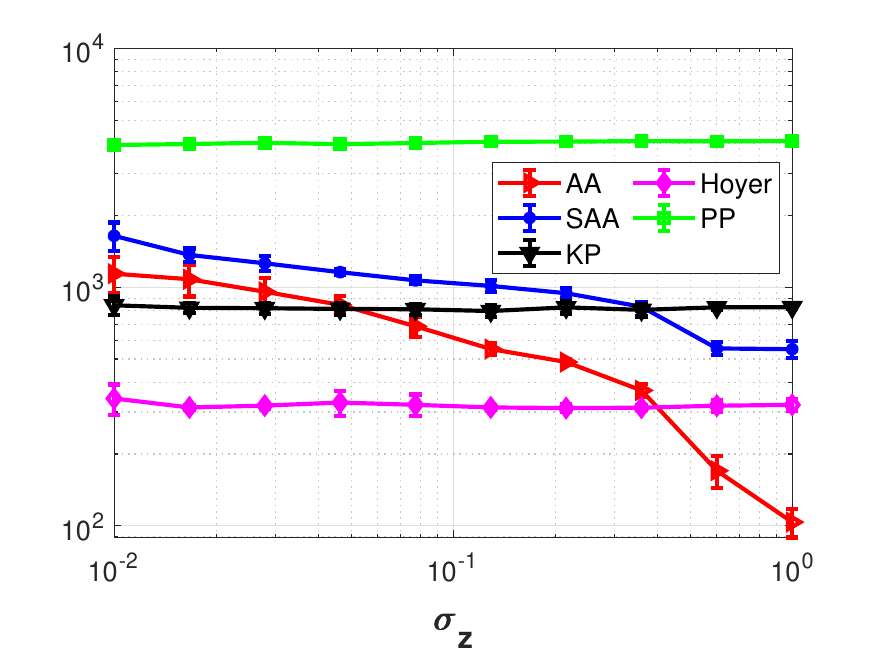} \\
\end{tabular}
        \caption{\small  Effect of varying noise on the performance of different algorithms for the well-specified case in Section~\ref{synthetic}. }
        \label{fig:synt1}
\end{figure*}

\begin{figure*}[t!]

     \centering
     \begin{tabular}{lclclc}

&  Weak Robustness & & Strong Robustness & & Runtime \\
     \rotatebox{90}{\tiny~~~~~~~~~~$\mathcal{L}(\B{H}_0,\hat{\B{H}})/\|\B{H}_0\|_F^2$}& \includegraphics[width=0.26\linewidth,trim =1cm 0cm 1cm 0cm, clip = true]{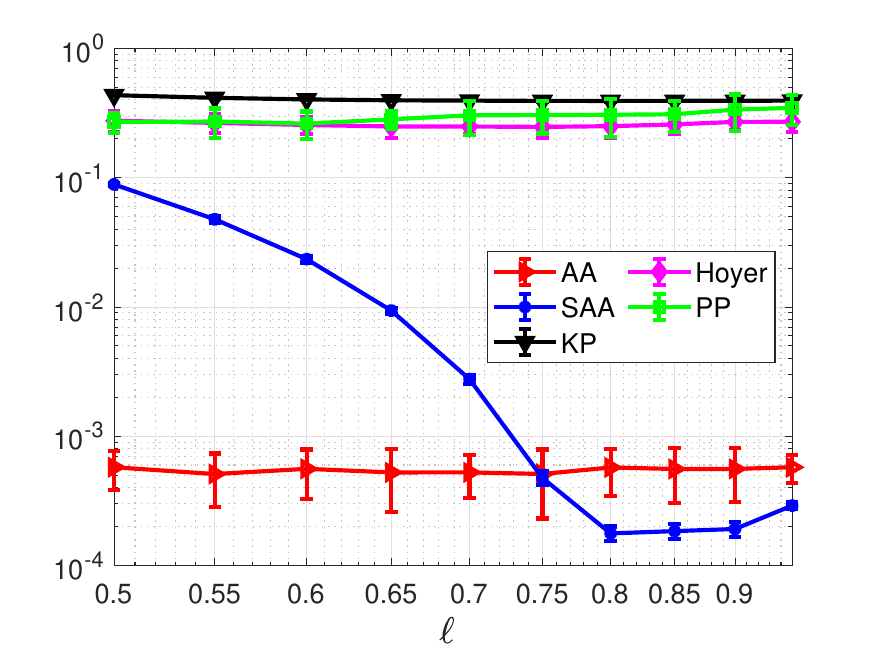}& 
 \rotatebox{90}{\tiny~~~~~~~~~~$\mathcal{L}(\hat{\B{H}},\B{H}_0)/\|\B{H}_0\|_F^2$}&        \includegraphics[width=0.26\linewidth,trim =1cm 0cm 1cm 0cm, clip = true]{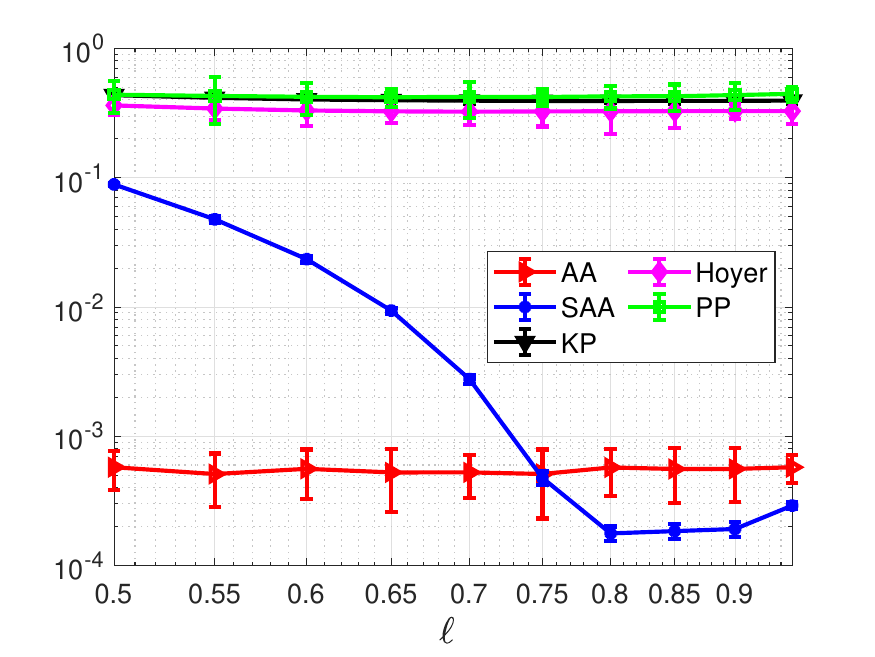}
& &\includegraphics[width=0.26\linewidth,trim =1cm 0cm 1cm 0cm, clip = true]{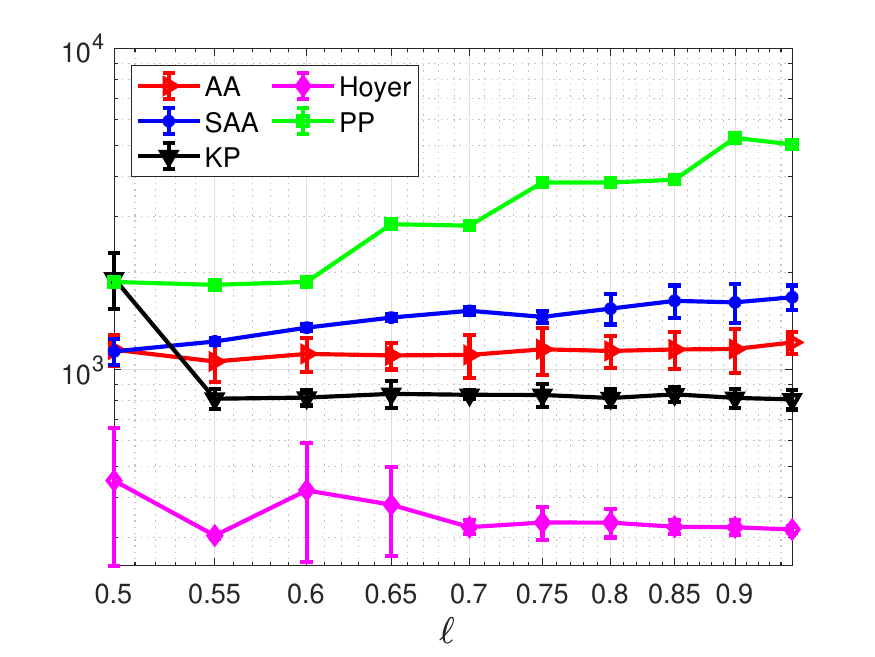} \\
\end{tabular}
        \caption{\small   Effect of varying sparsity $\ell$ on the performance of different algorithms in Section~\ref{synthetic}. }
        \label{fig:synt3}
\end{figure*}

\begin{figure*}[t!]

     \centering
     \begin{tabular}{lclclc}
  
&  Weak Robustness & & Strong Robustness & & Runtime \\
     \rotatebox{90}{\tiny~~~~~~~~~~$\mathcal{L}(\B{H}_0,\hat{\B{H}})/\|\B{H}_0\|_F^2$}& \includegraphics[width=0.26\linewidth,trim =1cm 0cm 1cm 0cm, clip = true]{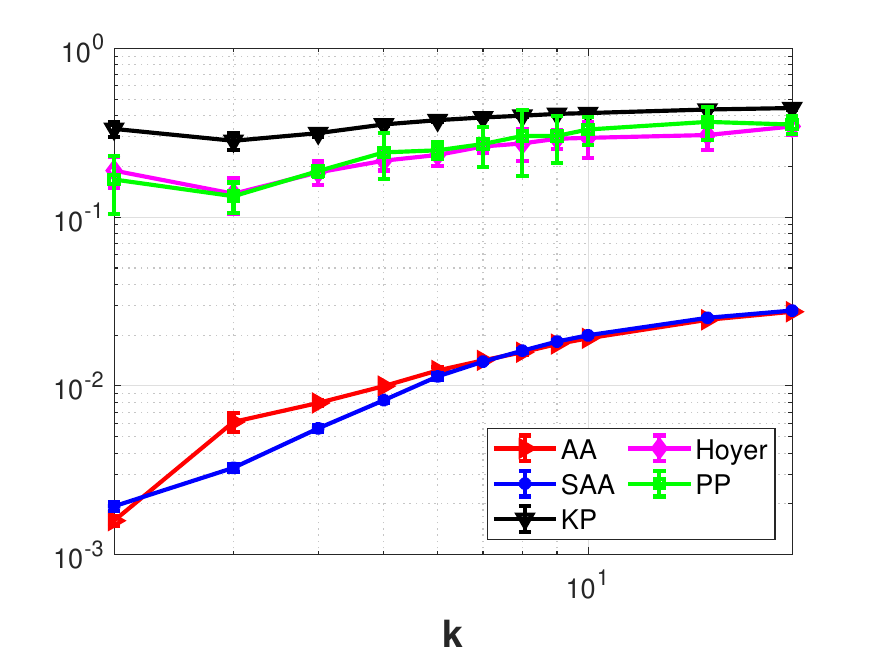}& 
 \rotatebox{90}{\tiny~~~~~~~~~~$\mathcal{L}(\hat{\B{H}},\B{H}_0)/\|\B{H}_0\|_F^2$}&        \includegraphics[width=0.26\linewidth,trim =1cm 0cm 1cm 0cm, clip = true]{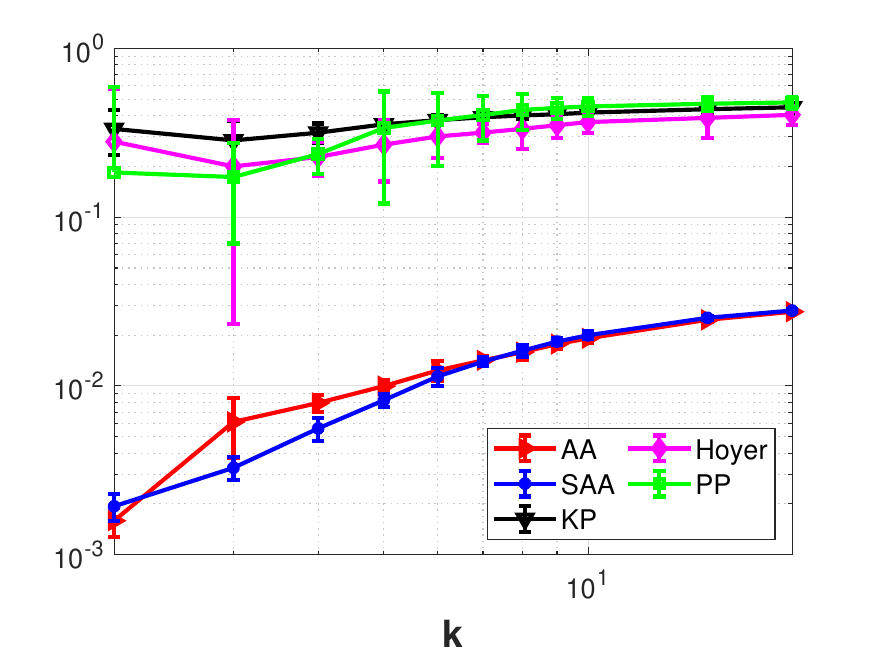}
& &\includegraphics[width=0.26\linewidth,trim =1cm 0cm 1cm 0cm, clip = true]{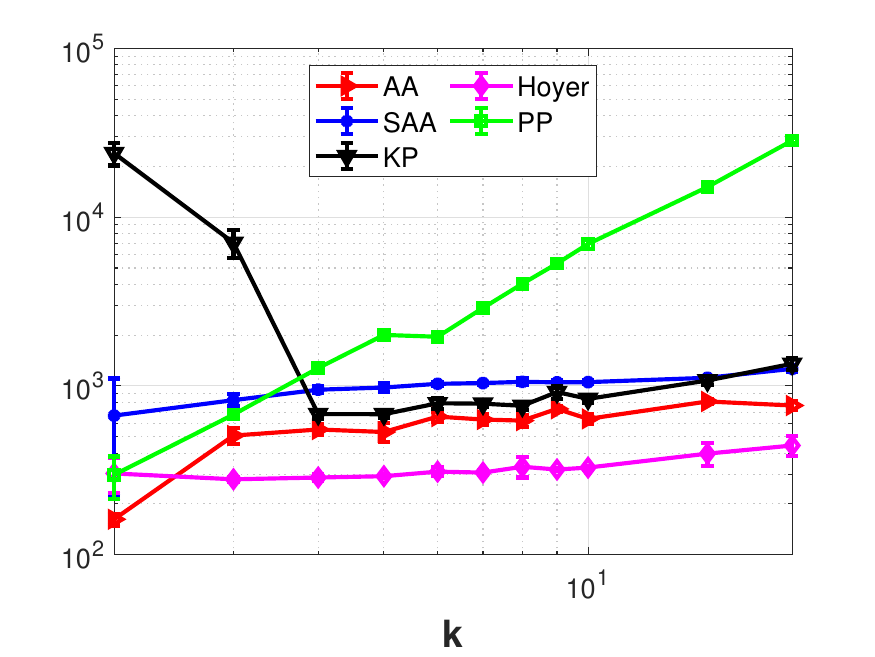} \\
\end{tabular}
        \caption{\small  Effect of varying $k$ on the performance of different algorithms in Section~\ref{synthetic}. }
        \label{fig:synt4}
\end{figure*}

\noindent {\bf{Robustness versus varying $\ell$}:} To compare the performance of different algorithms for varying values of $\ell$, we do another set of experiments. We consider a setup similar to the previous experiment. However, we fix $\sigma_z=0.01$ and change the value of $\ell$ (while keeping the underlying model sparsity budget fixed at $0.8nk$). This shows how well different algorithms can deal with sparsity. The results for this case are shown in Figure~\ref{fig:synt3}. We see that SAA outperforms other sparse algorithms and is the most (strongly and weakly) robust method among sparse ones. In addition, we observe that as $\ell$ is decreased, the solution becomes less robust as anticipated by Theorem \ref{robustnessthm}. Another interesting comparison can be done between SAA and AA. It can be observed that when $\ell/nk\geq 0.75$, SAA outperforms AA, and the best robustness is achieved for $\ell/nk=0.8$ which corresponds to the true sparsity level.

\noindent {\bf{The effect of varying $k$}:} 
For the next set of experiments, we set $\ell=0.8nk$ and $\sigma_z=0.1$, with other parameters as before. However, we vary the value of $k$ (for the underlying model and the algorithm) to see the effect of rank on the results. We report the results for this case in Figure~\ref{fig:synt4}. Overall, we see that increasing $k$ results in more error as expected. Specifically, from this figure we can observe that both robustness quantities for SAA scale overall as $\mathcal{L}=\mathcal{O}(k^{\alpha})$ where $1.5\leq\alpha\leq 2$. Comparing this to what our theory predicts in Proposition~\ref{sepinexact}, we see that the scaling of the robustness quantities in our theory and experiments match, showing that our theoretical bounds are tight. 

\noindent {\bf{The effect of varying $\lambda$}:} To show how changing the parameter $\lambda$ affects the SAA algorithm, we conduct another set of experiments. We set $\sigma_z=1,\ell=0.5nk$ and choose $\lambda$ from a logarithmic grid with 100 points between $0.0625$ and $12.5$. In addition, we draw a validation set of size $m_{\text{validation}}=m=200$ to calculate the validation loss defined in~\eqref{validloss}. The average results for this setting are shown in Figure~\ref{fig:synt_valid}. The left panel in this figure shows the validation loss, the middle panel shows the weak robustness quantity and the right panel shows the strong robustness quantity. As it can be seen, all three quantities are minimized for some value of $\lambda$ in the interval $[1, 2]$ which is close to the value of $\lambda$ we consider.

\begin{figure*}[t!]
     \centering
     \begin{tabular}{lclclc}
&  Validation Loss & & Weak Robustness & & Strong Robustness\\
     \rotatebox{90}{\tiny ~~~~~~~~~~~~~~~~~~$\text{V}_{\lambda}$}& \includegraphics[width=0.26\linewidth,trim =0.7cm 0cm 1.3cm 0cm, clip = false]{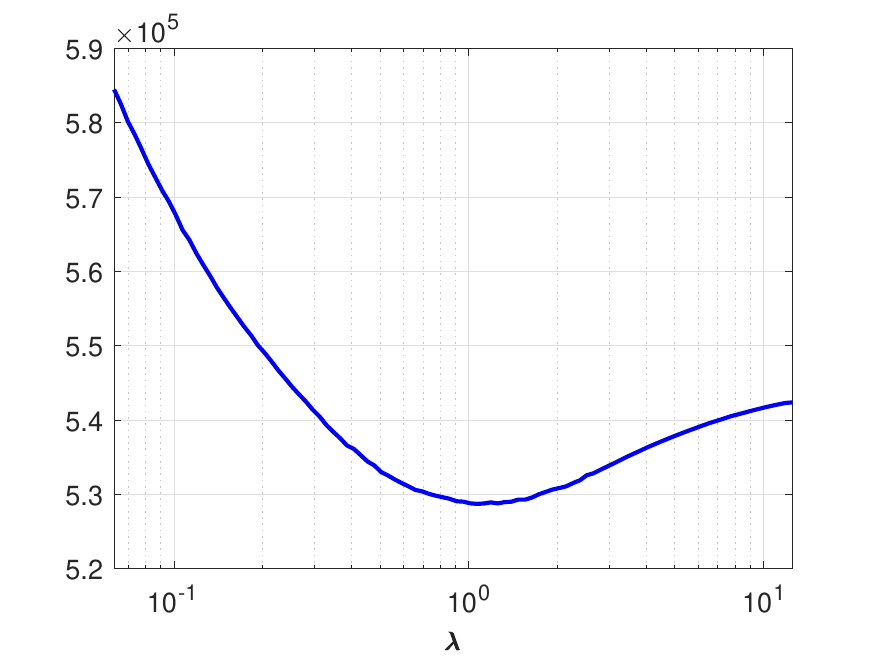}& 
 \rotatebox{90}{\tiny~~~~~~~~~~~~~~~$\mathcal{L}(\B{H}_0,\hat{\B{H}})$}&        \includegraphics[width=0.26\linewidth,trim =0.7cm 0cm 1.3cm 0cm, clip = true]{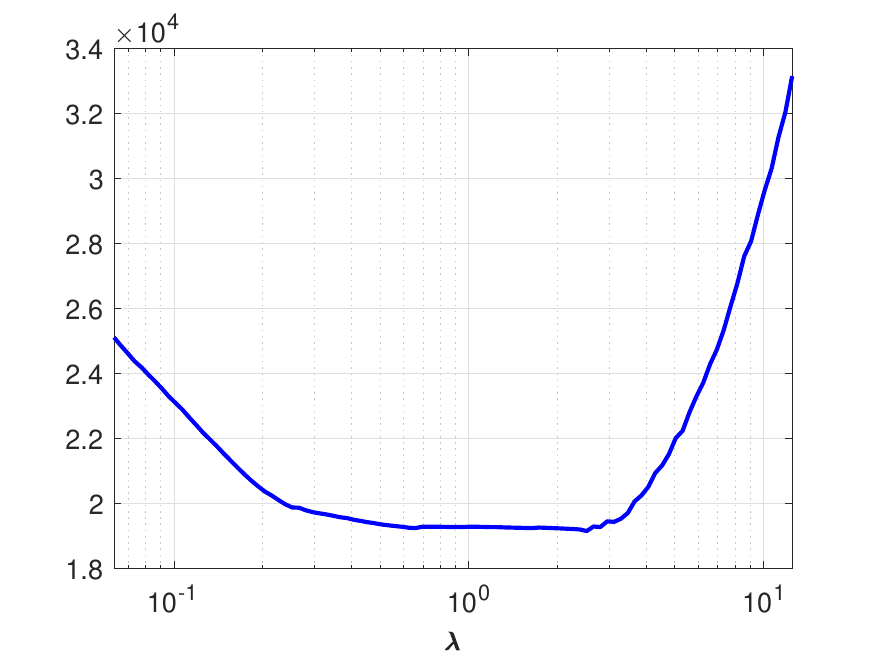} &
 \rotatebox{90}{\tiny~~~~~~~~~~~~~~~$\mathcal{L}(\hat{\B{H}},\B{H}_0)$}&        \includegraphics[width=0.26\linewidth,trim =0.7cm 0cm 1.3cm 0cm, clip = true]{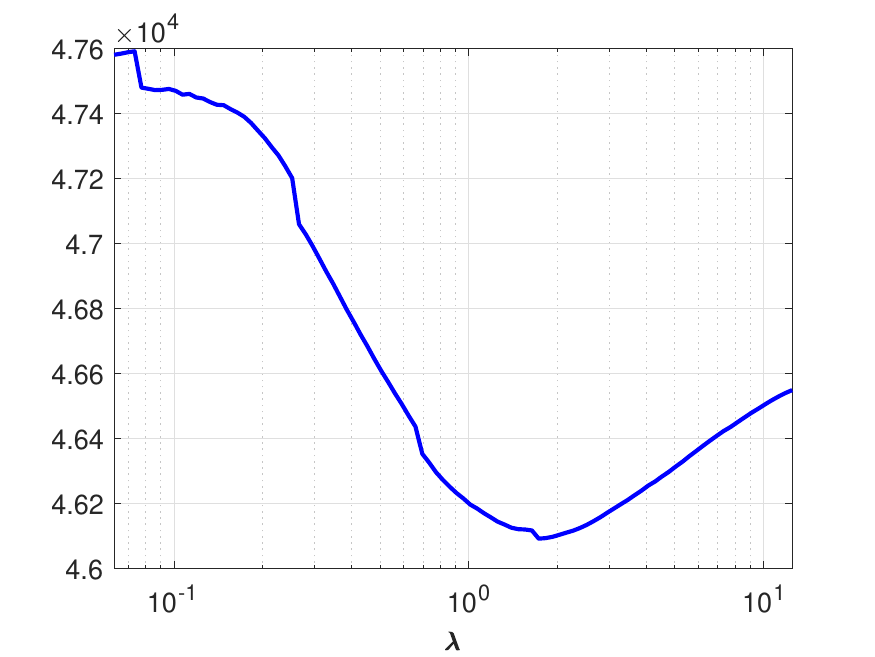} \\ 
\end{tabular}
        \caption{\small Figure showing the profile of 
       validation loss [left], and  robustness quantities [middle and right panels respectively] for varying values of $\lambda$. We use the synthetic data discussed in Section \ref{synthetic}.}
        \label{fig:synt_valid}
\end{figure*}

\subsubsection{Usefulness of MIP-based Initialization and Local Search}\label{initvsobj} We perform numerical experiments to show the usefulness of the initialization procedure in Section \ref{initsection} and the local search scheme in Section \ref{lsalg}. We show these algorithms improve upon a baseline initialization. We set $m=200, k=20, n=12000$ and $\lambda=1$ and vary $\sigma_z$ and $\ell$ (see Table~\ref{costobj}). We consider three cases: 
\begin{compactitem}
    \item[(i)] Algorithm~\ref{algorithm1} initialized with $\B{H}=\B{0}$ (shown as Zero in Table~\ref{costobj}).
    \item[(ii)] Algorithm~\ref{algorithm1} initialized with the successive projections initialization of~\citet{javadi2019nonnegative}, projected on the set of $\ell$-sparse matrices (shown as Projections).
    \item[(iii)] Algorithm~\ref{algorithm1} with MIP initialization (see Section~\ref{initsection}) and using warm-start continuation over 8 values of $\lambda$ on a logarithmic scale from 30 to 1. (Shown as SAA in Table~\ref{costobj}).
    \item[(iv)] Improving solution from (iii) with local search discussed in Section~\ref{lsalg}. This is denoted by SAA+LS in Table~\ref{costobj}. 
\end{compactitem}

\smallskip

\noindent The maximum runtime of Algorithm \ref{outer} is capped to 20 minutes 
and we retain the best solution. 
The average final cost function achieved by three methods explained above (over 5 independently generated data sets) are reported in Table \ref{costobj}. As it can be seen, our initialization scheme achieves a significantly lower objective value compared to the baseline (Zero) while being computationally feasible for such data size. In addition, our local search algorithm can improve the objective value as well as the support in a reasonable time (the number of changes in support for SAA+LS is reported in the parentheses). Moreover, the SAA framework usually outperforms the Projections method in most cases by a significant margin. However, Projections also perform better than Zero. In addition, when the noise is high, Projections performs closer to our SAA method and even better in one case. This suggests that for very large problems where the MIP algorithm might not be feasible, the Projections methods can be a good alternative compared to a simple Zero initial solution. We also note that in general, the runtime of the MIP algorithm~\ref{outer} can be further capped in cases where runtime is of essence, allowing for obtaining good (but possibly sub-optimal) initial solutions quickly.

\begin{table}[h!]
\centering
\begin{tabular}{ |c|c|c|c|c| } 
\hline
 $\sigma_z$ & Method &  $\ell/nk=0.5$ & $\ell/nk=0.65$ & $\ell/nk=0.8$ \\
\hline
\multirow{4}{*}{0.01}& Zero &  47785 & 42123 & 37566 \\
& Projections & 34620 & 33692 &  31254 \\
& SAA & 32193 & 29766 &  27139 \\
&SAA+LS &   \textbf{31867 (12.8)} &  \textbf{29729 (13.4)} & \textbf{27091 (20.2)} \\
\hline

\multirow{4}{*}{0.1}&Zero &  69366 & 63776 & 59276\\
& Projections & 58042 & 54028 &  50316 \\
& SAA &  54784 & 52253 & 49228 \\
&SAA+LS  & \textbf{54465 (18.4)} &  \textbf{52159 (19.8)} & \textbf{49131 (24)} \\
\hline

\multirow{4}{*}{0.5}&Zero & 448088 & 440974 &  434590\\
& Projections & 428025 & 421931 &  \textbf{415053} \\
& SAA & 426238 & 420134 & 416901 \\
&SAA+LS &  \textbf{424776 (22.6)} & \textbf{419932 (26.8)}  & 416628 (32)  \\
\hline
\end{tabular}
\caption{\small Comparison of zero initialization, SAA and SAA+LS in Section \ref{synthetic}. The number in parentheses for SAA+LS shows the number of changes in the support after local search. }
\label{costobj}
\end{table}

\subsubsection{Illustrating the joint effect of AA and sparsity}\label{julia-illus-sec}
Finally, we present an illustrative example here that sheds further light into why archetypal regularization and sparsity work well together. We set $m=200,n=2,k=3,\ell=2$ and take the underlying archetypes to be $(0,0),(0,1)$ and $(1,0)$. The results for this case are shown in Figure~\ref{fig:julia-i}. First, we see in the AA solution, by increasing $\lambda$ the solutions get closer to the data, as expected. However, at the same time, we see that AA solutions are not sparse and although increasing $\lambda$ pushes the solutions towards the data, AA makes mistakes and cannot capture the true noiseless archetypes due to lack of sparsity. In contrast, when SAA is used, SAA recovers the correct pattern of archetypes, and therefore, by choosing a good value of $\lambda$, SAA can have small error. This illustrates how sparsity in conjunction with archetypal regularization helps to make the problem more well-defined.

\begin{figure*}[t!]

     \centering
\begin{tabular}{cc}

  SAA &  AA  \\
      \includegraphics[width=0.3\linewidth,trim =1cm 0cm 1cm 0cm, clip = true]{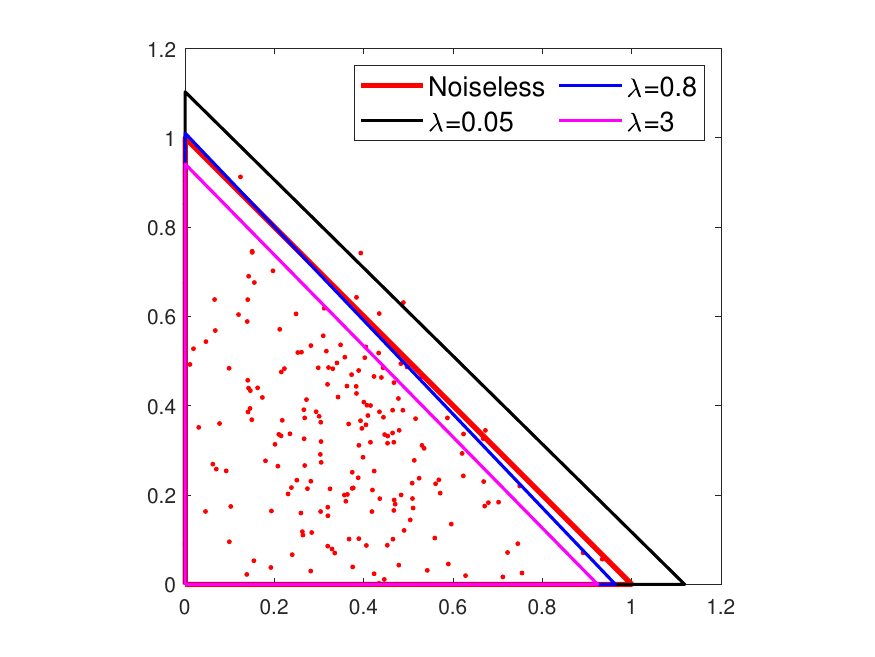}& 
\includegraphics[width=0.3\linewidth,trim =1cm 0cm 1cm 0cm, clip = true]{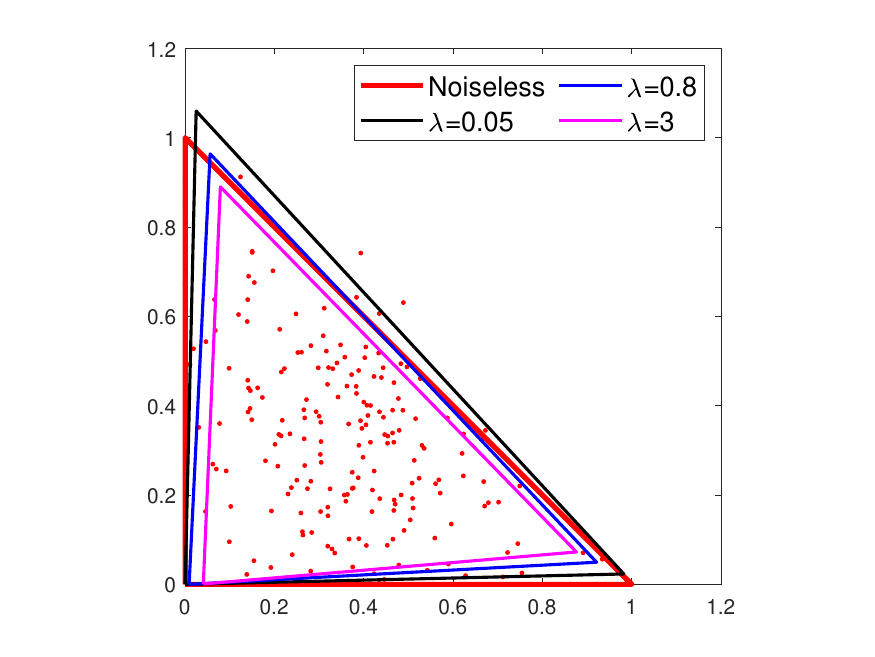} \\
\end{tabular}
        \caption{\small   The illustrative example from Section~\ref{julia-illus-sec} }
        \label{fig:julia-i}
\end{figure*}

\subsection{The Face Data Set \texorpdfstring{\citep{Samaria94parameterisationof}}{}}\label{faceexp}
A classical application of sparse NMF is in face detection and recognition \citep{hoyer2004non}. The goal is to obtain a low-rank representation of a data set of human faces under different lighting and shadow conditions and also different angles of photography. \citet{hoyer2004non} show the effect of sparsity in finding such representations of the data. In particular, they show that sparse NMF leads to part-based representations where each factor represents one part of the face. Here, we are interested in finding the effect of AA as well as the combined effect of AA and sparsity. We use the AT\&T database of faces \citep{Samaria94parameterisationof} which consists of 40 different people and 10 different photos of each person, 400 images in total. Each image is a grayscale $92\times 112$ image, which is converted to a vector of length $10304$. We then concatenate these $400$ vectorized images into one matrix of size $400\times 10304$ matrix. We consider $k=25$ (following \citet{hoyer2004non}) for this data set and do the factorization based on problem (\ref{archl0dual}) for different values of $\lambda$ and $\ell$. The estimated representations of the data (rows of resulting $\B{H}$ which are reshaped into $92\times 112$ images) are shown in Figure \ref{fig:faces}. We use the MIP initialization and continuation framework (over 8 values of $\lambda$) in Section \ref{initsection}. In the rest of this section, we discuss the connections between the robustness theory we developed and our numerical results.

\begin{figure}[h!]
     \centering
\begin{tabular}{lcc}
    &$\lambda=0$ (No AA) & $\lambda=0.4$ (AA)\\ 
       \rotatebox{90}{$~~~~\ell/k=10304$ (No sparsity)}& \includegraphics[width=0.3\linewidth]{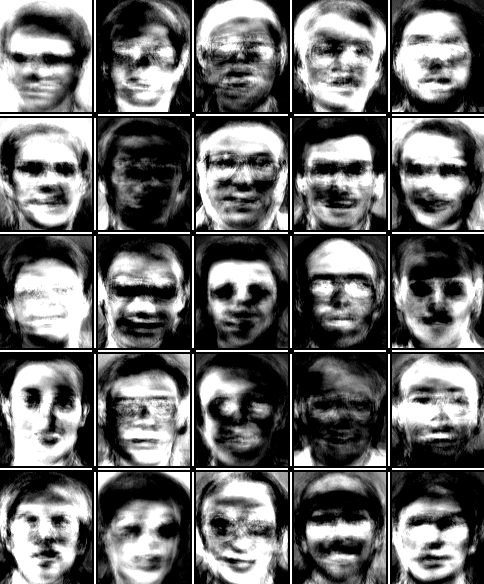}& 
        \includegraphics[width=0.3\linewidth]{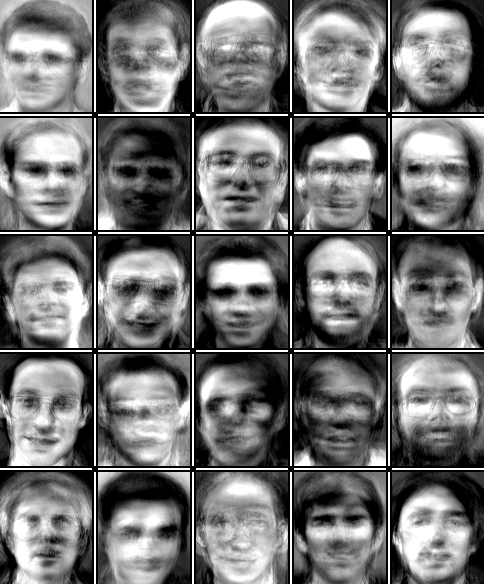}\\
        & (a)  & (b)  \\
         \rotatebox{90}{~~~~~~~~~~$\ell/k=4000$ (Sparse)}& \includegraphics[width=0.3\linewidth]{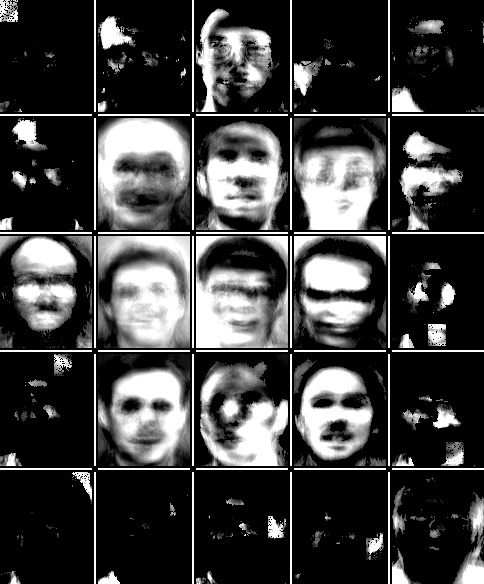}& 
        \includegraphics[width=0.3\linewidth]{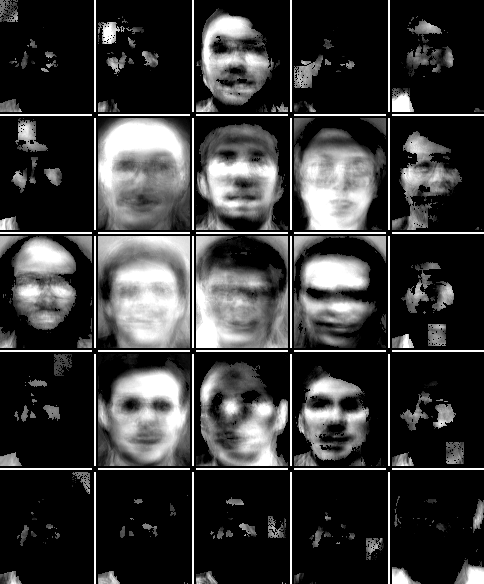}\\
        &(c)  & (d) 
\end{tabular}
        \caption{{\small The resulting face images in Section \ref{faceexp}: (a) $\lambda=0$, $\ell/k=10304$ (b) $\lambda=.4$, $\ell/k=10304$ (c) $\lambda=0$, $\ell/k=4000$ (d) $\lambda=.4$, $\ell/k=4000$}}   \label{fig:faces}
     \end{figure}

\noindent We first explore the difference between AA and basic NMF. By comparing Figures \ref{fig:faces} (a), (b), we deduce that as $\lambda$ is increased, the resulting factors appear to become more similar to human faces---making it easier to recognize the people in the data set. As $\lambda$ is decreased, the results become more abstract and the images do not resemble human faces anymore. Considering that the theory we developed in Section \ref{SAA} holds under mild assumptions, the representation achieved from Problem~\eqref{archl0dual} with $\lambda=0.4$ more likely corresponds to a robust solution---that is, the solution is closer to the underlying representation (we do not expect $\lambda=0$ to result in  robustness). Intuitively, we expect the underlying model that produces the face images to resemble the people in the data set---this suggests that the solution achieved by $\lambda=0.4$ is more robust.

\noindent As discussed by \citet{hoyer2004non}, adding sparsity to NMF produces part-based representations of the data. This can be also seen in our experimental results in Figure~\ref{fig:faces}. By forcing the solution $\B{H}$ to be sparse, we notice that each set of solutions (in Figures \ref{fig:faces} (c), (d)) includes two groups of factors. The first group consists of complete faces (columns and rows 2 to 4 in each set) and the second group consists of parts of a face (the border factors). In fact, this can be interpreted as each face being a combination of an overall shape of a human face and additional details arising from different parts of the face. The factors that contain a complete face in Figures  \ref{fig:faces} (c), (d) represent the overall shape of a human face, while other factors represent different parts of a face, like the forehead, cheeks, eyes and also the background of images. Once again, considering that our theoretical development in Section~\ref{SAA} is valid in the sparse NMF case, we expect the factors recovered by $\lambda=0.4$ to be more robust. Our hypothesis is the solution with $\lambda=0.4$ is more robust as factors in Figure \ref{fig:faces} (d) appear to more closely resemble human faces (compare the central columns in Figures  \ref{fig:faces} (c), (d)).

\subsection{Cancer Gene Expression Example~\texorpdfstring{\citep{ramaswamy2001multiclass}}{}}\label{genedata}
It is well-known that a primary goal of AA is to find a few representative points for a collection of data points. These representative points are useful in cluster analysis, where data points are put into a few clusters based on a suitable similarity/dissimilarity measure. In AA, each archetype (i.e., a row of $\B{H}$) can be considered as a cluster center and data points are assigned to clusters based on their proximity to different archetypes. As a result, each row of matrix $\B{H}$ (or each archetype) is considered as a center and each data point is assigned to the closest row of $\B{H}$ \citep{MORUP201254}.

\noindent An important application area of sparse NMF for clustering is in computational biology. Specifically, this problem has been considered by \citet{10.1093/bioinformatics/btm134} where the authors provide biological interpretations of sparsity in the context of NMF and do an extensive analysis of sparse NMF for microarray data. Here, we are interested in the clustering performance of our method. To this end, we consider a real data set: the 14 Cancers Gene Expression data set \citep{ramaswamy2001multiclass}. This data set consists of gene expression data of 198 samples and 14 different types of tumors. There are 16,063 features in the data set, however, the data is not nonnegative. Therefore, we use a trick introduced by \citet{kim2003subsystem} to transform the data: each feature is divided into two new features where one contains nonnegative coordinates (and zero elsewhere) and the other one contains the absolute value of negative coordinates (and zero elsewhere). Consequently, this leads to 32,126 nonnegative features---the data matrix is given by $\B{X}\in\mathbb{R}_{\geq 0}^{198\times 32126}$. The rank of the factorization is 14, the number of different types of tumors in the data set. The $i$-th data point ($i\in[198]$) belongs to the cluster $j_{i}$: 
$$j_i=\argmin_{j\in[14]}\|\B{X}_{i,.}-\B{H}_{j,.}\|_2$$
where $\B{H}$ is the resulting matrix of archetypes. To compare the performance of different algorithms (discussed below), we use two metrics, Purity and Entropy \citep{kim2008sparse}. Let for $i\in[198]$, $j_i^*\in[14]$ denote the true cluster of point $i$ and $j_i\in[14]$ denote the estimated cluster for the same point. Let $m_r^u$ be the number of samples that belong to the true cluster $u$ but are estimated to be in cluster $r$. Equivalently,
$$m_r^u = |\{i\in[m]: j_i^*=u, j_i=r \}|.$$
The metrics Purity and Entropy are defined as
$$\text{Purity} = \frac{1}{m}\sum_{r=1}^k \max_{u\in[k]} m_r^u~~~\text{and}~~~\text{Entropy} = -\frac{1}{m\log_2 k}\sum_{r=1}^k \sum_{u=1}^k m_r^u\log_2 \frac{m_r^u}{m_r}$$
where $m_r=\sum_{u=1}^k m_r^u$. A larger value of Purity and a smaller value of Entropy imply a better clustering performance. \\
The results of clustering performance of different algorithms is reported in Table \ref{cluster}. SAA is our proposed framework (we use MIP initialization with  continuation over 8 values of $\lambda$ from 30 to 1 as in Section \ref{initsection}), AA is the algorithm proposed by \citet{javadi2019nonnegative} which does not enforce any sparsity. KP and PP are as introduced before. Chen is the exact AA of~\citet{Breiman} with the algorithmic approach of~\citet{chen2014fast}. We also include Kmeans in our experiments as a baseline for the clustering performance. As it can be seen, among algorithms that enforce sparsity, SAA performs the best in terms of clustering. In fact, SAA is at par with Kmeans, while providing a solution that is two times more sparse. AA has the best clustering performance, however, it fails to provide a sparse solution. Other algorithms provide sparse solutions, but their clustering performance is not as good as SAA. In our experiments in this section, all NMF-based methods terminated in less than a minute.

\begin{table}[h!]
\centering
\begin{tabular}{ |c|c|c|c|c|c|c|c| } 
\hline
  & SAA & AA & Chen& Kmeans & KP & PP &  Hoyer  \\
\hline
$\|\boldsymbol{H}\|_0/kn$ & 0.350 & 0.649 & 0.634 & 0.710 & 0.365 & 0.385 & 0.350\\
Purity &  0.660 & 0.868 &  0.698  & 0.654  & 0.446  & 0.477  &  0.433 \\
Entropy & 0.361 & 0.216 & 0.354 & 0.375  & 0.723  & 0.690 &  0.731\\
\hline

\end{tabular}
\caption{ \small Performance of different algorithms for the gene expression data set in Section \ref{genedata}}
\label{cluster}
\end{table}

\subsection{Hyperspectral Unmixing data set~\texorpdfstring{\citep{pinesdata}}{}}\label{hyperdata}
For our next set of experiments, we consider the Indian Pines data~\citep{pinesdata}, which is a Hyperspectral image segmentation data set. This data consists of images of the Indian Pines landscape in Indiana, US, each image with size $\text{145}\times\text{145}$ pixels. These images are taken over different 220 different spectral bands (i.e. portions of the electromagnetic spectrum), resulting in 220 images. A reference image of the data set is shown in Figure~\ref{figpines1} (top left panel). The goal in this data set is to recognize different segments of the landscape from the images. Hence, this problem can be thought of as a clustering problem, where each pixel is an observation, for each pixel different spectral bands constitute as different features, and clusters are different segments and objects in the area. As different areas of the landscape may appear more or less prominent in different spectral bands, one might expect the underlying archetypes to be sparse.

To tackle this data, we vectorize each of 220 images into a vector of length 21025. This results in a data matrix $\B{X}\in\R^{21025\times 220}_{\geq 0}$. We also set the number of clusters to $k=17$ from the ground truth, given in Figure~\ref{figpines1}. Then we scale $\B{X}$ to have entries between 0 and 1. We also set $\ell=0.65nk$. The rest of the setup is the same as the experiment in Section~\ref{genedata}. The numerical results from this data are shown in Table~\ref{pines-cluster} and a visual comparison of results are shown in Figure~\ref{figpines1}.

\begin{table}[h!]

\centering
\begin{tabular}{ |c|c|c|c|c|c|c|c| } 
\hline
  & SAA & AA & Chen& Kmeans & KP & PP &  Hoyer  \\
\hline
$\|\boldsymbol{H}\|_0/kn$ &  0.75 & 1 & 1  &1  & 0.74  & 0.75 & 0.75 \\
Purity &  0.559 &  0.563 & 0.561 & 0.570   &  0.512  &  0.512  &  0.522 \\
Entropy &  0.534 &  0.529 &0.530 &  0.517  &    0.645&0.645   & 0.626 \\
\hline

\end{tabular}
\caption{ \small Performance of different algorithms for the hyperspectral unmixing data set in Section~\ref{hyperdata}}
\label{pines-cluster}
\end{table}

\begin{figure}[t!]

     \centering
\begin{tabular}{cccc}

    Reference Image & Ground Truth & Kmeans & AA/Chen \\
         \includegraphics[width=0.23\textwidth]{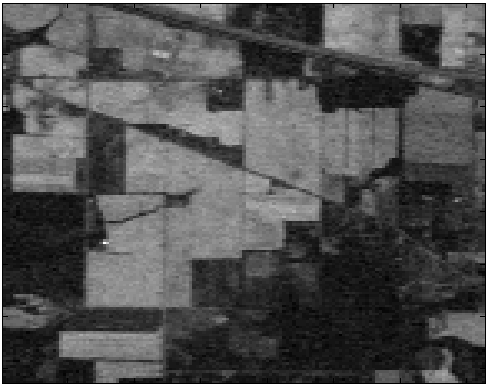} &  \includegraphics[width=0.23\linewidth,trim = 1.5cm 1cm 1.5cm 1cm, clip = true]{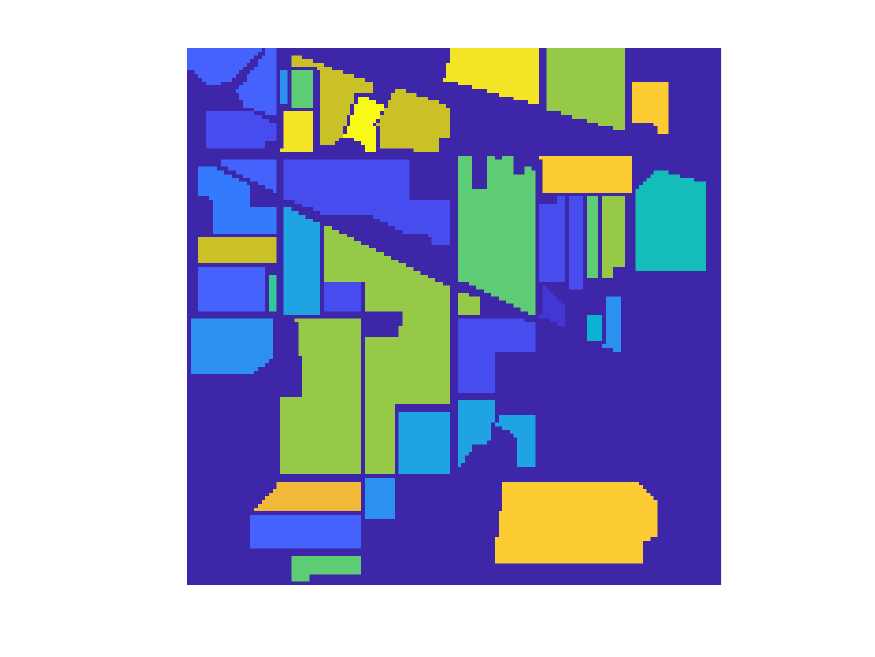} &  \includegraphics[width=0.23\linewidth,trim = 1.5cm 1cm 1.5cm 1cm, clip = true]{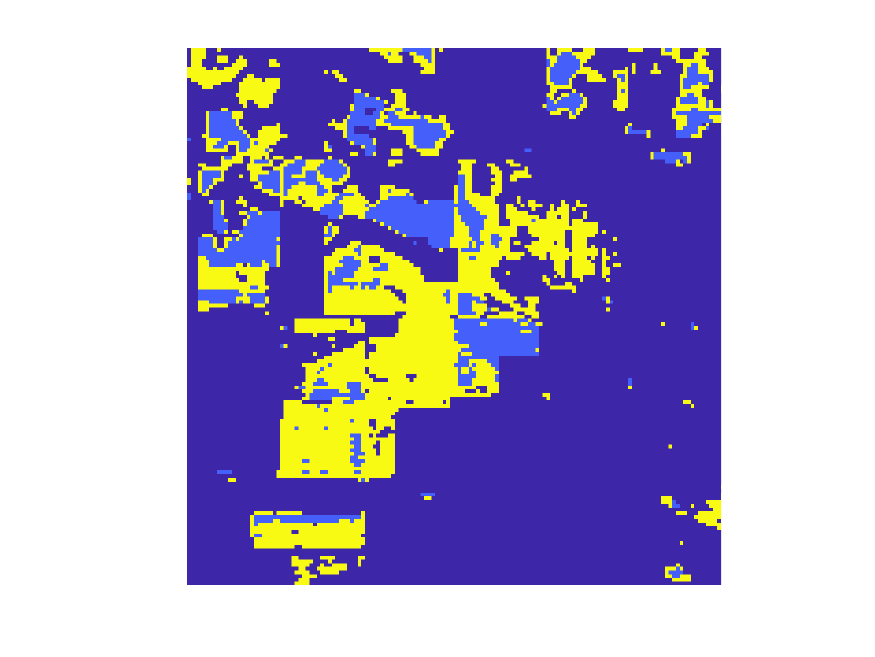}&  \includegraphics[width=0.23\linewidth,trim = 1.5cm 1cm 1.5cm 1cm, clip = true]{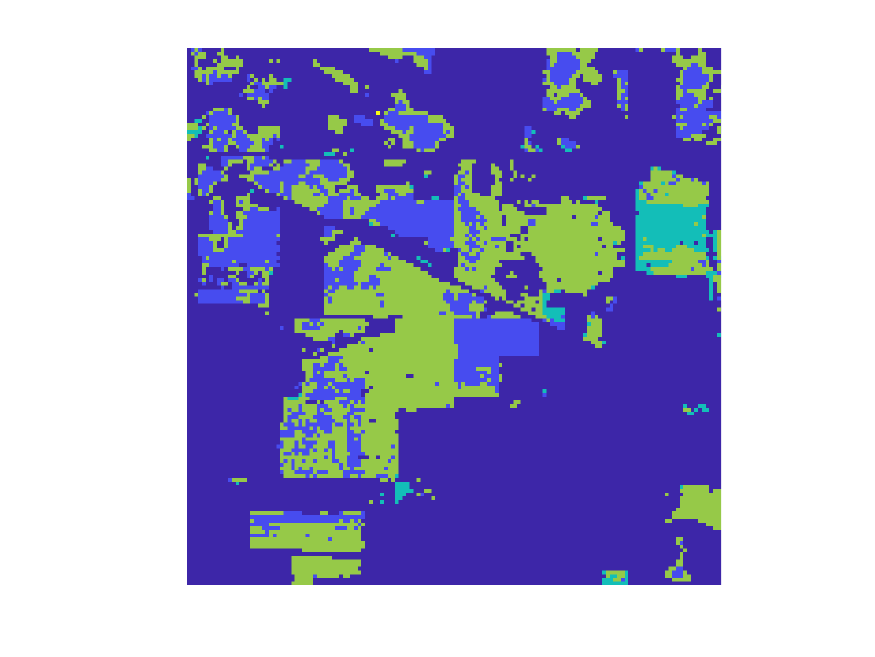}\\
          Hoyer & KP & PP & SAA \\
          \includegraphics[width=0.23\linewidth,trim = 1.5cm 1cm 1.5cm 1cm, clip = true]{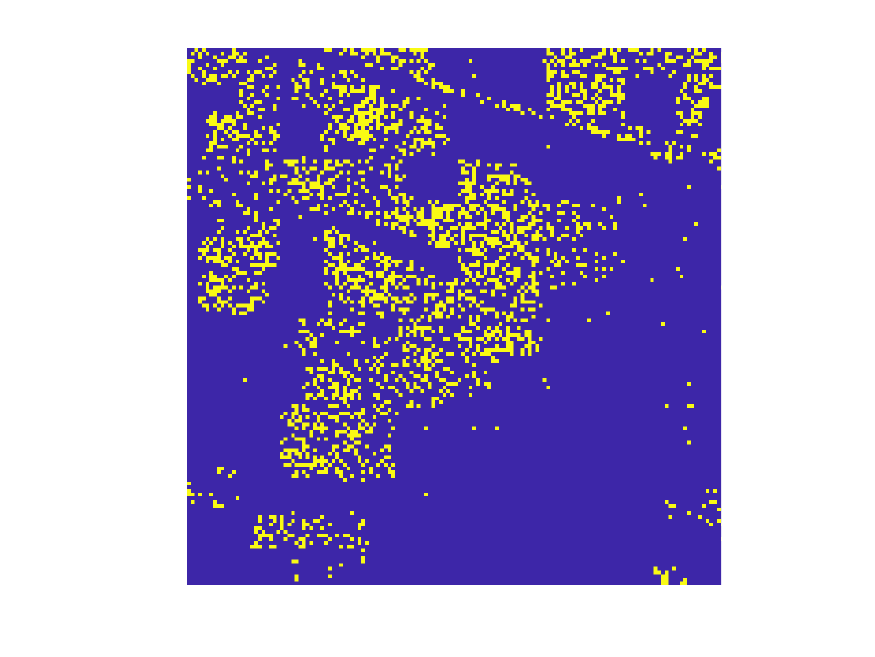} & \includegraphics[width=0.23\linewidth,trim = 1.5cm 1cm 1.5cm 1cm, clip = true]{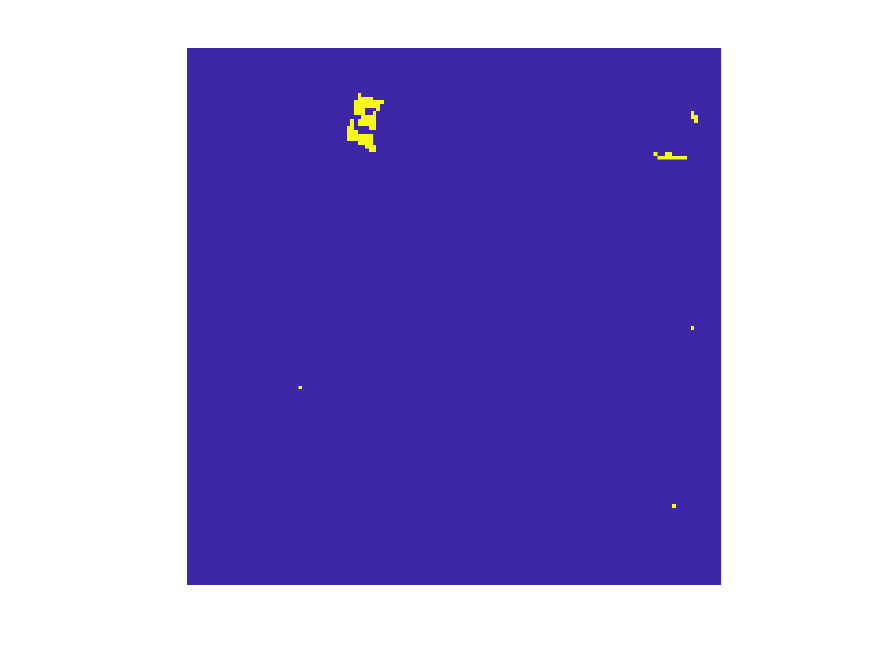}  & \includegraphics[width=0.23\linewidth,trim = 1.5cm 1cm 1.5cm 1cm, clip = true]{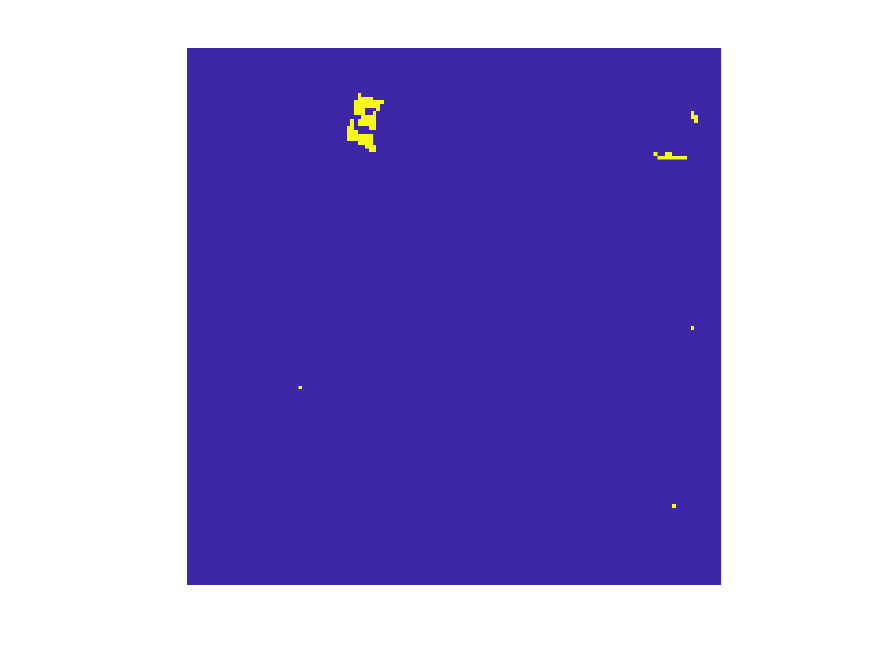}  & \includegraphics[width=0.23\linewidth,trim = 1.5cm 1cm 1.5cm 1cm, clip = true]{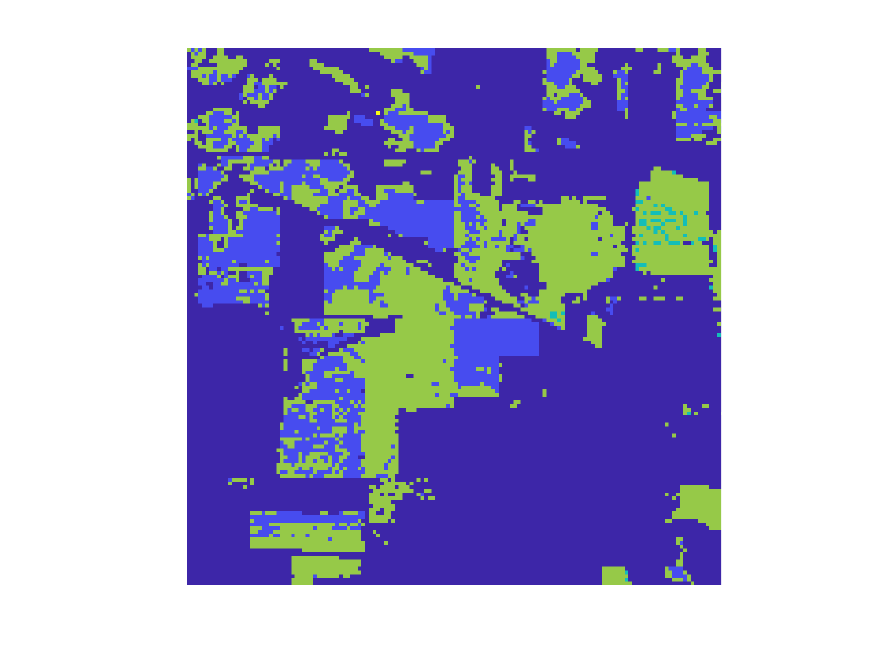} 
         \end{tabular}
     \caption{\small The figures for Section~\ref{hyperdata}. The two top left images show a reference image and the ground truth. The rest show outputs of different methods.}    \label{figpines1}
\end{figure}
Overall, Kmeans has the best performance in this data, although AA/Chen perform close. However, we note that solutions from these methods are fully dense. Moving to sparse methods, we see that SAA has the best performance and performs close to non-sparse methods. Moreover, as seen in Figure~\ref{figpines1}, the SAA solution seems to perform the best visually and identifies the landscape better than Kmeans and AA/Chen. More details on the performance of different methods and visual comparisons can be found in Appendix~\ref{app:hyperdata}.

\subsection{Scene Categorization data set \texorpdfstring{\citep{xiao2010sun}}{} }\label{sunsec}
Finally, we consider another popular application of AA arising in the context of image categorization \citep{abrolgeometric,chen2014fast}. Given a collection of photos, AA can be used to identify a small subset of photos as their representatives (archetypes). Based on the mathematical formulation of AA, archetypes are expected to represent extreme scenes and objects present in the data, for instance, they should categorize different indoor/outdoor settings or city/nature scenes. \\
As far as we can tell, NMF with an explicit $\ell_0$ regularization has not been used before to address the problem of scene categorization. However, in view of Theorem \ref{robustnessthm}, we do not anticipate that additional sparsity will reduce the robustness properties of the estimator if the underlying matrix of archetypes is sparse. In addition, a sparse model may be desirable in terms of compressed storage.
We apply SAA on the Scene Categorization data set~\citep{xiao2010sun}. We select 12 different scenes that consist of different indoor and outdoor settings (2617 images in total). These scenes are toll plaza, hospital exterior, harbor, electricity station, underwater, youth hostel, valley, ski resort, football stadium, residential neighborhood, vineyard and iceberg. We extract and concatenate GIST and HOG features \citep{xiao2010sun} and implement different sparse NMF algorithms on the data with $k=12$. As estimated archetypes in the feature space cannot be visualized, we use the closest data point to each archetype to visualize the result.\\
First, we consider AA and SAA with $\ell/kn=0.5$ (we use the MIP initialization with continuation framework in Section \ref{initsection} and choose the value of $\lambda$ that maximizes purity). The resulting visualization of archetypes for these two cases is the same. The visualization of estimated archetypes is shown in Figure \ref{sundata} (a). We observe that the resulting archetypes appear to span the 12 different scenes in the data set. Figure \ref{sundata} (b) shows the resulting visualization for PP where the resulting archetypes matrix is set to have at most $0.65nk$ nonzeros. As it can be seen, PP can identify 10 distinct scenes and chooses the electricity station and the toll plaza twice. Figure \ref{sundata} (c) shows the results for KP with the same sparsity as PP. This algorithm only identifies 5 distinct scenes. In summary, our SAA algorithm works as well as AA in terms of identifying different scenes while providing a sparse solution. This shows a sparse solution can be achieved without losing categorization performance. \\

\begin{figure}[]
     \centering
\begin{tabular}{c}
SAA (this paper)\\
\includegraphics[width=0.7\linewidth,trim =0cm 1.cm 1.cm 0cm, clip = true]{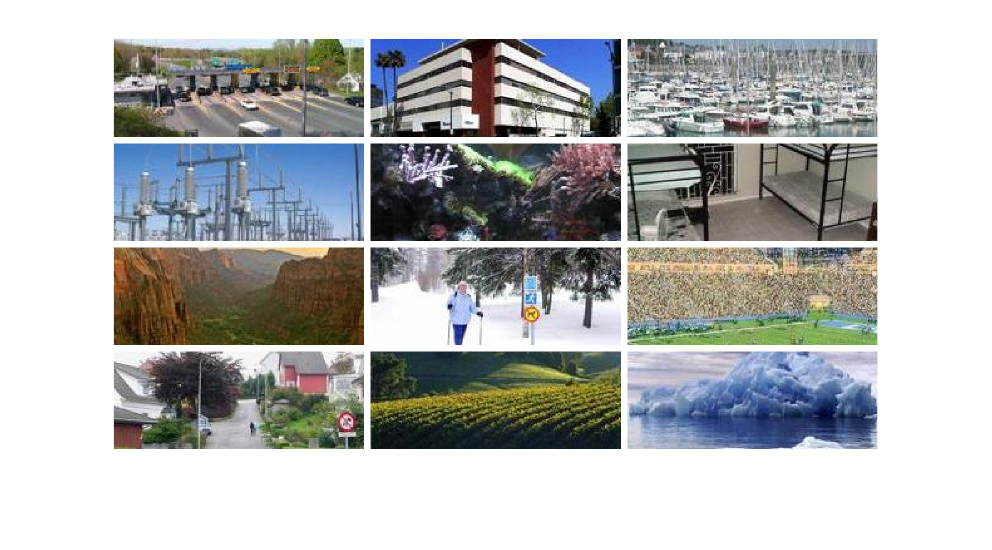} \\
PP~\citep{peharz2012sparse}\\
         \includegraphics[width=0.7\linewidth,trim =0cm 1.cm 1.cm 0cm, clip = true]{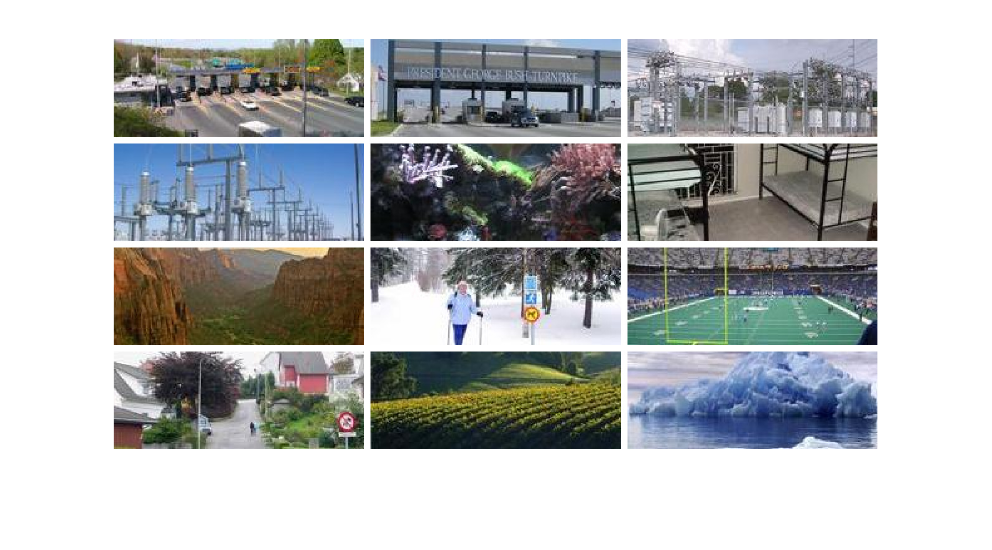}\\
         KP~\citep{10.1093/bioinformatics/btm134}\\
         \includegraphics[width=0.7\linewidth,trim =0cm 1.cm 1.cm 0cm, clip = true]{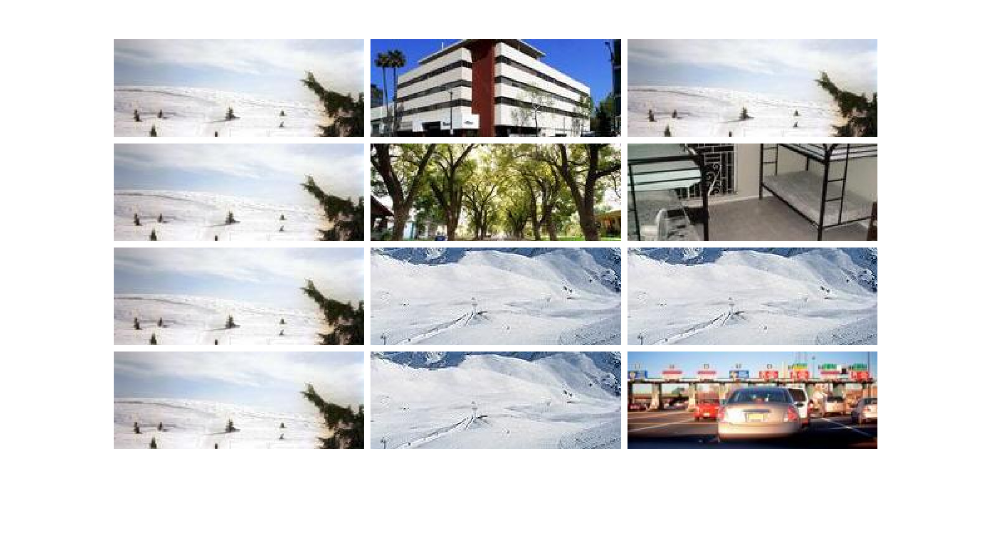}
\end{tabular}
        \caption{\small The visualization of archetypes achieved by different algorithm for the scene categorization data in Section \ref{sunsec}: (a) SAA (b) PP (c) KP. }  
        \label{sundata}
     \end{figure}

\section{Conclusion}
In this paper, we consider the problem of sparse NMF with archetypal regularization where the goal is to represent a collection of data points as nonnegative linear combinations of a few nonnegative sparse factors. \citet{javadi2019nonnegative} recently showed that NMF (without sparsity) with archetypal regularization leads to robustness---factors learnt from noisy data are close to the underlying factors that generate the noiseless data. We generalize the notion of robustness to (a) strong robustness that implies each estimated archetype is close to the underlying archetypes and (b) weak robustness that implies there exists at least one recovered archetype that is close to the underlying archetypes. \citet{javadi2019nonnegative} is an instance of the notion of weak robustness presented herein. We show that under minimal assumptions, robustness in sparse NMF can be achieved by considering a sparsity constrained regularized AA problem, even if the underlying archetypes are not sparse. We present a block coordinate algorithm to get a good solution to the sparse AA problem and also an initialization framework using mixed integer programming that leads to better numerical results. We also present a local search algorithm that improves the quality of the solution of our block coordinate algorithm. Numerical experiments on synthetic and real data sets shed further insights into the theoretical developments pursued in this paper. 

\section{Acknowledgements} 
The authors would like to thank the Action Editor and five referees for their helpful comments that have led to improvements in the paper.
Rahul Mazumder would like to thank Kushal Dey for helpful discussions in the initial stages of this work.
This research was partially supported by grants from the Office of 
Naval Research (ONR-N000141812298, N000142112841 and N000142212665), National Science Foundation (NSF-IIS-1718258), IBM and Liberty Mutual Insurance.

\newpage
\appendix

\section*{Proofs and Technical Details}

\section{Additional Notation}\label{notations}
We define
\begin{equation}\label{app:defn}
\tilde{D}(\boldsymbol{X},\boldsymbol{Y})=\sum_{i=1}^{\text{\#row}(\boldsymbol{X})}\sqrt{D(\boldsymbol{X}_{i,.},\boldsymbol{Y})}~~\text{and} ~~\tilde{\mathcal{L}}(\boldsymbol{X},\boldsymbol{Y})=\sum_{i=1}^{\text{\#row}(\boldsymbol{X})}\min_{j\in [\text{\#row}(\boldsymbol{Y})]}\|\boldsymbol{X}_{i,.}-\boldsymbol{Y}_{j,.}\|_2.
\end{equation}
For a set $S$, $S^c$ denotes the complement of the set. We use $\mathbb{I}_k$ to denote the identity matrix of size $k$.

\section{Technical Details of The Toy Example}\label{toyexa}
For Figure 1 Panel (a), we let 
$$\boldsymbol{H}_0=\begin{bmatrix}
0.15 & 0.15 \\
0.1 & 0.7 \\
0.7 & 0.1
\end{bmatrix}$$
and produce 50 data points. To do so, each entry of $\boldsymbol{W}_0$ is drawn from an independent uniform distribution in $[0,1]$ and each row is normalized to sum to one. The noiseless data matrix is $\boldsymbol{X}_0=\boldsymbol{W}_0\boldsymbol{H}_0$ and we add three rows of $\boldsymbol{H}_0$ to this to a obtain separable problem. In this case, we can see $D(\boldsymbol{X}_0,\boldsymbol{H}_0)=D(\boldsymbol{H}_0,\boldsymbol{X}_0)=0$ which is the exact AA solution of \citet{Breiman}. The red convex hull is $\text{Conv}(\boldsymbol{H}_0)$. Let 
$$\boldsymbol{H}_1=\begin{bmatrix}
0.05 & 0.05 \\
1 & 0.1 \\
0.1 & 1
\end{bmatrix}.$$
The black convex hull is $\text{Conv}(\boldsymbol{H}_1)$ for which we have $D(\boldsymbol{X}_0,\boldsymbol{H}_1)=0$ but $D(\boldsymbol{H}_1,\boldsymbol{X}_0)>0$.

\noindent For Figure 1 Panel (b), the data $\boldsymbol{X}$ is produced by $\boldsymbol{X}=\boldsymbol{X}_0+\boldsymbol{Z}$ where $\boldsymbol{Z}$ has zero-mean iid normal coordinates with variance of $0.1$. In this case, we have for every data point $i$, $D(\boldsymbol{X}_{i,.},\boldsymbol{H}_0)\leq 0.1$ making $\boldsymbol{H}_0$ a feasible solution for the regularized AA of \citet{javadi2019nonnegative} and as $D(\boldsymbol{H}_0,\boldsymbol{X})=0$, this is the solution of the regularized AA problem (the red convex hull). However, we have $D(\boldsymbol{X},\boldsymbol{H}_1)=0$ and $D(\boldsymbol{H}_1,\boldsymbol{X})>0$, so the black convex hull is not an optimal solution. 

\noindent Finally, let 
$$\boldsymbol{H}_2=\begin{bmatrix}
0 & 0 \\
0 & 0.8 \\
0.8 & 0
\end{bmatrix}.$$
In Figure 1 Panel (c), the black convex hull is $\text{Conv}(\boldsymbol{H}_2)$ and the red one is $\text{Conv}(\boldsymbol{H}_0)$. We have $\|\boldsymbol{H}_2\|_0=2$ and $\|\boldsymbol{H}_0\|_0=6$ so the red convex hull is not sparse. In addition, among all solutions that have $\|\boldsymbol{H}\|_0=2$ and $D(\boldsymbol{X}_0,\boldsymbol{H})=0$, the quantity $D(\boldsymbol{H},\boldsymbol{X}_0)$ is minimized for the black convex hull, making it the sparse archetypal solution. 
\section{Technical Lemmas}
\begin{lem}\label{firstlem}
For any two matrices $\boldsymbol{X}\in\mathbb{R}^{m_1\times n}$ and $\boldsymbol{Y}\in\mathbb{R}^{m_2\times n}$, we have:
\begin{align}
& \frac{1}{\sqrt{m_1}}\tilde{D}(\boldsymbol{X},\boldsymbol{Y}) \leq D(\boldsymbol{X},\boldsymbol{Y})^{1/2}\leq \tilde{D}(\boldsymbol{X},\boldsymbol{Y}) \label{sec:lemma1-proof-line1}\\
&\frac{1}{\sqrt{m_1}}\tilde{\mathcal{L}}(\boldsymbol{X},\boldsymbol{Y}) \leq \mathcal{L}(\boldsymbol{X},\boldsymbol{Y})^{1/2}\leq \tilde{\mathcal{L}}(\boldsymbol{X},\boldsymbol{Y}),\label{sec:lemma1-proof-line2}
\end{align}
where, recall $\tilde{D}$ and $\tilde{\mathcal L}$ are as defined in~\eqref{app:defn}. 
\end{lem}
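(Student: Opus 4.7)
} The plan is to reduce both pairs of inequalities to the standard norm comparison $\|a\|_1/\sqrt{m_1} \leq \|a\|_2 \leq \|a\|_1$ on $\mathbb{R}^{m_1}$, applied to non-negative coordinate vectors built from pointwise distances. Specifically, define $a_i = \sqrt{D(\boldsymbol{X}_{i,.}, \boldsymbol{Y})} \geq 0$ for $i \in [m_1]$. Then by the definitions in the paper,
\begin{equation*}
\tilde{D}(\boldsymbol{X},\boldsymbol{Y}) = \sum_{i=1}^{m_1} a_i = \|a\|_1, \qquad D(\boldsymbol{X},\boldsymbol{Y})^{1/2} = \Bigl(\sum_{i=1}^{m_1} a_i^2\Bigr)^{1/2} = \|a\|_2,
\end{equation*}
so line \eqref{sec:lemma1-proof-line1} is exactly the statement $\|a\|_1/\sqrt{m_1} \leq \|a\|_2 \leq \|a\|_1$ for this particular $a$.

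The first inequality $\|a\|_1 \leq \sqrt{m_1}\,\|a\|_2$ follows from Cauchy--Schwarz applied to $a$ and the all-ones vector $\boldsymbol{1}_{m_1}$, namely $\sum_i a_i \cdot 1 \leq \|a\|_2 \|\boldsymbol{1}_{m_1}\|_2 = \sqrt{m_1}\,\|a\|_2$. The second inequality $\|a\|_2 \leq \|a\|_1$ follows from non-negativity: since $a_i \geq 0$, we have $\sum_i a_i^2 \leq \bigl(\sum_i a_i\bigr)^2$, and taking square roots yields the claim.

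For line \eqref{sec:lemma1-proof-line2}, I would repeat the argument verbatim with $b_i = \min_{j \in [m_2]} \|\boldsymbol{X}_{i,.} - \boldsymbol{Y}_{j,.}\|_2 \geq 0$ in place of $a_i$; then $\tilde{\mathcal{L}}(\boldsymbol{X},\boldsymbol{Y}) = \|b\|_1$ and $\mathcal{L}(\boldsymbol{X},\boldsymbol{Y})^{1/2} = \|b\|_2$, and the same two bounds (Cauchy--Schwarz for the lower bound, coordinatewise non-negativity for the upper bound) finish the job.

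There is no real obstacle here; the entire content is a one-line norm comparison combined with the observation that the quantities $D, \tilde D, \mathcal L, \tilde{\mathcal L}$ are precisely the squared $\ell_2$-norm, $\ell_1$-norm, squared $\ell_2$-norm and $\ell_1$-norm of appropriate non-negative vectors of pointwise distances. The only thing to be a bit careful about is recording that both $a_i$ and $b_i$ are non-negative (which is needed so that $\|a\|_2 \leq \|a\|_1$ holds, and which follows immediately from the definitions of $D(\cdot,\cdot)$ and of the Euclidean norm).
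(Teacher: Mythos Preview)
Your proposal is correct and follows exactly the same approach as the paper: define the nonnegative vector $\boldsymbol{u}_i=\sqrt{D(\boldsymbol{X}_{i,.},\boldsymbol{Y})}$ (respectively $\min_j\|\boldsymbol{X}_{i,.}-\boldsymbol{Y}_{j,.}\|_2$) and apply the standard $\ell_1$--$\ell_2$ norm comparison $\|\boldsymbol{u}\|_1/\sqrt{m_1}\leq \|\boldsymbol{u}\|_2\leq \|\boldsymbol{u}\|_1$. Your write-up is in fact slightly more explicit than the paper's, spelling out the Cauchy--Schwarz and nonnegativity justifications that the paper leaves implicit.
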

\begin{proof}
Define the vector $\boldsymbol{u}\in\mathbb{R}^{m_1}$ such that $\boldsymbol{u}_i=\sqrt{D(\boldsymbol{X}_{i,.},\boldsymbol{Y})}$. Note that
\begin{align*}
\frac{1}{\sqrt{m_1}}\tilde{D}(\boldsymbol{X},\boldsymbol{Y}) = \frac{1}{\sqrt{m_1}}\|\boldsymbol{u}\|_1\leq \|\boldsymbol{u}\|_2= D(\boldsymbol{X},\boldsymbol{Y})^{1/2} \leq \|\boldsymbol{u}\|_1 = \tilde{D}(\boldsymbol{X},\boldsymbol{Y})
\end{align*}
which establishes~\eqref{sec:lemma1-proof-line1}. 
The proof of~\eqref{sec:lemma1-proof-line2} is similar. 
\end{proof}
\begin{lem}\label{comparison}
If $D(\boldsymbol{A}_1,\boldsymbol{A}_2)\leq D(\boldsymbol{B}_1,\boldsymbol{B}_2)$ where $\boldsymbol{A}_1,\boldsymbol{A}_2\in\mathbb{R}^{m\times n}$,$\boldsymbol{B}_1,\boldsymbol{B}_2\in\mathbb{R}^{k\times n}$ , we have
$$\tilde{D}(\boldsymbol{A}_1,\boldsymbol{A}_2)\leq \sqrt{m}\tilde{D}(\boldsymbol{B}_1,\boldsymbol{B}_2).$$
\end{lem}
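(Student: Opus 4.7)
The plan is to obtain the claim by a short chain of inequalities that applies Lemma~\ref{firstlem} twice: once to $\tilde{D}(\boldsymbol{A}_1,\boldsymbol{A}_2)$ and once to $\tilde{D}(\boldsymbol{B}_1,\boldsymbol{B}_2)$, and then inserts the hypothesis $D(\boldsymbol{A}_1,\boldsymbol{A}_2)\leq D(\boldsymbol{B}_1,\boldsymbol{B}_2)$ in the middle. There is essentially no obstacle here; the lemma is a direct consequence of the sandwich bounds established in Lemma~\ref{firstlem}.

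Concretely, the steps I would carry out are the following. First, apply the upper bound in~\eqref{sec:lemma1-proof-line1} of Lemma~\ref{firstlem} with $(\boldsymbol{X},\boldsymbol{Y})=(\boldsymbol{A}_1,\boldsymbol{A}_2)$ and $m_1=m$ to get
\[
\tilde{D}(\boldsymbol{A}_1,\boldsymbol{A}_2)\;\leq\;\sqrt{m}\,D(\boldsymbol{A}_1,\boldsymbol{A}_2)^{1/2}.
\]
Second, use the hypothesis to pass from $\boldsymbol{A}$ to $\boldsymbol{B}$:
\[
\sqrt{m}\,D(\boldsymbol{A}_1,\boldsymbol{A}_2)^{1/2}\;\leq\;\sqrt{m}\,D(\boldsymbol{B}_1,\boldsymbol{B}_2)^{1/2}.
\]
Finally, apply the lower bound in~\eqref{sec:lemma1-proof-line1} of Lemma~\ref{firstlem} (with $(\boldsymbol{X},\boldsymbol{Y})=(\boldsymbol{B}_1,\boldsymbol{B}_2)$) in the reverse direction, namely $D(\boldsymbol{B}_1,\boldsymbol{B}_2)^{1/2}\leq \tilde{D}(\boldsymbol{B}_1,\boldsymbol{B}_2)$, which yields
\[
\sqrt{m}\,D(\boldsymbol{B}_1,\boldsymbol{B}_2)^{1/2}\;\leq\;\sqrt{m}\,\tilde{D}(\boldsymbol{B}_1,\boldsymbol{B}_2).
\]
Chaining these three inequalities gives $\tilde{D}(\boldsymbol{A}_1,\boldsymbol{A}_2)\leq \sqrt{m}\,\tilde{D}(\boldsymbol{B}_1,\boldsymbol{B}_2)$, as required.

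The only thing to be careful about is that the factor $\sqrt{m}$ on the right-hand side comes from the number of rows of $\boldsymbol{A}_1$ (not $\boldsymbol{B}_1$); this is precisely why one must use the $\ell_1$-vs-$\ell_2$ direction of Lemma~\ref{firstlem} on the $\boldsymbol{A}$-side and the $\ell_2$-vs-$\ell_1$ direction on the $\boldsymbol{B}$-side. No assumption relating $m$ and $k$, and no further structure on $\boldsymbol{A}_i,\boldsymbol{B}_i$, is needed.
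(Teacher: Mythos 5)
Your proof is correct and is essentially identical to the paper's: the paper proves Lemma~\ref{comparison} by exactly the same three-step chain, $\tilde{D}(\boldsymbol{A}_1,\boldsymbol{A}_2)\leq \sqrt{m}\,D(\boldsymbol{A}_1,\boldsymbol{A}_2)^{1/2}\leq \sqrt{m}\,D(\boldsymbol{B}_1,\boldsymbol{B}_2)^{1/2}\leq \sqrt{m}\,\tilde{D}(\boldsymbol{B}_1,\boldsymbol{B}_2)$, invoking Lemma~\ref{firstlem} on each end. Your remark about why the $\sqrt{m}$ factor must come from the $\boldsymbol{A}$-side is accurate.
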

\begin{proof}
We make use of Lemma \ref{firstlem}. The proof follows from:
\begin{align*}
    \tilde{D}(\boldsymbol{A}_1,\boldsymbol{A}_2) & \leq \sqrt{m} D(\boldsymbol{A}_1,\boldsymbol{A}_2)^{1/2}\leq \sqrt{m} D(\boldsymbol{B}_1,\boldsymbol{B}_2)^{1/2} \leq \sqrt{m} \tilde{D}(\boldsymbol{B}_1,\boldsymbol{B}_2).
\end{align*}

\end{proof}
\begin{lem}\label{split}
Suppose $\boldsymbol{X}\in\mathbb{R}^{m_1\times n},\boldsymbol{Y}\in\mathbb{R}^{m_2\times n},\boldsymbol{Z}\in\mathbb{R}^{m_3\times n}$, then
$$\tilde{D}(\boldsymbol{X},\boldsymbol{Z})\leq \tilde{\mathcal{L}}(\boldsymbol{X},\boldsymbol{Y})+m_1\tilde{D}(\boldsymbol{Y},\boldsymbol{Z}).$$
\end{lem}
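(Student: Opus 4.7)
The plan is to prove the inequality via a two-step decomposition that passes through the intermediate matrix $\boldsymbol{Y}$. For each row $\boldsymbol{X}_{i,.}$, I will find the nearest row $\boldsymbol{Y}_{j^*_i,.}$ and then the nearest point in $\text{Conv}(\boldsymbol{Z})$ to that row, and use the triangle inequality to bound the distance from $\boldsymbol{X}_{i,.}$ to $\text{Conv}(\boldsymbol{Z})$.

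First, for each $i \in [m_1]$, let $j^*_i \in \argmin_{j \in [m_2]} \|\boldsymbol{X}_{i,.} - \boldsymbol{Y}_{j,.}\|_2$, and let $\boldsymbol{v}_i \in \text{Conv}(\boldsymbol{Z})$ be the projection of $\boldsymbol{Y}_{j^*_i,.}$ onto $\text{Conv}(\boldsymbol{Z})$, so that $\|\boldsymbol{Y}_{j^*_i,.} - \boldsymbol{v}_i\|_2 = \sqrt{D(\boldsymbol{Y}_{j^*_i,.},\boldsymbol{Z})}$. Since $\boldsymbol{v}_i \in \text{Conv}(\boldsymbol{Z})$, by definition of $D(\boldsymbol{X}_{i,.},\boldsymbol{Z})$ and the triangle inequality in $\|\cdot\|_2$,
\begin{equation*}
\sqrt{D(\boldsymbol{X}_{i,.},\boldsymbol{Z})} \;\leq\; \|\boldsymbol{X}_{i,.} - \boldsymbol{v}_i\|_2 \;\leq\; \|\boldsymbol{X}_{i,.} - \boldsymbol{Y}_{j^*_i,.}\|_2 + \sqrt{D(\boldsymbol{Y}_{j^*_i,.},\boldsymbol{Z})}.
\end{equation*}

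Next, I will sum over $i \in [m_1]$. The first term becomes $\sum_{i=1}^{m_1} \|\boldsymbol{X}_{i,.} - \boldsymbol{Y}_{j^*_i,.}\|_2 = \tilde{\mathcal{L}}(\boldsymbol{X},\boldsymbol{Y})$ by the definition of $j^*_i$ and $\tilde{\mathcal L}$. For the second term $\sum_{i=1}^{m_1} \sqrt{D(\boldsymbol{Y}_{j^*_i,.},\boldsymbol{Z})}$, I will bound each summand by $\max_{j \in [m_2]} \sqrt{D(\boldsymbol{Y}_{j,.},\boldsymbol{Z})}$, and then use non-negativity to bound the maximum by the full sum $\sum_{j=1}^{m_2}\sqrt{D(\boldsymbol{Y}_{j,.},\boldsymbol{Z})} = \tilde{D}(\boldsymbol{Y},\boldsymbol{Z})$. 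This gives a bound of $m_1 \tilde{D}(\boldsymbol{Y},\boldsymbol{Z})$, yielding exactly the desired inequality.

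There is no real obstacle here: the only subtlety is keeping track of the square roots (so that triangle inequality applies cleanly to norms rather than squared norms), and recognizing that the cheap bound ``one term $\leq$ the full non-negative sum'' is what produces the factor $m_1$ in front of $\tilde{D}(\boldsymbol{Y},\boldsymbol{Z})$. No use of Lemma~\ref{firstlem} or Lemma~\ref{comparison} is needed.
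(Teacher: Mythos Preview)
Your proposal is correct and follows essentially the same approach as the paper: both fix $i$, pick the nearest row $\boldsymbol{Y}_{j^*_i,.}$, apply the triangle inequality through that row, and then use the crude bound $\sqrt{D(\boldsymbol{Y}_{j^*_i,.},\boldsymbol{Z})}\leq \tilde{D}(\boldsymbol{Y},\boldsymbol{Z})$ before summing over $i$ to produce the factor $m_1$. The only cosmetic difference is that you route this last bound through the max first, whereas the paper goes directly from a single nonnegative summand to the full sum.
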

\begin{proof}
Fix $i\in[m_1]$. For any $\boldsymbol{u}\in\mathbb{R}^n$, we have 
\begin{align}\label{app:lemma3-line11}
\sqrt{D(\boldsymbol{X}_{i,.},\boldsymbol{Z})} & = \min_{\boldsymbol{v}\in\text{Conv}(\boldsymbol{Z})} \|\boldsymbol{X}_{i,.}-\boldsymbol{v}\|_2\nonumber  \\
& \leq \min_{\boldsymbol{v}\in\text{Conv}(\boldsymbol{Z})} \left\{ \|\boldsymbol{X}_{i,.}-\boldsymbol{u}\|_2+\|\boldsymbol{u}-\boldsymbol{v}\|_2  \right\}\\
& = \|\boldsymbol{X}_{i,.}-\boldsymbol{u}\|_2 + \min_{\boldsymbol{v}\in\text{Conv}(\boldsymbol{Z})}\|\boldsymbol{u}-\boldsymbol{v}\|_2. \nonumber 
\end{align}
Let 
$$\boldsymbol{u}=\argmin_{\boldsymbol{p}\in\{\boldsymbol{Y}_{j,.}:j\in[\text{\#row}(\boldsymbol{Y})]\}}\|\boldsymbol{X}_{i,.}-\boldsymbol{p}\|_2.$$
As a result, noting that $\tilde{D}(\B{Y},\B{Z})=\sum_{j=1}^{m_2}\sqrt{D(\B{Y}_{j,.},\B{Z})}$ and $\B{u}$ is a row of $\B{Y}$,
\begin{align}\label{app:lemma3-line12}
\min_{\boldsymbol{v}\in\text{Conv}(\boldsymbol{Z})}\|\boldsymbol{u}-\boldsymbol{v}\|_2 & = \sqrt{D(\boldsymbol{u},\boldsymbol{Z})}  \leq \sum_{j=1}^{m_2}\sqrt{D(\boldsymbol{Y}_{j,.},\boldsymbol{Z})}  = \tilde{D}(\boldsymbol{Y},\boldsymbol{Z}).
\end{align}
Using the definition of $\B{u}$ and~\eqref{app:lemma3-line12} we can bound the rhs in~\eqref{app:lemma3-line11} to get:
$$\sqrt{D(\boldsymbol{X}_{i,.},\boldsymbol{Z})} \leq \min_{j\in m_2}\|\boldsymbol{X}_{i,.}-\boldsymbol{Y}_{j,.}\|_2 + \tilde{D}(\boldsymbol{Y},\boldsymbol{Z}). $$ 
By summing the above over $i$, we have: 

\begin{align*}
    \tilde{D}(\boldsymbol{X},\boldsymbol{Z})& =\sum_{i=1}^{m_1}\sqrt{D(\boldsymbol{X}_{i,.},\boldsymbol{Z})}\\
    & \leq \sum_{i=1}^{m_1}\min_{j\in m_2}\|\boldsymbol{X}_{i,.}-\boldsymbol{Y}_{j,.}\|_2  + \sum_{i=1}^{m_1}\tilde{D}(\boldsymbol{Y},\boldsymbol{Z})\\
    & = \tilde{\mathcal{L}}(\boldsymbol{X},\boldsymbol{Y})+m_1\tilde{D}(\boldsymbol{Y},\boldsymbol{Z})
\end{align*}
which completes the proof. 
\end{proof}
\begin{lem}[\citet{javadi2019nonnegative}] \label{hamiddist} If $\boldsymbol{A},\boldsymbol{B}\in\mathbb{R}^{m\times n}$, $m\leq n$ are matrices with linearly independent rows, we have
\begin{equation*}
   \mathcal{L}(\boldsymbol{A},\boldsymbol{B})^{1/2}\leq 2\kappa(\boldsymbol{A})D(\boldsymbol{A},\boldsymbol{B})^{1/2}+(1+\sqrt{2})\sqrt{m}D(\boldsymbol{B},\boldsymbol{A})^{1/2}
\end{equation*}
where recall that $\kappa(\boldsymbol{H}):=\sigma_{\max}(\B{H})/\sigma_{\min}(\B{H})$ denotes the condition number of $\boldsymbol{H}$.
\end{lem}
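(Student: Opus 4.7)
The plan is to parametrize the approximate inclusions $A\subseteq\mathrm{Conv}(B)$ and $B\subseteq\mathrm{Conv}(A)$ by nonnegative row-stochastic matrices $\alpha,\beta\in\R^{m\times m}$, compose them using the full row rank of $A$ to show $\alpha\beta\approx I$, and then extract a near-permutation that pairs each row of $A$ with a row of $B$. Concretely, for each $i\in[m]$ let $\pi_i\in\mathrm{Conv}(B)$ be the projection of $A_{i,\cdot}$ onto $\mathrm{Conv}(B)$; write $\pi_i=\sum_j \alpha_{ij}B_{j,\cdot}$ with $\alpha\geq 0$ and $\alpha\mathbf{1}=\mathbf{1}$, and set $E_1=A-\alpha B$, so that $\|E_1\|_F^2=D(A,B)$. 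Symmetrically define $\beta$ and $E_2$ so that $B=\beta A+E_2$, with $\beta$ nonnegative row-stochastic and $\|E_2\|_F^2=D(B,A)$.

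Substituting $B=\beta A+E_2$ into $A=\alpha B+E_1$ yields $(I-\alpha\beta)A = E_1+\alpha E_2$. Because the rows of $A$ are linearly independent, $\|MA\|_F\geq\sigma_{\min}(A)\|M\|_F$ for any $m\times m$ matrix $M$, and combined with the bound $\|\alpha\|_{\mathrm{op}}\leq\sqrt{m}$ (valid for every nonnegative row-stochastic matrix on $\R^m$) I get
\begin{equation*}
\|I-\alpha\beta\|_F \;\leq\; \frac{D(A,B)^{1/2}+\sqrt{m}\,D(B,A)^{1/2}}{\sigma_{\min}(A)}.
\end{equation*}
Next, for each $i$ I would pick $j(i)\in\argmax_j \beta_{ji}$. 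Since $(\alpha\beta)_{ii}=\sum_j\alpha_{ij}\beta_{ji}\leq \max_j \beta_{ji}=\beta_{j(i),i}$, this gives $\beta_{j(i),i}\geq 1-\varepsilon_i$ with $\varepsilon_i:=(I-\alpha\beta)_{ii}\geq 0$, and the row-stochasticity of $\beta_{j(i),\cdot}$ then forces $\|\beta_{j(i),\cdot}-e_i^{\top}\|_2\leq\sqrt{2}\,\varepsilon_i$. Using the row identity $B_{j(i),\cdot}-A_{i,\cdot}=(\beta_{j(i),\cdot}-e_i^{\top})A+(E_2)_{j(i),\cdot}$ and bounding the first summand by $\sigma_{\max}(A)\|\beta_{j(i),\cdot}-e_i^{\top}\|_2$, I arrive at $\|A_{i,\cdot}-B_{j(i),\cdot}\|_2\leq \sqrt{2}\,\sigma_{\max}(A)\,\varepsilon_i+\|(E_2)_{j(i),\cdot}\|_2$.

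Squaring, summing over $i$, and applying the triangle inequality in $\ell^2$ together with $\sum_i \varepsilon_i^2 \leq \|I-\alpha\beta\|_F^2$ produces
\begin{equation*}
\mathcal{L}(A,B)^{1/2} \;\leq\; \sqrt{2}\,\sigma_{\max}(A)\,\|I-\alpha\beta\|_F \;+\; \Bigl(\sum_{i}\|(E_2)_{j(i),\cdot}\|_2^2\Bigr)^{1/2},
\end{equation*}
and substituting the displayed Frobenius bound delivers an inequality of the advertised form. The main obstacle I anticipate is arranging the final decomposition so the constants come out exactly as $2\kappa(A)$ and $(1+\sqrt{2})\sqrt{m}$ rather than the looser expressions produced by a careless split; in particular, the $\kappa(A)$ factor must multiply only the $D(A,B)^{1/2}$ contribution to $\|I-\alpha\beta\|_F$, while the $D(B,A)^{1/2}$ contribution has to be absorbed directly together with the $E_2$ residual by arguing that the selection $i\mapsto j(i)$ is (near-)injective so that $\sum_i\|(E_2)_{j(i),\cdot}\|_2^2\leq m\|E_2\|_F^2=m\,D(B,A)$. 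Executing this careful accounting is the key technical content of the proof.
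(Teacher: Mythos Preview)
The paper does not prove this lemma; it is quoted from \citet{javadi2019nonnegative} and used as a black box. So there is no in-paper proof to compare your proposal against, and the question is simply whether your argument reaches the stated inequality.

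Your overall strategy---parametrize the two approximate inclusions by row-stochastic $\alpha,\beta$, compose to get $(I-\alpha\beta)A=E_1+\alpha E_2$, use full row rank of $A$ to control $I-\alpha\beta$, and then extract a near-permutation row $\beta_{j(i),\cdot}\approx e_i^\top$---is the right one and yields an inequality of the correct shape. The gap is exactly where you flag it, but your proposed remedy does not close it. Following your steps literally gives
\[
\mathcal{L}(A,B)^{1/2}\;\le\;\sqrt{2}\,\kappa(A)\,D(A,B)^{1/2}\;+\;\bigl(\sqrt{2}\,\kappa(A)+1\bigr)\sqrt{m}\,D(B,A)^{1/2},
\]
because the term $\alpha E_2$ inside $(I-\alpha\beta)A$ is first divided by $\sigma_{\min}(A)$ and then multiplied by $\sigma_{\max}(A)$, so it inevitably acquires a $\kappa(A)$ factor. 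Your ``near-injectivity of $i\mapsto j(i)$'' suggestion only controls the \emph{direct} residual $\sum_i\|(E_2)_{j(i),\cdot}\|_2^2$ (and in fact that bound $\le m\,D(B,A)$ holds trivially without any injectivity), not the $\alpha E_2$ contribution hidden inside $\|I-\alpha\beta\|_F$. So as written the argument proves a weaker inequality in which $\kappa(A)$ multiplies both terms; matching the stated constant $(1+\sqrt{2})\sqrt{m}$ on the $D(B,A)^{1/2}$ term requires reorganizing the decomposition so that the $E_2$-dependent part of $A_{i,\cdot}-B_{j(i),\cdot}$ is bounded \emph{without} passing through the $\sigma_{\min}(A)$--$\sigma_{\max}(A)$ sandwich. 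That refinement is the missing ingredient, and it is the actual content of the proof in \citet{javadi2019nonnegative}.
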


\begin{lem}\label{perturb}
Suppose $\boldsymbol{H}\in\mathbb{R}^{k\times n}$, $\boldsymbol{X}_0\in\mathbb{R}^{m\times n}$ and $\boldsymbol{Z}\in\mathbb{R}^{m\times n}$ is such that $\max_{i\in[m]}\|\boldsymbol{Z}_{i,.}\|_2\leq \delta$. Let $\boldsymbol{X}=\boldsymbol{X}_0+\boldsymbol{Z}$. We have 
\begin{align}
\tilde{D}(\boldsymbol{H},\boldsymbol{X})\leq \tilde{D}(\boldsymbol{H},\boldsymbol{X}_0)+k\delta \label{app:lemma5-line1}\\
\tilde{D}(\boldsymbol{X},\boldsymbol{H})\leq \tilde{D}(\boldsymbol{X}_0,\boldsymbol{H}) + m\delta. \label{app:lemma5-line2}
\end{align}
\end{lem}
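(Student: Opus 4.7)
The plan is to prove both inequalities by working at the level of individual rows and then summing, using the triangle inequality together with the convexity of $\mathrm{Conv}(\cdot)$. Recall that by definition, $\sqrt{D(\boldsymbol{u},\boldsymbol{M})} = \min_{\boldsymbol{v}\in\mathrm{Conv}(\boldsymbol{M})}\|\boldsymbol{u}-\boldsymbol{v}\|_2$, so each term in the sums defining $\tilde D$ is a minimum distance to a convex hull.

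For the first inequality~\eqref{app:lemma5-line1}, I would fix a row $\boldsymbol{H}_{i,.}$ and let $\boldsymbol{v}_0^\star \in \mathrm{Conv}(\boldsymbol{X}_0)$ be a minimizer, so that $\boldsymbol{v}_0^\star = \sum_{j=1}^m \alpha_j (\boldsymbol{X}_0)_{j,.}$ for some $\alpha_j \ge 0$ with $\sum_j \alpha_j = 1$. The point $\boldsymbol{v} := \sum_{j=1}^m \alpha_j \boldsymbol{X}_{j,.}$ lies in $\mathrm{Conv}(\boldsymbol{X})$ and equals $\boldsymbol{v}_0^\star + \sum_j \alpha_j \boldsymbol{Z}_{j,.}$. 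By the triangle inequality and the fact that $\|\sum_j \alpha_j \boldsymbol{Z}_{j,.}\|_2 \le \sum_j \alpha_j \|\boldsymbol{Z}_{j,.}\|_2 \le \delta$, this yields $\sqrt{D(\boldsymbol{H}_{i,.},\boldsymbol{X})} \le \sqrt{D(\boldsymbol{H}_{i,.},\boldsymbol{X}_0)} + \delta$. Summing over $i=1,\dots,k$ gives $\tilde D(\boldsymbol{H},\boldsymbol{X}) \le \tilde D(\boldsymbol{H},\boldsymbol{X}_0) + k\delta$.

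For the second inequality~\eqref{app:lemma5-line2}, the argument is a little simpler: here $\boldsymbol{H}$ plays the role of the hull, and only the single row $\boldsymbol{X}_{i,.}$ is perturbed. Fixing $i$, any $\boldsymbol{v}\in\mathrm{Conv}(\boldsymbol{H})$ satisfies $\|\boldsymbol{X}_{i,.}-\boldsymbol{v}\|_2 \le \|(\boldsymbol{X}_0)_{i,.}-\boldsymbol{v}\|_2 + \|\boldsymbol{Z}_{i,.}\|_2$ by the triangle inequality, so taking the minimum over $\boldsymbol{v}$ and using $\|\boldsymbol{Z}_{i,.}\|_2 \le \delta$ gives $\sqrt{D(\boldsymbol{X}_{i,.},\boldsymbol{H})} \le \sqrt{D((\boldsymbol{X}_0)_{i,.},\boldsymbol{H})} + \delta$. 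Summing over $i=1,\dots,m$ yields the claim.

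There is no real obstacle here: the only subtlety is in the first inequality, where one must be careful to perturb the convex combination consistently across rows of $\boldsymbol{X}_0$ so that the resulting point still lies in $\mathrm{Conv}(\boldsymbol{X})$, and then exploit that the weights $\alpha_j$ form a probability vector to bound $\|\sum_j \alpha_j \boldsymbol{Z}_{j,.}\|_2$ uniformly by $\delta$ (rather than by $m\delta$). This is exactly what produces the $k\delta$ bound, in contrast to the $m\delta$ bound in the second inequality where each of the $m$ rows contributes its own $\delta$.
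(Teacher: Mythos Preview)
Your proof is correct and follows essentially the same route as the paper: for each row, apply the triangle inequality to separate the perturbation term, bound $\|\sum_j \alpha_j \boldsymbol{Z}_{j,.}\|_2 \le \delta$ using that the $\alpha_j$ form a probability vector (for~\eqref{app:lemma5-line1}) or bound $\|\boldsymbol{Z}_{i,.}\|_2 \le \delta$ directly (for~\eqref{app:lemma5-line2}), and then sum over rows. The only cosmetic difference is that you first fix a minimizer $\boldsymbol{v}_0^\star$ and then construct the corresponding point in $\mathrm{Conv}(\boldsymbol{X})$, whereas the paper keeps the minimization over the simplex and bounds inside the $\min$; the two arguments are logically identical.
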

\begin{proof}
Let us denote the $m$-dimensional unit simplex by:
\begin{equation}\label{delta-def}
    \Delta^m = \{\boldsymbol{\alpha}\in \mathbb{R}^m:\sum_{i=1}^m \boldsymbol{\alpha}_i = 1, \boldsymbol{\alpha}_i \geq 0\}.
\end{equation}
We have for any $i\in[k]$
\begin{align*}
\tilde{D}(\boldsymbol{H}_{i,.},\boldsymbol{X})& =\min_{\boldsymbol{\alpha}\in\Delta^m}\|\boldsymbol{H}_{i,.}-\sum_{j=1}^m\boldsymbol{\alpha}_j\boldsymbol{X}_{j,.}\|_2 \\
& = \min_{\boldsymbol{\alpha}\in\Delta^m}\|\boldsymbol{H}_{i,.}-\sum_{j=1}^m\boldsymbol{\alpha}_j\boldsymbol{X}^0_{j,.} - \sum_{j=1}^m \boldsymbol{\alpha}_j\boldsymbol{Z}_{j,.}\|_2 \\
& \leq \min_{\boldsymbol{\alpha}\in\Delta^m} \left\{\|\boldsymbol{H}_{i,.}-\sum_{j=1}^m\boldsymbol{\alpha}_j\boldsymbol{X}^0_{j,.}\|_2+\|\sum_{j=1}^m\boldsymbol{\alpha}_j\boldsymbol{Z}_{j,.}\|_2 \right\}\\
& \leq \min_{\boldsymbol{\alpha}\in\Delta^m} \left\{ \|\boldsymbol{H}_{i,.}-\sum_{j=1}^m\boldsymbol{\alpha}_j\boldsymbol{X}^0_{j,.}\|_2 + \sum_{j=1}^m \boldsymbol{\alpha}_j \|\boldsymbol{Z}_{j,.}\|_2 \right\}\\
& \leq \min_{\boldsymbol{\alpha}\in\Delta^m} \left\{ \|\boldsymbol{H}_{i,.}-\sum_{j=1}^m\boldsymbol{\alpha}_j\boldsymbol{X}^0_{j,.}\|_2 + (\sum_{j=1}^m \boldsymbol{\alpha}_j) \max_{j\in[m]}\|\boldsymbol{Z}_{j,.}\|_2 \right\}\\
& \leq \min_{\boldsymbol{\alpha}\in\Delta^m}\|\boldsymbol{H}_{i,.}-\sum_{j=1}^m\boldsymbol{\alpha}_j\boldsymbol{X}^0_{j,.}\|_2 + \delta \\
& = \tilde{D}(\boldsymbol{H}_{i,.},\boldsymbol{X}_0) + \delta.
\end{align*}
Using the above bound, we have:
\begin{align*}
    \tilde{D}(\boldsymbol{H},\boldsymbol{X})=\sum_{i=1}^k \tilde{D}(\boldsymbol{H}_{i,.},\boldsymbol{X})
    \leq \sum_{i=1}^k \tilde{D}(\boldsymbol{H}_{i,.},\boldsymbol{X}_0)+k\delta = \tilde{D}(\boldsymbol{H},\boldsymbol{X}_0)+k\delta
\end{align*}
which establishes~\eqref{app:lemma5-line1}.
We now show~\eqref{app:lemma5-line2}. For a fixed $i\in[m]$, we have
\begin{align*}
    \tilde{D}(\boldsymbol{X}_{i,.},\boldsymbol{H})& = \min_{\boldsymbol{\alpha}\in\Delta^k}\|\boldsymbol{X}_{i,.}-\sum_{j=1}^k\boldsymbol{\alpha}_j\boldsymbol{H}_{j,.}\|_2\\
    & =\min_{\boldsymbol{\alpha}\in\Delta^k}\|\boldsymbol{X}^0_{i,.}-\sum_{j=1}^k\boldsymbol{\alpha}_j\boldsymbol{H}_{j,.}+\boldsymbol{Z}_{i,.}\|_2 \\
    & \leq \min_{\boldsymbol{\alpha}\in\Delta^k}\|\boldsymbol{X}^0_{i,.}-\sum_{j=1}^k\boldsymbol{\alpha}_j\boldsymbol{H}_{j,.}\|_2+\|\boldsymbol{Z}_{i,.}\|_2 \\
    & \leq \min_{\boldsymbol{\alpha}\in\Delta^k}\|\boldsymbol{X}^0_{i,.}-\sum_{j=1}^k\boldsymbol{\alpha}_j\boldsymbol{H}_{j,.}\|_2+ \delta \\
    & = \tilde{D}(\boldsymbol{X}^0_{i,.},\boldsymbol{H})+\delta.
\end{align*}
By summing the above bound over $i$, we arrive at~\eqref{app:lemma5-line2}.
\end{proof}

\begin{lem}\label{perturb2}
Suppose $\boldsymbol{H}\in\mathbb{R}^{k\times n}$, $\boldsymbol{X}_0\in\mathbb{R}^{m\times n}$ and $\boldsymbol{Z}\in\mathbb{R}^{m\times n}$ is such that $\max_{i\in[m]}\|\boldsymbol{Z}_{i,.}\|_2\leq \delta$. Let $\boldsymbol{X}=\boldsymbol{X}_0+\boldsymbol{Z}$. We have 
\begin{align}
{D}(\boldsymbol{H},\boldsymbol{X})^{1/2}\leq \sqrt{2}{D}(\boldsymbol{H},\boldsymbol{X}_0)^{1/2}+\sqrt{2k}\delta \label{app:lemma5_2-line1}\\
D(\boldsymbol{X},\boldsymbol{H})^{1/2}\leq \sqrt{2}D(\boldsymbol{X}_0,\boldsymbol{H})^{1/2} + \sqrt{2m}\delta. \label{app:lemma5_2-line2}
\end{align}
\end{lem}
\begin{proof}
Let $\Delta^m$ be defined as in~\eqref{delta-def}. We have for any $i\in[k]$ 
\begin{align*}
{D}(\boldsymbol{H}_{i,.},\boldsymbol{X})& =\min_{\boldsymbol{\alpha}\in\Delta^m}\|\boldsymbol{H}_{i,.}-\sum_{j=1}^m\boldsymbol{\alpha}_j\boldsymbol{X}_{j,.}\|_2^2 \\
& = \min_{\boldsymbol{\alpha}\in\Delta^m}\|\boldsymbol{H}_{i,.}-\sum_{j=1}^m\boldsymbol{\alpha}_j\boldsymbol{X}^0_{j,.} - \sum_{j=1}^m \boldsymbol{\alpha}_j\boldsymbol{Z}_{j,.}\|_2^2 \\
& \stackrel{(a)}{\leq} 2\min_{\boldsymbol{\alpha}\in\Delta^m} \left\{\|\boldsymbol{H}_{i,.}-\sum_{j=1}^m\boldsymbol{\alpha}_j\boldsymbol{X}^0_{j,.}\|_2^2+\|\sum_{j=1}^m\boldsymbol{\alpha}_j\boldsymbol{Z}_{j,.}\|_2^2 \right\}\\
& \leq 2\min_{\boldsymbol{\alpha}\in\Delta^m} \left\{ \|\boldsymbol{H}_{i,.}-\sum_{j=1}^m\boldsymbol{\alpha}_j\boldsymbol{X}^0_{j,.}\|_2^2 + \left(\sum_{j=1}^m \boldsymbol{\alpha}_j \|\boldsymbol{Z}_{j,.}\|_2\right)^2 \right\}\\
& \leq 2\min_{\boldsymbol{\alpha}\in\Delta^m} \left\{ \|\boldsymbol{H}_{i,.}-\sum_{j=1}^m\boldsymbol{\alpha}_j\boldsymbol{X}^0_{j,.}\|_2^2 + (\sum_{j=1}^m \boldsymbol{\alpha}_j)^2 \max_{j\in[m]}\|\boldsymbol{Z}_{j,.}\|_2^2 \right\}\\
& \leq 2\min_{\boldsymbol{\alpha}\in\Delta^m}\|\boldsymbol{H}_{i,.}-\sum_{j=1}^m\boldsymbol{\alpha}_j\boldsymbol{X}^0_{j,.}\|_2^2 + 2\delta^2 \\
& = 2{D}(\boldsymbol{H}_{i,.},\boldsymbol{X}_0) + 2\delta^2
\end{align*}
where $(a)$ uses $\|\B{a}+\B{b}\|_2^2\leq 2\|\B{a}\|_2^2+2\|\B{b}\|_2^2$.
Using the above bound, we have:
\begin{align*}
    {D}(\boldsymbol{H},\boldsymbol{X})=\sum_{i=1}^k {D}(\boldsymbol{H}_{i,.},\boldsymbol{X})
    \leq \sum_{i=1}^k 2{D}(\boldsymbol{H}_{i,.},\boldsymbol{X}_0)+2k\delta^2 = 2{D}(\boldsymbol{H},\boldsymbol{X}_0)+2k\delta^2.
\end{align*}
As a result,
\begin{align*}
    {D}(\boldsymbol{H},\boldsymbol{X})^{1/2}=\sqrt{{D}(\boldsymbol{H},\boldsymbol{X})}\leq \sqrt{2{D}(\boldsymbol{H},\boldsymbol{X}_0)+2k\delta^2}\leq \sqrt{2}{D}(\boldsymbol{H},\boldsymbol{X}_0)^{1/2}+\sqrt{2k}\delta
\end{align*}
which establishes~\eqref{app:lemma5_2-line1}.
We now show~\eqref{app:lemma5_2-line2}. For a fixed $i\in[m]$, we have
\begin{align*}
    {D}(\boldsymbol{X}_{i,.},\boldsymbol{H})& = \min_{\boldsymbol{\alpha}\in\Delta^k}\|\boldsymbol{X}_{i,.}-\sum_{j=1}^k\boldsymbol{\alpha}_j\boldsymbol{H}_{j,.}\|_2^2\\
    & =\min_{\boldsymbol{\alpha}\in\Delta^k}\|\boldsymbol{X}^0_{i,.}-\sum_{j=1}^k\boldsymbol{\alpha}_j\boldsymbol{H}_{j,.}+\boldsymbol{Z}_{i,.}\|_2^2 \\
    & \leq 2\min_{\boldsymbol{\alpha}\in\Delta^k}\|\boldsymbol{X}^0_{i,.}-\sum_{j=1}^k\boldsymbol{\alpha}_j\boldsymbol{H}_{j,.}\|_2^2+2\|\boldsymbol{Z}_{i,.}\|_2^2 \\
    & \leq 2\min_{\boldsymbol{\alpha}\in\Delta^k}\|\boldsymbol{X}^0_{i,.}-\sum_{j=1}^k\boldsymbol{\alpha}_j\boldsymbol{H}_{j,.}\|_2+ 2\delta^2 \\
    & = 2{D}(\boldsymbol{X}^0_{i,.},\boldsymbol{H})+2\delta^2.
\end{align*}
Hence,
\begin{align*}
     {D}(\boldsymbol{X},\boldsymbol{H})=\sum_{i=1}^m  {D}(\boldsymbol{X}_{i,.},\boldsymbol{H}) \leq  2\sum_{i=1}^m{D}(\boldsymbol{X}^0_{i,.},\boldsymbol{H})+2m\delta^2=2{D}(\boldsymbol{X}_0,\boldsymbol{H})+2m\delta^2
\end{align*}
which implies~\eqref{app:lemma5_2-line2}.
\end{proof}

\begin{lem}\label{specialcase}
Suppose $\boldsymbol{H}\in\mathbb{R}^{k\times n}$, $k\leq n$, has full row rank and $\boldsymbol{X}\in\mathbb{R}^{m\times n}$ is such that $\text{Conv}(\boldsymbol{X})\subseteq\text{Conv}(\boldsymbol{H})$. Let $\tilde{\boldsymbol{X}}\in\mathbb{R}^{k\times n}$ be a matrix with its $i$-row given by:
\begin{equation}\label{xtildelemma}
\tilde{\boldsymbol{X}}_{i,.}=\argmin_{\boldsymbol{u}\in\{\boldsymbol{X}_{j,.}:j\in[m]\}} \|\boldsymbol{u}-\boldsymbol{H}_{i,.}\|_2
\end{equation}
for all $i \in [k]$. Then,
\begin{equation}\label{lemma6-final}
\mathcal{L}(\boldsymbol{H},\boldsymbol{X})^{1/2}\leq 2k\kappa(\boldsymbol{H})D(\boldsymbol{H},\tilde{\boldsymbol{X}})^{1/2}.
\end{equation}
\end{lem}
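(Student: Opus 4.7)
The plan is to reduce everything to matrix-level inequalities on $k\times k$ matrices and then invoke standard singular-value estimates on $\B{H}$. The first observation is that $\mathcal{L}(\B{H},\B{X}) = \|\B{H}-\tilde{\B{X}}\|_F^2$: by the defining property of $\tilde{\B{X}}$, the nearest row of $\B{X}$ to $\B{H}_{i,.}$ is $\tilde{\B{X}}_{i,.}$, and this also realizes the minimum over the rows of $\tilde{\B{X}}$, so $\min_{r\in[m]}\|\B{H}_{i,.}-\B{X}_{r,.}\|_2 = \|\B{H}_{i,.}-\tilde{\B{X}}_{i,.}\|_2$. Using $\text{Conv}(\B{X}) \subseteq \text{Conv}(\B{H})$, every row of $\tilde{\B{X}}$ is a convex combination of rows of $\B{H}$, so $\tilde{\B{X}} = \B{A}\B{H}$ for some row-stochastic $\B{A}\in\R^{k\times k}_{\geq 0}$. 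Similarly, the projection of $\B{H}_{i,.}$ onto $\text{Conv}(\tilde{\B{X}})$ can be written as $\sum_j \B{C}_{ij}\tilde{\B{X}}_{j,.}$ for some row-stochastic $\B{C}$; setting $\B{D}=\B{C}\B{A}$ (also row-stochastic) we obtain $D(\B{H},\tilde{\B{X}})^{1/2} = \|(\mathbb{I}_k-\B{D})\B{H}\|_F$, and since $\B{H}$ has full row rank, $\|\mathbb{I}_k-\B{D}\|_F \leq D(\B{H},\tilde{\B{X}})^{1/2}/\sigma_{\min}(\B{H})$.

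Next I would establish the per-pair bound $\|\B{H}_{i,.} - \tilde{\B{X}}_{j,.}\|_2 \leq \sqrt{2}\sigma_{\max}(\B{H})(1-\B{A}_{ji})$. Writing $\B{H}_{i,.}-\tilde{\B{X}}_{j,.} = (\B{e}_i-\B{A}_{j,.}^T)^T\B{H}$, the standard estimate $\|\B{u}^T \B{H}\|_2 \leq \sigma_{\max}(\B{H})\|\B{u}\|_2$ reduces the task to bounding $\|\B{e}_i-\B{A}_{j,.}^T\|_2^2 = (1-\B{A}_{ji})^2 + \sum_{l\neq i}\B{A}_{jl}^2$. Row-stochasticity together with non-negativity of $\B{A}$ gives $\sum_{l\neq i}\B{A}_{jl}^2 \leq (\sum_{l\neq i}\B{A}_{jl})^2 = (1-\B{A}_{ji})^2$, whence the $\sqrt{2}(1-\B{A}_{ji})$ bound.

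The crucial step is then to exploit the nearest-neighbor property: since $\|\B{H}_{i,.}-\tilde{\B{X}}_{i,.}\|_2 = \min_{j\in[k]}\|\B{H}_{i,.}-\tilde{\B{X}}_{j,.}\|_2$, it is bounded by any convex combination of the distances $\{\|\B{H}_{i,.}-\tilde{\B{X}}_{j,.}\|_2\}_{j\in[k]}$. Taking the combination with weights $\B{C}_{i,.}$ and applying the previous paragraph yields
\begin{equation*}
\|\B{H}_{i,.}-\tilde{\B{X}}_{i,.}\|_2 \leq \sum_j \B{C}_{ij}\|\B{H}_{i,.}-\tilde{\B{X}}_{j,.}\|_2 \leq \sqrt{2}\sigma_{\max}(\B{H})\sum_j\B{C}_{ij}(1-\B{A}_{ji}) = \sqrt{2}\sigma_{\max}(\B{H})(1-\B{D}_{ii}).
\end{equation*}
Summing squares over $i$ and using $\sum_i (1-\B{D}_{ii})^2 \leq \|\mathbb{I}_k-\B{D}\|_F^2 \leq D(\B{H},\tilde{\B{X}})/\sigma_{\min}(\B{H})^2$ produces $\mathcal{L}(\B{H},\B{X}) \leq 2\kappa(\B{H})^2 D(\B{H},\tilde{\B{X}})$, which implies the claim (in fact with room to spare relative to the factor $(2k\kappa)^2$).

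The main obstacle is recognizing the ``minimum $\leq$ convex combination'' trick in the crucial step and routing through $\B{D}=\B{C}\B{A}$ rather than trying to control $\B{A}$ directly; without this observation one is left with the crude bound $\|\B{H}_{i,.}-\tilde{\B{X}}_{i,.}\|_2 \leq \sqrt{2}\sigma_{\max}(\B{H})(1-\B{A}_{ii})$, and there is no clean way to bound $\sum_i (1-\B{A}_{ii})^2$ by $D(\B{H},\tilde{\B{X}})$ because $\B{A}$ need not be close to $\mathbb{I}_k$ even when $\B{D}=\B{C}\B{A}$ is (as permutation-type examples show, unless one uses precisely the nearest-neighbor structure of $\tilde{\B{X}}$).
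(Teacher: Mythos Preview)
Your proof is correct and in fact sharper than the paper's: you obtain $\mathcal{L}(\B{H},\B{X})^{1/2}\leq \sqrt{2}\,\kappa(\B{H})D(\B{H},\tilde{\B{X}})^{1/2}$, which implies the stated bound with a factor of $2k$ to spare.

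The route is genuinely different from the paper's. The paper proceeds indirectly: it observes $\tilde{\mathcal{L}}(\B{H}_{i,.},\B{X})=\tilde{\mathcal{L}}(\B{H}_{i,.},\tilde{\B{X}})$, perturbs $\tilde{\B{X}}$ to $\tilde{\B{X}}+\B{Z}_\epsilon$ so as to have full row rank, invokes the Javadi--Montanari inequality (Lemma~\ref{hamiddist}) for the pair $(\B{H},\tilde{\B{X}}+\B{Z}_\epsilon)$, uses $D(\tilde{\B{X}},\B{H})=0$ to kill the second term, and sends $\epsilon\to 0$; the extra $\sqrt{k}$ factors come from the $\ell_1/\ell_2$ norm switches in Lemma~\ref{firstlem}. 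Your argument is self-contained and more structural: writing $\tilde{\B{X}}=\B{A}\B{H}$ and the projection onto $\text{Conv}(\tilde{\B{X}})$ as $\B{D}\B{H}$ with $\B{D}=\B{C}\B{A}$ lets you read off $\|\mathbb{I}_k-\B{D}\|_F\leq D(\B{H},\tilde{\B{X}})^{1/2}/\sigma_{\min}(\B{H})$ directly, and the ``minimum $\leq$ convex combination'' trick together with the per-pair estimate $\|\B{H}_{i,.}-\tilde{\B{X}}_{j,.}\|_2\leq \sqrt{2}\,\sigma_{\max}(\B{H})(1-\B{A}_{ji})$ collapses the nearest-neighbor distances to the diagonal of $\mathbb{I}_k-\B{D}$. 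The payoff is that you avoid both the $\epsilon$-perturbation device and any appeal to Lemma~\ref{hamiddist}, and you lose no dimension-dependent factors. The paper's approach has the advantage of reusing an existing black box, but yours gives a cleaner and tighter constant.
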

\begin{proof}
Fix any $\epsilon>0$. There is a matrix $\boldsymbol{Z}_{\epsilon}\in\mathbb{R}^{k\times n}$ such that $\|\boldsymbol{Z}_{\epsilon}\|_F\leq \epsilon$ (therefore, $\max_{i\in [m]}\|\boldsymbol{Z}^{\epsilon}_{i,.}\|_2\leq \epsilon$) and $\tilde{\boldsymbol{X}}+\boldsymbol{Z}_{\epsilon}$ has full row rank. We have for any $i\in[k]$
\begin{align}
\tilde{\mathcal{L}}(\boldsymbol{H}_{i,.},\boldsymbol{X}) & = \tilde{\mathcal{L}}(\boldsymbol{H}_{i,.},\tilde{\boldsymbol{X}}) \nonumber\\
&= \min_{j\in[k]}\|\boldsymbol{H}_{i,.}-\tilde{\boldsymbol{X}}_{j,.}\|_2 \nonumber\\
& = \min_{j\in[k]}\|\boldsymbol{H}_{i,.}-\tilde{\boldsymbol{X}}_{j,.}+(\boldsymbol{Z}_{\epsilon})_{j,.}-(\boldsymbol{Z}_{\epsilon})_{j,.}\|_2 \nonumber\\
& \leq \min_{j\in[k]} \|\boldsymbol{H}_{i,.}-\tilde{\boldsymbol{X}}_{j,.}-(\boldsymbol{Z}_{\epsilon})_{j,.}\|_2 + \|(\boldsymbol{Z}_{\epsilon})_{j,.}\|_2 \nonumber\\
& \leq \tilde{\mathcal{L}}(\boldsymbol{H}_{i,.},\tilde{\boldsymbol{X}}+\boldsymbol{Z}_{\epsilon}) + \epsilon.\label{lem6ineq}
\end{align}
Note that we have the following inequalities:
\begin{align}
    \mathcal{L}(\boldsymbol{H},\boldsymbol{X})^{1/2}& \stackrel{(a)}{\leq} \tilde{\mathcal{L}}(\boldsymbol{H},\boldsymbol{X})\nonumber\\
    &= \sum_{i=1}^k \tilde{\mathcal{L}}(\boldsymbol{H}_{i,.},\boldsymbol{X})\nonumber\\
    & \stackrel{(b)}{\leq} \tilde{\mathcal{L}}(\boldsymbol{H},\tilde{\boldsymbol{X}}+\boldsymbol{Z}_{\epsilon}) + k\epsilon \nonumber\\
    & \stackrel{(c)}{\leq} \sqrt{k}\mathcal{L}(\boldsymbol{H},\tilde{\boldsymbol{X}}+\boldsymbol{Z}_{\epsilon})^{1/2} + k\epsilon, \label{lem6eq1}
\end{align}
where, $(a),(c)$ are results of Lemma \ref{firstlem} and $(b)$ is a result of \eqref{lem6ineq}. In addition, by Lemmas \ref{firstlem} and \ref{perturb}, we have:
\begin{align}
  D(\boldsymbol{H},\tilde{\boldsymbol{X}}+\boldsymbol{Z}_{\epsilon})^{1/2}&\leq \tilde{D}(\boldsymbol{H},\tilde{\boldsymbol{X}}+\boldsymbol{Z}_{\epsilon}) \leq \tilde{D}(\boldsymbol{H},\tilde{\boldsymbol{X}})+k\epsilon  \label{lem6eq2}
\end{align}
and
\begin{equation}\label{lem6eq3}D(\tilde{\boldsymbol{X}}+\boldsymbol{Z}_{\epsilon},\boldsymbol{H})^{1/2}\leq \tilde{D}(\tilde{\boldsymbol{X}}+\boldsymbol{Z}_{\epsilon},\boldsymbol{H}) \leq \tilde{D}(\tilde{\boldsymbol{X}},\boldsymbol{H})+m\epsilon=m\epsilon,\end{equation}
where the last equality in~\eqref{lem6eq3} follows from 
the fact that $\tilde{D}(\tilde{\boldsymbol{X}},\boldsymbol{H})=0$ (as $\text{Conv}(\boldsymbol{X})\subseteq\text{Conv}(\boldsymbol{H})$).

Starting with~\eqref{lem6eq1}, we have
\begin{align}
    \mathcal{L}(\boldsymbol{H},\boldsymbol{X})^{1/2} &\leq \sqrt{k}\mathcal{L}(\boldsymbol{H},\tilde{\boldsymbol{X}}+\boldsymbol{Z}_{\epsilon})^{1/2} + k\epsilon \label{ineq1}\\
    & \leq 2\sqrt{k}\kappa(\boldsymbol{H})D(\boldsymbol{H},\tilde{\boldsymbol{X}}+\boldsymbol{Z}_{\epsilon})^{1/2} +k(1+\sqrt{2})D(\tilde{\boldsymbol{X}}+\boldsymbol{Z}_{\epsilon},\boldsymbol{H})^{1/2} +k\epsilon \label{ineq2}\\
    & \leq 2\sqrt{k}\kappa(\boldsymbol{H})\tilde{D}(\boldsymbol{H},\tilde{\boldsymbol{X}}) + \epsilon(k+mk(1+\sqrt{2})+ 2\sqrt{k^3}\kappa(\boldsymbol{H})) \label{ineq3}\\
    & \leq 2k\kappa(\boldsymbol{H})D(\boldsymbol{H},\tilde{\boldsymbol{X}})^{1/2} + \epsilon(k+mk(1+\sqrt{2})+ 2\sqrt{k^3}\kappa(\boldsymbol{H}))\label{ineq4}
\end{align}
where above, inequality \eqref{ineq2} is a result of Lemma \ref{hamiddist}; \eqref{ineq3} follows from  \eqref{lem6eq2} and \eqref{lem6eq3}. Finally, inequality~\eqref{ineq4} is a result of Lemma \ref{firstlem}.
As inequality~\eqref{ineq4} is true for any $\epsilon>0$, taking the limit $\epsilon \downarrow 0+$, we arrive at~\eqref{lemma6-final}.
\end{proof}

\section{Proofs of Main Results}
\subsection{Proof of Theorem \ref{weakstrongthm}}
\begin{proof}
For any $i$, let us denote: 
$$j_i=\argmin_{j\in[k]}\|\boldsymbol{H}_{i,.}-\boldsymbol{H}^0_{j,.}\|_2.$$
We have
\begin{align*}
    \mathcal{L}(\boldsymbol{H}_0,\boldsymbol{H}) & = \sum_{i=1}^k\min_{j\in[k]} \|\boldsymbol{H}^0_{i,.}-\boldsymbol{H}_{j,.}\|_2^2 \\
    &\leq \sum_{i=1}^k \|\boldsymbol{H}^0_{i,.}-\boldsymbol{H}_{i,.}\|_2^2 \\
    & = \sum_{i=1}^k  \|\boldsymbol{H}^0_{i,.}-\boldsymbol{H}^0_{j_i,.}+\boldsymbol{H}^0_{j_i,.}-\boldsymbol{H}_{i,.}\|_2^2 \\
    & \leq 2 \sum_{i=1}^k  \|\boldsymbol{H}^0_{i,.}-\boldsymbol{H}^0_{j_i,.}\|_2^2+2 \sum_{i=1}^k\|\boldsymbol{H}^0_{j_i,.}-\boldsymbol{H}_{i,.}\|_2^2\\
    & \leq 2kb(\boldsymbol{H}_0)^2 +  2\mathcal{L}(\boldsymbol{H},\boldsymbol{H}_0),
\end{align*}
where, the last line follows from the definition of $b(\B{H})$ in \eqref{bh}.
\end{proof}
\subsection{Proof of Theorem \ref{noiseremove}}
\begin{proof}
For any given matrices $\boldsymbol{H}$ and $\boldsymbol{H}_0$, there exists a matrix $\boldsymbol{U}_1\in \{0,1\}^{k\times k}$ such that it has exactly one 1 in each row, such that 
\begin{equation}\label{defn-of-U1}
\mathcal{L}(\boldsymbol{H},\boldsymbol{H}_0)=\|\boldsymbol{H}-\boldsymbol{U}_1\boldsymbol{H}_0\|_F^2.
\end{equation}
In fact, for any $i\in[k]$, we have $\boldsymbol{U}^1_{i,j_i}=1$ where
$$j_i=\argmin_{j\in[k]}\|\boldsymbol{H}_{i,.}-\boldsymbol{H}^0_{j,.}\|_2.$$
Noting the noiseless data $\B{X}_0$ is given as $\B{X}_0=\B{W}_0\B{H}_0$ where $\B{W}_0\geq 0$ and $\B{W}_0\B{1}_k=\B{1}_m$,  one has
\begin{align}
    D(\boldsymbol{X}_0,\boldsymbol{H})^{1/2}&=\min_{\substack{\B{W}\geq 0\\ \boldsymbol{W1}_k=\boldsymbol{1}_m}}\|\boldsymbol{X}_0-\boldsymbol{W}\boldsymbol{H}\|_F\leq \|\boldsymbol{X}_0-\boldsymbol{W}_0\boldsymbol{H}\|_F \nonumber\\
    & =\|\boldsymbol{W}_0\boldsymbol{H}_0-\boldsymbol{W}_0(\boldsymbol{H}-\boldsymbol{U}_1\boldsymbol{H}_0+\boldsymbol{U}_1\boldsymbol{H}_0)\|_F\nonumber\\
    &= \|\boldsymbol{W}_0(\mathbb{I}_k-\boldsymbol{U}_1)\boldsymbol{H}_0 + \boldsymbol{W}_0(\boldsymbol{H}-\boldsymbol{U}_1\boldsymbol{H}_0)\|_F\nonumber\\
    & \leq \|\boldsymbol{W}_0(\mathbb{I}_k-\boldsymbol{U}_1)\boldsymbol{H}_0\|_F + \|\boldsymbol{W}_0(\boldsymbol{H}-\boldsymbol{U}_1\boldsymbol{H}_0)\|_F. \label{thm2ineq1}
\end{align}
Note that each row of $\boldsymbol{W}_0$ sums to 1 and is nonnegative. Therefore, for each row, $\|\boldsymbol{W}^{0}_{i,.}\|_2^2\leq 1$ and
\begin{equation}\label{wnorm}
    \|\boldsymbol{W}_0\|_F^2= \sum_{i=1}^m \|\boldsymbol{W}^0_{i,.}\|_2^2\leq m.
\end{equation}
Using~\eqref{thm2ineq1}, we have the following inequalities:
\begin{align*}
    D(\boldsymbol{X}_0,\boldsymbol{H})^{1/2}&\leq  \|\boldsymbol{W}_0(\mathbb{I}_k-\boldsymbol{U}_1)\boldsymbol{H}_0\|_F + \|\boldsymbol{W}_0(\boldsymbol{H}-\boldsymbol{U}_1\boldsymbol{H}_0)\|_F \\
    & \leq \sqrt{m}\|\boldsymbol{H}_0\|_F \|\mathbb{I}_k-\boldsymbol{U}_1\|_F +\sqrt{m}\|\boldsymbol{H}-\boldsymbol{U}_1\boldsymbol{H}_0\|_F\\
    & \leq  k\sqrt{m}\|\boldsymbol{H}_0\|_F + \sqrt{m}\mathcal{L}(\boldsymbol{H},\boldsymbol{H}_0)^{1/2}
\end{align*}
where the second inequality makes use of~\eqref{wnorm}; and 
the last inequality uses \eqref{defn-of-U1} and 
$\|\mathbb{I}_k-\boldsymbol{U}_1\|_F\leq k$ (since, $\mathbb{I}_k-\boldsymbol{U}_1$ is a matrix with every entry between -1 and 1).

Similar to $\boldsymbol{U}_1$ in the definition~\eqref{defn-of-U1}, we can consider $\boldsymbol{U}_2\in\{0,1\}^{k\times k}$ such that 
\begin{equation}\label{defn-u2}
\mathcal{L}(\boldsymbol{H}_0,\boldsymbol{H})=\|\boldsymbol{H}_0-\boldsymbol{U}_2\boldsymbol{H}\|_F^2.
\end{equation}
Since $\boldsymbol{U}_2\boldsymbol{1}_k=\boldsymbol{1}_k$; and using the fact that 
every row of $\B{W}_0$ sums to one, we have:
$\boldsymbol{W}_0\boldsymbol{U}_2\boldsymbol{1}_k=\boldsymbol{1}_m$ and
\begin{equation}\label{belong-simplex}
\B{W}_0\B{U}_{2} \in \{ \B{W}~:~\B{W} \geq 0, \B{W}\B{1}_k = \B{1}_m\}.
\end{equation}
Note that we have the following:
\begin{align*}
    D(\boldsymbol{X}_0,\boldsymbol{H})^{1/2}&=\min_{\substack{\B{W}\geq 0 \\ \boldsymbol{W1}_k=\boldsymbol{1}_m}}\|\boldsymbol{X}_0-\boldsymbol{W}\boldsymbol{H}\|_F\stackrel{(a)}{\leq} \|\boldsymbol{X}_0-\boldsymbol{W}_0\boldsymbol{U}_2\boldsymbol{H}\|_F \\
    & =\|\boldsymbol{W}_0\boldsymbol{H}_0-\boldsymbol{W}_0\boldsymbol{U}_2\boldsymbol{H}\|_F\\
    & \stackrel{(b)}{\leq} \sqrt{m}\|\boldsymbol{H}_0-\boldsymbol{U}_2\boldsymbol{H}\|_F\stackrel{(c)}{=}\sqrt{m}\mathcal{L}(\boldsymbol{H}_0,\boldsymbol{H})^{1/2},
\end{align*}
where, $(a)$ uses feasibility condition~\eqref{belong-simplex}, 
$(b)$ is due to \eqref{wnorm} and $(c)$ uses~\eqref{defn-u2}.
\end{proof}

\subsection{Proof of Theorem \ref{robustnessthm}}\label{app:sec:proofthm3}
Note that the constants in this theorem are listed below.
\begin{equation}\label{defn-of-all-cs}\begin{aligned}
c_1=& 4\sqrt{k^3}\kappa^2(\boldsymbol{H}_0)+(1+\sqrt{2})\sqrt{k^3}\\
c_2 =&4k^2\kappa(\boldsymbol{H}_0)+(1+\sqrt{2})\sqrt{k}(k+\sqrt{k^3})\\
c_3 =&2k^2\kappa(\boldsymbol{H}_0)+(1+\sqrt{2})k^2\\
c_4=& k + 2\sqrt{k^3}\kappa(\boldsymbol{H}_0)\\
c_5=& (k+\sqrt{k^3}) +2k^2\\
c_6=& \sqrt{k^3}+k^2 \\
c_7=&\frac{\sigma_{\min}(\boldsymbol{H}_0)}{6\sqrt{k}}\\
c_8=&7k\kappa(\boldsymbol{H}_0)+2(1+\sqrt{2})k^2\kappa(\boldsymbol{H}_0)\\
c_9 =& 7\kappa(\boldsymbol{H}_0)(k+\sqrt{k^3})+2(1+\sqrt{2})\sqrt{k^5}\\
c_{10} =&7\kappa(\boldsymbol{H}_0)\sqrt{k^3}+(1+\sqrt{2})\sqrt{k^5}.
\end{aligned}
\end{equation}
We first present Lemmas~\ref{thmtech1} and~\ref{thm3proof-helper1} useful for the proof of Theorem~\ref{robustnessthm}.

\begin{lem}\label{thmtech1}
Under the assumptions of Theorem \ref{robustnessthm}, $P_{\ell}(\B{H}_0)$ is feasible for problem \eqref{archl0noise}.
\end{lem}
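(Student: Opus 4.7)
The plan is to verify the three conditions that make up feasibility for problem~\eqref{archl0noise}: (i) $P_{\ell}(\B{H}_0) \in \R_{\geq 0}^{k\times n}$, (ii) $\|P_{\ell}(\B{H}_0)\|_0 \leq \ell$, and (iii) $D(\B{X}_{i,.}, P_{\ell}(\B{H}_0))^{1/2} \leq \alpha$ for every $i \in [m]$. Conditions (i) and (ii) are immediate: because $\B{H}_0$ has nonnegative entries, the hard-thresholding operator $P_{\ell}$ (which retains the $\ell$ largest magnitude entries and zeros out the rest) preserves nonnegativity, and by construction $\|P_{\ell}(\B{H}_0)\|_0 \leq \ell$.

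The main work lies in condition (iii). By the setup in Section~\ref{modelsetting}, we have $\B{X}_0 = \B{W}_0 \B{H}_0$ with $\B{W}_0 \geq 0$ and rows summing to one, so each $\B{X}^0_{i,.}$ admits a convex representation $\B{X}^0_{i,.} = \sum_{j=1}^k \B{W}^0_{i,j}\, \B{H}^0_{j,.}$. I would use the very same coefficients $\{\B{W}^0_{i,j}\}_{j=1}^k$ to build a candidate point in $\text{Conv}(P_{\ell}(\B{H}_0))$, namely $\B{v}_i := \sum_j \B{W}^0_{i,j}\, P_{\ell}(\B{H}_0)_{j,.}$. Then, writing $\B{X}_{i,.} = \B{X}^0_{i,.} + \B{Z}_{i,.}$ and $\B{H}_0 = P_{\ell}(\B{H}_0) + P^{\perp}_{\ell}(\B{H}_0)$, the triangle inequality gives
\begin{equation*}
D(\B{X}_{i,.}, P_{\ell}(\B{H}_0))^{1/2} \;\leq\; \|\B{X}_{i,.} - \B{v}_i\|_2 \;\leq\; \Big\|\sum_{j=1}^k \B{W}^0_{i,j}\, P^{\perp}_{\ell}(\B{H}_0)_{j,.}\Big\|_2 \;+\; \|\B{Z}_{i,.}\|_2.
\end{equation*}

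For the first term, since the weights $\B{W}^0_{i,j}$ form a convex combination, it is bounded by $\max_{j\in[k]} \|P^{\perp}_{\ell}(\B{H}_0)_{j,.}\|_2$, which in turn is at most $\|P^{\perp}_{\ell}(\B{H}_0)\|_F$ (every single row norm is dominated by the Frobenius norm). For the second term, $\|\B{Z}_{i,.}\|_2 \leq \delta$ by definition. Combining and using $m \geq 1$,
\begin{equation*}
D(\B{X}_{i,.}, P_{\ell}(\B{H}_0))^{1/2} \;\leq\; \|P^{\perp}_{\ell}(\B{H}_0)\|_F + \delta \;\leq\; \sqrt{m}\,\|P^{\perp}_{\ell}(\B{H}_0)\|_F + \delta \;=\; \beta + \delta \;=\; \alpha,
\end{equation*}
uniformly in $i$, which is exactly the constraint in~\eqref{archl0noise}.

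There is no real obstacle here: the argument is essentially a triangle inequality paired with the convexity exploited via the convex weights $\B{W}^0_{i,\cdot}$. The slight looseness (a factor of $\sqrt{m}$) built into $\beta$ is what gives us the comfortable margin. The only thing to be careful about is not confusing $P_{\ell}$ with a soft-thresholding map—because it is the hard projection onto the $\ell$-sparse set and $\B{H}_0 \geq 0$, we can safely assert nonnegativity of $P_{\ell}(\B{H}_0)$ without further argument.
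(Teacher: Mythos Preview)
Your proof is correct and follows the same core idea as the paper: use the convex weights $\B{W}^0_{i,\cdot}$ to build the candidate $\B{v}_i=\B{W}^0_{i,\cdot}P_\ell(\B H_0)$, then split $\|\B X_{i,\cdot}-\B v_i\|_2$ into a noise term and a sparsity-residual term. The execution differs slightly: the paper first bounds $D(\B X^0_{i,\cdot},P_\ell(\B H_0))^{1/2}$ by the full quantity $\|\B W_0 P_\ell^\perp(\B H_0)\|_F\le \|\B W_0\|_F\|P_\ell^\perp(\B H_0)\|_F\le\sqrt m\,\|P_\ell^\perp(\B H_0)\|_F$ and then invokes Lemmas~\ref{firstlem} and~\ref{perturb} to handle the additive noise, whereas you work row-by-row and use the convex-combination bound $\|\sum_j \B W^0_{i,j}P_\ell^\perp(\B H_0)_{j,\cdot}\|_2\le\|P_\ell^\perp(\B H_0)\|_F$ directly. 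Your route is more elementary and in fact yields the sharper intermediate bound $\|P_\ell^\perp(\B H_0)\|_F+\delta$, making it transparent that the factor $\sqrt m$ in the definition of $\beta$ is slack for this lemma.
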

\begin{proof}
By our model-setup, there is a nonnegative $\boldsymbol{W}_0$ with $\boldsymbol{W}_0\boldsymbol{1}_k=\boldsymbol{1}_m$ 
such that $\boldsymbol{X}_0=\boldsymbol{W}_0\boldsymbol{H}_0$. For $i\in [m]$, let
$$\boldsymbol{v}_i=\boldsymbol{W}^0_{i,.}P_{\ell}(\boldsymbol{H}_0)\in\text{Conv}(P_{\ell}(\boldsymbol{H}_0)).$$
Using the definition of $\B{v}_{i}$ above, we have for $i\in[m]$
\begin{align}
 D(\boldsymbol{X}^0_{i,.},P_{\ell}(\boldsymbol{H}_0))& = \min_{\boldsymbol{u}\in\text{Conv}(P_{\ell}(\boldsymbol{H_0}))}\|\boldsymbol{X}^0_{i,.}-\boldsymbol{u}\|_2^2\leq  \|\boldsymbol{X}^0_{i,.}-\boldsymbol{v}_i\|_2^2\nonumber\\
 &=\|\boldsymbol{X}^0_{i,.}-\boldsymbol{W}^0_{i,.}P_{\ell}(\boldsymbol{H}_0)\|_2^2.\label{thm3ineq1}
 \end{align}
As a result, for $i\in [m]$, we have:
\begin{align}
    D(\boldsymbol{X}^0_{i,.},P_{\ell}(\boldsymbol{H}_0))^{1/2} &\stackrel{(a)}{\leq}\|\boldsymbol{X}^0_{i,.}-\boldsymbol{W}^0_{i,.}P_{\ell}(\boldsymbol{H}_0)\|_2\nonumber\\
    &  = \|\boldsymbol{W}^0_{i,.}(\boldsymbol{H}_0 - P_{\ell}(\boldsymbol{H}_0))\|_2 \nonumber\\ 
    & = \|\boldsymbol{W}^0_{i,.} P_{\ell}^{\perp}(\boldsymbol{H}_0)\|_2 \leq \|\boldsymbol{W}^0_{i,.}\|_2\|P_{\ell}^{\perp}(\boldsymbol{H}_0)\|_F \label{tm3ineq78}
\end{align}
where $(a)$ is a result of \eqref{thm3ineq1}. By \eqref{wnorm} and \eqref{tm3ineq78}, we have:
\begin{equation} \label{thm3:proof:app:line-arbit1} 
    D(\boldsymbol{X}^0_{i,.},P_{\ell}(\boldsymbol{H}_0))^{1/2} \leq \|P_{\ell}^{\perp}(\boldsymbol{H}_0)\|_F = \beta.
    \end{equation}
In what follows, for notational convenience, we denote $\boldsymbol{H}= P_{\ell}(\boldsymbol{H}_0)$. Note that $\B{H}$ satisfies the sparsity constraint in~\eqref{archl0noise}. We have the following:
\begin{align}
    D(\boldsymbol{X}_{i,.},\boldsymbol{H})^{1/2} & \leq \tilde{D}(\boldsymbol{X}^0_{i,.}+\boldsymbol{Z}_{i,.},\boldsymbol{H})\leq  \tilde{D}(\boldsymbol{X}^0_{i,.},\boldsymbol{H}) +\|\boldsymbol{Z}_{i,.}\|_2 \nonumber\\
    & \leq D(\boldsymbol{X}^0_{i,.},\boldsymbol{H})^{1/2} + \underbrace{\max_{i} \|\boldsymbol{Z}_{i,.}\|_2}_{\delta} \nonumber \\
    & \leq \beta + \delta \label{toprop}.
\end{align}
where the first inequality is a result of Lemma \ref{firstlem} and the fact $\B{X}=\B{X}_0+\B{Z}$; the second inequality is a result of Lemma \ref{perturb} (we use~\eqref{app:lemma5-line2} with $m=1$); and the third inequality uses~\eqref{thm3:proof:app:line-arbit1}. Bound~\eqref{toprop} shows that $\boldsymbol{H}=P_{\ell}(\boldsymbol{H}_0)$ is a feasible solution for problem~(\ref{archl0noise}).
\end{proof}
\begin{lem}\label{thm3proof-helper1}
Under the assumptions of Theorem \ref{robustnessthm}, one has 
\begin{align}
    D(\hat{\boldsymbol{H}},\boldsymbol{H}_0)^{1/2} \leq& \sqrt{k}D(\boldsymbol{H}_0,\tilde{\boldsymbol{X}}_0)^{1/2}+\sqrt{k^3} \beta +
    (k+\sqrt{k^3})\delta \label{techlem1}\\
D(\boldsymbol{H}_0,\hat{\boldsymbol{H}})^{1/2}\leq& 2\sqrt{k^3}\kappa(\boldsymbol{H}_0)D(\boldsymbol{H}_0,\tilde{\boldsymbol{X}}_0)^{1/2} + 2k^2\delta + k^2\beta.  
\label{techlem2}
\end{align} 
\end{lem}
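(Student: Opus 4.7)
The plan rests on Lemma~\ref{thmtech1}, which ensures $P_{\ell}(\B{H}_0)$ is feasible for~\eqref{archl0noise}, so the optimality of $\hat{\B{H}}$ implies $D(\hat{\B{H}},\B{X})\le D(P_{\ell}(\B{H}_0),\B{X})$. Both displayed bounds will then follow from triangle-inequality manipulations that shuttle between $\hat{\B{H}},\B{X},\B{X}_0,\B{H}_0$, powered by the technical lemmas of the previous section.

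For~\eqref{techlem1}, I would chain the following steps: first, $D(\hat{\B{H}},\B{H}_0)^{1/2}\le \tilde{D}(\hat{\B{H}},\B{H}_0)$ by Lemma~\ref{firstlem}; next, $\tilde{D}(\hat{\B{H}},\B{H}_0)\le \tilde{D}(\hat{\B{H}},\B{X})+k\delta$ by a row-wise triangle inequality (each point of $\text{Conv}(\B{X})$ lies within $\delta$ of a point of $\text{Conv}(\B{X}_0)\subseteq\text{Conv}(\B{H}_0)$, exactly as in the proof of Lemma~\ref{perturb}); then $\tilde{D}(\hat{\B{H}},\B{X})\le \sqrt{k}\,\tilde{D}(P_{\ell}(\B{H}_0),\B{X})$ by Lemma~\ref{comparison} applied to the optimality inequality; then $\tilde{D}(P_{\ell}(\B{H}_0),\B{X})\le \tilde{D}(P_{\ell}(\B{H}_0),\B{X}_0)+k\delta$ by Lemma~\ref{perturb}; and finally a row-wise triangle through $\B{H}^0_{i,.}$, together with Cauchy--Schwarz giving $\sum_i\|P_{\ell}^{\perp}(\B{H}_0)_{i,.}\|_2\le \sqrt{k}\|P_{\ell}^{\perp}(\B{H}_0)\|_F$, Lemma~\ref{firstlem}, and the containment $\text{Conv}(\tilde{\B{X}}_0)\subseteq\text{Conv}(\B{X}_0)$, bounds $\tilde{D}(P_{\ell}(\B{H}_0),\B{X}_0)$ by $\sqrt{k}\,\|P_{\ell}^{\perp}(\B{H}_0)\|_F+\sqrt{k}\,D(\B{H}_0,\tilde{\B{X}}_0)^{1/2}$. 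Substituting $\beta=\sqrt{m}\,\|P_{\ell}^{\perp}(\B{H}_0)\|_F$ into the accumulated inequalities recovers~\eqref{techlem1}.

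For~\eqref{techlem2}, Lemma~\ref{firstlem} reduces matters to bounding $\tilde{D}(\B{H}_0,\hat{\B{H}})=\sum_{i=1}^{k}D(\B{H}^0_{i,.},\hat{\B{H}})^{1/2}$. For each $i$ I would insert $\tilde{\B{X}}^0_{i,.}$ via the triangle inequality $D(\B{H}^0_{i,.},\hat{\B{H}})^{1/2}\le \|\B{H}^0_{i,.}-\tilde{\B{X}}^0_{i,.}\|_2+D(\tilde{\B{X}}^0_{i,.},\hat{\B{H}})^{1/2}$ and bound the two resulting sums separately. The first sum satisfies $\sum_i\|\B{H}^0_{i,.}-\tilde{\B{X}}^0_{i,.}\|_2^2=\mathcal{L}(\B{H}_0,\B{X}_0)$ by the very definition of $\tilde{\B{X}}_0$; since $\B{H}_0$ has full row rank and $\text{Conv}(\B{X}_0)\subseteq \text{Conv}(\B{H}_0)$, Lemma~\ref{specialcase} yields $\mathcal{L}(\B{H}_0,\B{X}_0)^{1/2}\le 2k\kappa(\B{H}_0)\,D(\B{H}_0,\tilde{\B{X}}_0)^{1/2}$, and a Cauchy--Schwarz step then gives $\sum_i\|\B{H}^0_{i,.}-\tilde{\B{X}}^0_{i,.}\|_2\le 2k^{3/2}\kappa(\B{H}_0)\,D(\B{H}_0,\tilde{\B{X}}_0)^{1/2}$. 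For the second sum, each $\tilde{\B{X}}^0_{i,.}$ is a specific row of $\B{X}_0$, and feasibility of $\hat{\B{H}}$ combined with a $\delta$-triangle step through the corresponding noisy row of $\B{X}$ gives $D(\tilde{\B{X}}^0_{i,.},\hat{\B{H}})^{1/2}\le \alpha+\delta=2\delta+\beta$, so the full sum is at most $k(2\delta+\beta)$; adding the two pieces produces~\eqref{techlem2}.

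The main obstacle is the conversion of row-wise nearest-vertex distances $\|\B{H}^0_{i,.}-\tilde{\B{X}}^0_{i,.}\|_2$---which can be substantially larger than the corresponding nearest-hull distances $D(\B{H}^0_{i,.},\tilde{\B{X}}_0)^{1/2}$---into a bound controlled by $D(\B{H}_0,\tilde{\B{X}}_0)^{1/2}$. This is exactly what Lemma~\ref{specialcase} provides (using the full row rank of $\B{H}_0$ and the containment $\text{Conv}(\B{X}_0)\subseteq \text{Conv}(\B{H}_0)$), and this conversion is the source of both the condition number $\kappa(\B{H}_0)$ and the extra power of $k$ that appear prominently in~\eqref{techlem2}.
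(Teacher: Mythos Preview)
Your argument is correct. For~\eqref{techlem1} it is essentially the paper's proof: the paper also passes from $D(\hat{\B{H}},\B{H}_0)^{1/2}$ to $\tilde{D}(\hat{\B{H}},\B{X}_0)$ via the containment $\text{Conv}(\B{X}_0)\subseteq\text{Conv}(\B{H}_0)$, then to $\tilde{D}(\hat{\B{H}},\B{X})+k\delta$ via Lemma~\ref{perturb}, then to $\sqrt{k}\,\tilde{D}(P_{\ell}(\B{H}_0),\B{X})+k\delta$ via optimality and Lemma~\ref{comparison}, and finishes by peeling off the $P_{\ell}^{\perp}(\B{H}_0)$ perturbation and one more application of Lemma~\ref{perturb}. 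The only cosmetic difference is that the paper bounds $\sum_i\|P_{\ell}^{\perp}(\B{H}_0)_{i,.}\|_2$ by $k\max_i\|P_{\ell}^{\perp}(\B{H}_0)_{i,.}\|_2\le k\|P_{\ell}^{\perp}(\B{H}_0)\|_F$ rather than your Cauchy--Schwarz $\sqrt{k}\|P_{\ell}^{\perp}(\B{H}_0)\|_F$; your constant on the $\beta$ term is therefore slightly sharper but the route is the same.

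For~\eqref{techlem2} your route differs from the paper's in a way worth noting. The paper invokes Lemma~\ref{split} to write $\tilde{D}(\B{H}_0,\hat{\B{H}})\le \tilde{\mathcal{L}}(\B{H}_0,\B{X}_0)+k\,\tilde{D}(\B{X}_0,\hat{\B{H}})$ and then bounds the \emph{full} sum $\tilde{D}(\B{X}_0,\hat{\B{H}})$ over all $m$ rows using Lemma~\ref{perturb} and the feasibility constraint, which is where the $2mk\delta+mk\beta$ arises. You instead triangulate each $D(\B{H}^0_{i,.},\hat{\B{H}})^{1/2}$ through the single nearest data row $\tilde{\B{X}}^0_{i,.}$, so after the $\delta$-shift and feasibility you only pay $k(2\delta+\beta)$ for the second sum. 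Both approaches handle the first sum identically via Lemma~\ref{specialcase}, and both are valid; your version simply bypasses the loose step inside Lemma~\ref{split} (bounding one term by the whole sum $\tilde{D}(\B{Y},\B{Z})$) and consequently improves the $\delta$ and $\beta$ coefficients by a factor of $m$, which of course still implies the stated inequality.
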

\begin{proof}
We first prove \eqref{techlem1}. To this end, note that $\text{Conv}(\boldsymbol{X}_0)\subseteq\text{Conv}(\boldsymbol{H}_0)$, so 
$$D(\hat{\boldsymbol{H}},\boldsymbol{H}_0)^{1/2}\leq D(\hat{\boldsymbol{H}},\boldsymbol{X}_0)^{1/2}\leq \tilde{D}(\hat{\boldsymbol{H}},\boldsymbol{X}_0)$$
where the second inequality is a result of Lemma \ref{firstlem}. In addition, as $\hat{\boldsymbol{H}}$ is the optimal solution of (\ref{archl0noise}) and $P_{\ell}(\boldsymbol{H}_0)$ is feasible for problem~\eqref{archl0noise} (by Lemma \ref{thmtech1}), we have $D(\hat{\boldsymbol{H}},\boldsymbol{X})\leq D(P_{\ell}(\boldsymbol{H}_0),\boldsymbol{X})$ and by Lemma \ref{comparison},
\begin{equation}\label{thm3ineq5}
  \tilde{D}(\hat{\boldsymbol{H}},\boldsymbol{X})\leq \sqrt{k}\tilde{D}(P_{\ell}(\boldsymbol{H}_0),\boldsymbol{X}).  
\end{equation}
Therefore, one can write  
\begin{align*}
  D(\hat{\boldsymbol{H}},\boldsymbol{H}_0)^{1/2}& \stackrel{(a)}{\leq} \tilde{D}(\hat{\boldsymbol{H}},\boldsymbol{X}_0) = \tilde{D}(\hat{\boldsymbol{H}},\boldsymbol{X}-\boldsymbol{Z}) \stackrel{(b)}{\leq} \tilde{D}(\hat{\boldsymbol{H}},\boldsymbol{X})+k\delta \\
  & \stackrel{(c)}{\leq}\sqrt{k}\tilde{D}(P_{\ell}(\boldsymbol{H}_0),\boldsymbol{X})+ k\delta=\sqrt{k}\tilde{D}(\boldsymbol{H}_0 -P^{\perp}_{\ell}(\boldsymbol{H}_0),\boldsymbol{X})+ k\delta\\
  & \stackrel{(d)}{\leq} \sqrt{k}\tilde{D}(\boldsymbol{H}_0,\boldsymbol{X})+ \sqrt{k^3} \max_{j\in[k]}\|P^{\perp}_{\ell}(\boldsymbol{H}_0)_{j,.}\|_2+k\delta \\
  & \leq \sqrt{k}\tilde{D}(\boldsymbol{H}_0,\boldsymbol{X})+ \sqrt{k^3} \|P^{\perp}_{\ell}(\boldsymbol{H}_0)\|_F+k\delta \\
  & \leq \sqrt{k}\tilde{D}(\boldsymbol{H}_0,\boldsymbol{X})+\sqrt{k^3} \beta + k\delta \\
  & \stackrel{(e)}{\leq} \sqrt{k}\tilde{D}(\boldsymbol{H}_0,\boldsymbol{X}_0)+\sqrt{k^3} \beta + (k+\sqrt{k^3})\delta  \\
  & \stackrel{(f)}{\leq} \sqrt{k}\tilde{D}(\boldsymbol{H}_0,\tilde{\boldsymbol{X}}_0)+\sqrt{k^3} \beta + (k+\sqrt{k^3})\delta
\end{align*} 
where $(a)$ is a result of Lemma \ref{firstlem}, $(b),(d),(e)$ are results of Lemma \ref{perturb}, $(c)$ is a result of \eqref{thm3ineq5}, and $(f)$ is true as $\sqrt{k}\tilde{D}(\boldsymbol{H}_0,\boldsymbol{X}_0)\leq \sqrt{k}\tilde{D}(\boldsymbol{H}_0,\tilde{\boldsymbol{X}}_0)$ because $\text{Conv}(\tilde{\boldsymbol{X}}_0)\subseteq\text{Conv}(\boldsymbol{X}_0)$. This establishes \eqref{techlem1}.\\
We now proceed to show~\eqref{techlem2}. To this end, using Lemma~\ref{split},
\begin{align}\label{needed1}
    D(\boldsymbol{H}_0,\hat{\boldsymbol{H}})^{1/2} & \leq \tilde{D}(\boldsymbol{H}_0,\hat{\boldsymbol{H}})\leq \tilde{\mathcal{L}}(\boldsymbol{H}_0,\tilde{\boldsymbol{X}}_0)+k\tilde{D}(\tilde{\boldsymbol{X}}_0,\hat{\boldsymbol{H}}).
\end{align}
Note that by Lemma \ref{specialcase}, 
\begin{equation}\label{needed2}
\tilde{\mathcal{L}}(\boldsymbol{H}_0,\tilde{\boldsymbol{X}}_0)\leq 2\sqrt{k^3}\kappa(\boldsymbol{H}_0)D(\boldsymbol{H}_0,\tilde{\boldsymbol{X}}_0)^{1/2}
\end{equation}
by the definition of $\tilde{\B{X}}$. Let $j_i\in[m]$ for $i\in[k]$ be such that $\tilde{\B{X}}^0_{i,.}=\B{X}^0_{j_i,.}$. Let $\tilde{\B{Z}}\in\R^{k\times n}$ be such that $\tilde{\B{Z}}_{i,.}={\B{Z}}_{j_i,.}$ for $i\in[k]$ and $\tilde{\B{X}}=\tilde{\B{X}}_0+\tilde{\B{Z}}$. As a result,
\begin{align}
   \tilde{D}(\tilde{\boldsymbol{X}}_0,\hat{\boldsymbol{H}}) &= \tilde{D}(\tilde{\boldsymbol{X}}-\tilde{\boldsymbol{Z}},\hat{\boldsymbol{H}}) \stackrel{(a)}{\leq} \tilde{D}(\tilde{\boldsymbol{X}},\hat{\boldsymbol{H}})+k\delta \nonumber \\
   & \stackrel{(b)}{\leq}\sqrt{k} D(\tilde{\boldsymbol{X}},\hat{\boldsymbol{H}})^{1/2}+k\delta \nonumber\\
   & \stackrel{}{=} \sqrt{k} \sqrt{\sum_{i=1}^k D(\tilde{\B{X}}_{i,.},\hat{\B{H}})}+k\delta \nonumber\\
   & \stackrel{}{=} \sqrt{k} \sqrt{\sum_{i=1}^k D({\B{X}}_{j_i,.},\hat{\B{H}})}+k\delta \nonumber\\
    & \stackrel{(c)}{\leq} \sqrt{k} \sqrt{\sum_{i=1}^k(\delta + \beta)^2}+k\delta \nonumber\\
   & \leq k(\delta + \beta ) +k\delta = 2k\delta + k\beta \label{thm3ineq7}
\end{align} 
where $(a)$ is a result of Lemma \ref{perturb}, $(b)$ is a result of Lemma \ref{firstlem} and $(c)$ is due to the constraint $D(\B{X}_{i,.},\B{H})^{1/2}\leq \delta+\beta$ in Problem~\eqref{archl0noise}.
Therefore, by \eqref{needed1}, \eqref{needed2} and \eqref{thm3ineq7},
$$D(\boldsymbol{H}_0,\hat{\boldsymbol{H}})^{1/2}\leq 2\sqrt{k^3}\kappa(\boldsymbol{H}_0)D(\boldsymbol{H}_0,\tilde{\boldsymbol{X}}_0)^{1/2} + 2k^2\delta + k^2\beta$$
which establishes~\eqref{techlem2}. 
\end{proof}
\begin{proof}[{\bf Proof of Theorem \ref{robustnessthm}}]
Part 1) If $\hat{\B{H}}$ has linearly independent rows, the desired result is achieved by substituting (\ref{techlem1}) and (\ref{techlem2}) into Lemma \ref{hamiddist} with $\B{A}=\B{H}_0$ and $\B{B}=\hat{\B{H}}$. If $\hat{\B{H}}$ does not have linearly independent rows, for $\epsilon>0$ there exists $\hat{\B{H}}_{\epsilon}$ with linearly independent rows such that $\|\hat{\B{H}}_{\epsilon}-\hat{\B{H}}\|_F\leq \epsilon$. Following a path similar to the proof of Lemma \ref{specialcase} and taking the limit $\epsilon\downarrow 0+$ the desired result is achieved. \\
Part 2) By summing \eqref{techlem1} and \eqref{techlem2}, 
\begin{align*}
    D(\hat{\boldsymbol{H}},\boldsymbol{H}_0)^{1/2} + D(\boldsymbol{H}_0,\hat{\boldsymbol{H}})^{1/2}& \leq c_4 D(\boldsymbol{H}_0,\tilde{\boldsymbol{X}}_0)^{1/2} + c_5 \max_{i\in[m]}\|\boldsymbol{Z}_{i,.}\|_2 + c_6\|P^{\perp}_{\ell}(\boldsymbol{H}_0)\|_F\\
    & \stackrel{(a)}{\leq}\frac{\sigma_{\min}(\boldsymbol{H}_0)}{6\sqrt{k}}
\end{align*}
where $(a)$ is a result of condition \eqref{condstr}. As a result, 
$$D(\hat{\boldsymbol{H}},\boldsymbol{H}_0)^{1/2}+D(\boldsymbol{H}_0,\hat{\boldsymbol{H}})^{1/2}\leq \frac{\sigma_{\min}(\boldsymbol{H}_0)}{6\sqrt{k}}.$$
Therefore, condition (B.42) of \citet{javadi2019nonnegative} holds and by Lemma B.3 of \citet{javadi2019nonnegative}, we have
\begin{equation}\label{kapparel}
\kappa(\hat{\boldsymbol{H}})\leq \frac{7}{2}\kappa(\boldsymbol{H}_0),    
\end{equation}
which shows $\hat{\B{H}}$ has linearly independent rows. The rest of the proof is achieved by substituting \eqref{kapparel}, (\ref{techlem1}) and (\ref{techlem2}) into Lemma \ref{hamiddist} with $\B{A}=\hat{\B{H}}$ and $\B{B}=\B{H}_0$.
\end{proof}

\subsection{Proof of Proposition~\ref{sepinexact}}
\begin{proof}
One has
\begin{align*}
  D(\hat{\boldsymbol{H}},\boldsymbol{H}_0)^{1/2}& \stackrel{(a)}{\leq}  D(\hat{\boldsymbol{H}},\boldsymbol{X}_0)^{1/2} \\ & ={D}(\hat{\boldsymbol{H}},\boldsymbol{X}-\boldsymbol{Z})^{1/2}\\
  & \stackrel{(b)}{\leq} \sqrt{2}{D}(\hat{\boldsymbol{H}},\boldsymbol{X})+\sqrt{2k}\delta \\
  & \stackrel{(c)}{\leq}\sqrt{2}{D}(P_{\ell}(\boldsymbol{H}_0),\boldsymbol{X})^{1/2}+ \sqrt{2k}\delta\\
  & =\sqrt{2}{D}(\boldsymbol{H}_0 -P^{\perp}_{\ell}(\boldsymbol{H}_0),\boldsymbol{X})^{1/2}+ \sqrt{2k}\delta\\
  & \stackrel{(d)}{\leq} 2{D}(\boldsymbol{H}_0,\boldsymbol{X})^{1/2}+ 2\sqrt{k} \max_{j\in[k]}\|P^{\perp}_{\ell}(\boldsymbol{H}_0)_{j,.}\|_2+\sqrt{2k}\delta \\
  & \stackrel{}{\leq}  2{D}(\boldsymbol{H}_0,\boldsymbol{X}_0+\B{Z})^{1/2}+2\sqrt{k} \|P^{\perp}_{\ell}(\boldsymbol{H}_0)\|_F+\sqrt{2k}\delta\\
  & \stackrel{(e)}{\leq} 2\sqrt{2}{D}(\boldsymbol{H}_0,\boldsymbol{X}_0)^{1/2}+ 2\sqrt{k} \|P^{\perp}_{\ell}(\boldsymbol{H}_0)\|_F+3\sqrt{2k}\delta\\
  & \stackrel{(f)}{=}2\sqrt{k} \|P^{\perp}_{\ell}(\boldsymbol{H}_0)\|_F+3\sqrt{2k}\delta
\end{align*}
where $(a)$ is true as $\text{Conv}(\B{X}_0)\subseteq \text{Conv}(\B{H}_0)$, $(b)$ is by Lemma~\ref{perturb2}, $(c)$ is by optimality of $\hat{\B{H}}$, $(d)$ and $(e)$ are by Lemma~\ref{perturb2} and $(f)$ is true as by separability, $D(\B{H}_0,\B{X}_0)=0$. Next, without loss of generality assume that for $i\in[k]$, we have that $\B{X}^{0}_{i,.}=\B{H}^{0}_{i,.}$ (note that each row of $\B{H}_0$ is also a row of $\B{X}_0$ by separability) or in other words, $\B{H}_0=\tilde{\B{X}}_0=\B{X}^0_{1:k,.}$. Then,
\begin{align}
    D(\boldsymbol{H}_0,\hat{\boldsymbol{H}})^{1/2} & = D(\tilde{\boldsymbol{X}}_0,\hat{\boldsymbol{H}})^{1/2} \nonumber \\
    & = D(\tilde{\boldsymbol{X}}-\tilde{\B{Z}},\hat{\boldsymbol{H}})^{1/2}\nonumber\\
    & \stackrel{(a)}{\leq } \sqrt{2}D(\tilde{\boldsymbol{X}},\hat{\boldsymbol{H}})^{1/2}+\sqrt{2k}\delta \nonumber \\
    & \stackrel{}{=} \sqrt{2} \sqrt{\sum_{i=1}^k D(\tilde{\B{X}}_{i,.},\hat{\B{H}})}+\sqrt{2k}\delta \nonumber\\
    & \stackrel{(b)}{\leq} \sqrt{2} \sqrt{\sum_{i=1}^k(\delta + \beta)^2}+\sqrt{2k}\delta \nonumber\\
    & = \sqrt{2k}\beta + 2\sqrt{2k}\delta
\end{align}
where $(a)$ is by Lemma~\ref{perturb2}, and $(b)$ is by feasibility of $\hat{\B{H}}$. Rest of the proof follows the proof of Theorem~\ref{robustnessthm}.
\end{proof}

\subsection{Proof of Proposition \ref{penalizedpro}}\label{sec:proof-of-prop1}
The constants in this proposition are listed below:
\begin{equation}
    \begin{aligned}
    c^1_{\lambda}&=\left(2\kappa(\boldsymbol{H}_0)\left[k\sqrt{m}\sqrt{m+\lambda k^2}+mk\right]+(1+\sqrt{2})\sqrt{k}\left[\sqrt{mk/\lambda + k^3}+k\right]\right)\\
    c^2_{\lambda}&=\left(7\kappa(\boldsymbol{H}_0)\left[\sqrt{mk/\lambda +k^3}+k\right]+(1+\sqrt{2})\sqrt{k}\left[k\sqrt{m}\sqrt{m+\lambda k^2}+mk\right]\right)\\
    c^3_{\lambda}&=\left[(1+m)k +k\sqrt{m}\sqrt{m+\lambda k^2 }+\sqrt{mk/\lambda +k^3}\right].
    \end{aligned}
\end{equation}

\begin{lem}\label{proptechlem}
Under the assumptions of Proposition \ref{penalizedpro}, one has
\begin{align}
        D(\hat{\boldsymbol{H}}_{\lambda},\boldsymbol{H}_0)^{1/2} & \leq \sqrt{k}\sqrt{\frac{m\delta^2}{\lambda}+k^2\delta^2} + k\delta,\label{propineq5} \\
        D(\boldsymbol{H}_0,\hat{\boldsymbol{H}}_{\lambda})^{1/2} & \leq  k\sqrt{m}\sqrt{m\delta^2+\lambda k^2\delta^2} + mk\delta \label{propineq7}.
\end{align}
\end{lem}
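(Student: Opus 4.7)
The plan is to adapt the proof of Lemma~\ref{thm3proof-helper1} by replacing the feasibility/argmin comparison with the first-order consequence of penalized optimality. Since $\|P_\ell^\perp(\B{H}_0)\|_F=0$, the underlying archetype matrix $\B{H}_0$ is itself $\ell$-sparse and hence feasible in \eqref{archl0dual}. Optimality of $\hat{\B{H}}_\lambda$ then yields
\begin{equation*}
D(\B{X},\hat{\B{H}}_\lambda) + \lambda D(\hat{\B{H}}_\lambda,\B{X}) \le D(\B{X},\B{H}_0) + \lambda D(\B{H}_0,\B{X}),
\end{equation*}
which, by nonnegativity of each term on the left, gives separate bounds on $D(\B{X},\hat{\B{H}}_\lambda)$ and $D(\hat{\B{H}}_\lambda,\B{X})$ once the right-hand side is controlled.

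The next step is to bound the right-hand side using the assumption $D(\B{H}_0,\tilde{\B{X}}_0)=0$, which together with $\mathrm{rank}(\B{H}_0)=k$ forces each row of $\B{H}_0$ to coincide with some row of $\B{X}_0$ (separable case). Consequently $\tilde{D}(\B{H}_0,\B{X}_0)=0$, and Lemma~\ref{perturb} gives $\tilde{D}(\B{H}_0,\B{X})\le k\delta$ as well as, row-wise, $\sqrt{D(\B{X}_{i,\cdot},\B{H}_0)}\le \delta$ for each $i\in[m]$ (using that $\B{X}_{0,i,\cdot}\in\mathrm{Conv}(\B{H}_0)$). Squaring and summing, together with Lemma~\ref{firstlem}, yields $D(\B{X},\B{H}_0)\le m\delta^2$ and $D(\B{H}_0,\B{X})\le k^2\delta^2$. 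Substituting into the optimality inequality, I obtain
\begin{equation*}
D(\B{X},\hat{\B{H}}_\lambda)\le m\delta^2+\lambda k^2\delta^2, \qquad D(\hat{\B{H}}_\lambda,\B{X})\le \frac{m\delta^2}{\lambda}+k^2\delta^2.
\end{equation*}

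For \eqref{propineq5}, using $\mathrm{Conv}(\B{X}_0)\subseteq\mathrm{Conv}(\B{H}_0)$ together with Lemmas~\ref{firstlem} and \ref{perturb} gives the chain
\begin{equation*}
D(\hat{\B{H}}_\lambda,\B{H}_0)^{1/2}\le D(\hat{\B{H}}_\lambda,\B{X}_0)^{1/2}\le \tilde{D}(\hat{\B{H}}_\lambda,\B{X}_0)\le \tilde{D}(\hat{\B{H}}_\lambda,\B{X})+k\delta \le \sqrt{k}\,D(\hat{\B{H}}_\lambda,\B{X})^{1/2}+k\delta,
\end{equation*}
and the previously derived bound on $D(\hat{\B{H}}_\lambda,\B{X})$ delivers \eqref{propineq5}. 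For \eqref{propineq7}, Lemma~\ref{split} provides $\tilde{D}(\B{H}_0,\hat{\B{H}}_\lambda)\le \tilde{\mathcal{L}}(\B{H}_0,\B{X}_0)+k\,\tilde{D}(\B{X}_0,\hat{\B{H}}_\lambda)$; separability forces $\tilde{\mathcal{L}}(\B{H}_0,\B{X}_0)=0$ (either directly, or via Lemma~\ref{specialcase} with $D(\B{H}_0,\tilde{\B{X}}_0)=0$), after which $\tilde{D}(\B{X}_0,\hat{\B{H}}_\lambda)\le \tilde{D}(\B{X},\hat{\B{H}}_\lambda)+m\delta\le \sqrt{m}\,D(\B{X},\hat{\B{H}}_\lambda)^{1/2}+m\delta$ via Lemmas~\ref{perturb} and \ref{firstlem}. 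Substituting the bound on $D(\B{X},\hat{\B{H}}_\lambda)$ produces \eqref{propineq7}.

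The main obstacle is purely book-keeping rather than any deep new idea; the only delicate choice is to estimate $D(\B{H}_0,\B{X})$ via the $\tilde{D}$-based route to obtain the bound $k^2\delta^2$, which yields the constants stated in the lemma and preserves the $\sqrt{\lambda}$ growth that ultimately drives $c^1_\lambda,c^2_\lambda\to\infty$ as $\lambda\to\infty$ in Proposition~\ref{penalizedpro}. A sharper row-wise bound of $k\delta^2$ would also suffice but would understate the constants appearing downstream in $c^1_\lambda$ and $c^2_\lambda$.
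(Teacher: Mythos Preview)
Your proposal is correct and follows essentially the same route as the paper: you compare $\hat{\B{H}}_\lambda$ to the feasible point $\B{H}_0$ to obtain the key inequality $D(\B{X},\hat{\B{H}}_\lambda)+\lambda D(\hat{\B{H}}_\lambda,\B{X})\le m\delta^2+\lambda k^2\delta^2$, bound $D(\B{H}_0,\B{X})$ via $\tilde{D}$ and Lemma~\ref{perturb} to get $k^2\delta^2$, and then run the same chains through Lemmas~\ref{firstlem}, \ref{split}, \ref{perturb} and \ref{specialcase} that the paper uses (via \eqref{needed1}--\eqref{needed2}). Your closing remark about the deliberate use of the $k^2\delta^2$ bound rather than the sharper $k\delta^2$ is also on point.
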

\begin{proof}
Recall that in this proposition, we assume $P_{\ell}(\boldsymbol{H}_0)=\boldsymbol{H}_0$ ($\beta=0$) and 
\begin{equation}\label{separab}
    D(\boldsymbol{H}_0,\tilde{\boldsymbol{X}}_0)=\tilde{D}(\boldsymbol{H}_0,{\boldsymbol{X}}_0)=0.
\end{equation}
From \eqref{toprop}, we have 
\begin{equation} \label{propineq1}
D(\boldsymbol{X},\boldsymbol{H}_0)=\sum_{i=1}^m D(\boldsymbol{X}_{i,.},\boldsymbol{H}_0)\leq m\delta^2.\end{equation}
In addition, we have
\begin{align}
   D(\boldsymbol{H}_0,\boldsymbol{X})^{1/2}\stackrel{(a)}{\leq} \tilde{D}(\boldsymbol{H}_0,\boldsymbol{X})  = \tilde{D}(\boldsymbol{H}_0,\boldsymbol{X}_0+\boldsymbol{Z}) \stackrel{(b)}{\leq} \tilde{D}(\boldsymbol{H}_0 ,\boldsymbol{X}_0) + k\delta  \stackrel{(c)}{=}  k\delta \label{propineq2}
\end{align}
where $(a)$ is due to Lemma \ref{firstlem}, $(b)$ is due to Lemma \ref{perturb} and $(c)$ is true because of \eqref{separab}. Therefore, from~\eqref{propineq2} we have:
$$D(\boldsymbol{H}_0,\boldsymbol{X})\leq [\tilde{D}(\boldsymbol{H}_0,\boldsymbol{X})]^2\leq k^2\delta^2. $$
Let $u=m\delta^2$ and $v=k^2 \delta^2$. Note that as $\hat{\boldsymbol{H}}_{\lambda}$ is the optimal solution of the penalized problem \eqref{archl0dual}, by \eqref{propineq1} and \eqref{propineq2} we have
\begin{equation}\label{propineq9}
   D(\boldsymbol{X},\hat{\boldsymbol{H}}_{\lambda}) + \lambda D(\hat{\boldsymbol{H}}_{\lambda},\boldsymbol{X})\leq u+\lambda v. 
\end{equation}

Note that we have the following:
\begin{align}
    D(\hat{\boldsymbol{H}}_{\lambda},\boldsymbol{H}_0)^{1/2} & \stackrel{(a)}{\leq} D(\hat{\boldsymbol{H}}_{\lambda},\boldsymbol{X}_0)^{1/2}\stackrel{(b)}{\leq}  \tilde{D}(\hat{\boldsymbol{H}}_{\lambda},\boldsymbol{X}_0)\nonumber\\
    & = \tilde{D}(\hat{\boldsymbol{H}}_{\lambda},\boldsymbol{X}-\boldsymbol{Z}) \stackrel{(c)}{\leq} \tilde{D}(\hat{\boldsymbol{H}}_{\lambda},\boldsymbol{X}) + k\delta \nonumber\\
    & \stackrel{(d)}{\leq} \sqrt{k} D(\hat{\boldsymbol{H}}_{\lambda},\boldsymbol{X})^{1/2} + k\delta 
    \stackrel{(e)}{\leq} \sqrt{k}\sqrt{\frac{u}{\lambda}+v} + k\delta,
\end{align}
where $(a)$ is because $\text{Conv}(\B{X}_0)\subseteq \text{Conv}(\B{H}_0)$, $(b),(d)$ are due to Lemma \ref{firstlem}, $(c)$ is due to Lemma \ref{perturb};
and in $(e)$ we use the observation $D(\hat{\boldsymbol{H}}_{\lambda},\boldsymbol{X})\leq {u}/{\lambda} + v$ (which follows from~\eqref{propineq9}). 
This proves \eqref{propineq5}.\\
We will now prove~\eqref{propineq7}. We obtain the following set of inequalities:
\begin{align}
    D(\boldsymbol{H}_0,\hat{\boldsymbol{H}}_{\lambda})^{1/2} & \stackrel{(a)}{\leq} \tilde{\mathcal{L}}(\boldsymbol{H}_0,\boldsymbol{X}_0)+k\tilde{D}(\boldsymbol{X}_0,\hat{\boldsymbol{H}}) \nonumber\\
    & \stackrel{(b)}{\leq} 2\sqrt{k^3}\kappa(\boldsymbol{H}_0)D(\boldsymbol{H}_0,\tilde{\boldsymbol{X}}_0)^{1/2} + k\tilde{D}(\boldsymbol{X}_0,\hat{\boldsymbol{H}})\nonumber\\
    & \stackrel{(c)}{=} k\tilde{D}(\boldsymbol{X}_0,\hat{\boldsymbol{H}}_{\lambda})\nonumber \\
    & \stackrel{(d)}{\leq} k\tilde{D}(\boldsymbol{X},\hat{\boldsymbol{H}}_{\lambda}) + mk\delta \nonumber\\
    & \stackrel{(e)}{\leq}  k\sqrt{m}D(\boldsymbol{X},\hat{\boldsymbol{H}}_{\lambda})^{1/2} + mk\delta \nonumber\\
    & \stackrel{(f)}{\leq}   k\sqrt{m}\sqrt{u+\lambda v} + mk\delta
\end{align}
where $(a)$ is a result of  (\ref{needed1}), $(b)$ is due to (\ref{needed2}), $(c)$ is due to \eqref{separab}, $(d)$ is due to Lemma \ref{perturb} with $\B{X}_0=\B{X}-\B{Z}$, $(e)$ is true by 
Lemma~\ref{firstlem}; and $(f)$ is a result of \eqref{propineq9}. This establishes~\eqref{propineq7}. 
\end{proof}
\begin{proof}[{\bf Proof of Proposition~\ref{penalizedpro}}]

Part 1) If $\hat{\B{H}}_{\lambda}$ has linearly independent rows, this part of the proposition is a direct result of \eqref{propineq5} and \eqref{propineq7} together with Lemma \ref{hamiddist} with $\B{A}=\B{H}_0$ and $\B{B}=\hat{\B{H}}_{\lambda}$. If $\hat{\B{H}}_{\lambda}$ does not have linearly independent rows, a perturbation argument similar to the proof of Theorem~\ref{robustnessthm} Part 1 suffices.\\
Part 2) Similar to the proof of Theorem \ref{robustnessthm}, condition \eqref{propcond} guarantees $\kappa(\hat{\B{H}}_{\lambda})\leq (7/2)\kappa(\B{H}_0)$. The rest of the proof follows from \eqref{propineq5} and \eqref{propineq7} together with Lemma \ref{hamiddist} with $\B{A}=\hat{\B{H}}_{\lambda}$ and $\B{B}=\B{H}_0$.
\end{proof}

\subsection{Proof of Theorem \ref{convthm}}
\begin{proof}
Part 1) The proof of convergence is based on Theorem 2 of \citet{xu2017globally}. Following \citet{xu2017globally}, we define the maximum and minimum of Lipschitz constants across three blocks $(\B{H},\B{W},\tilde{\B{W}})$ at iteration $j$ as 
\begin{align}
    L_j= \max\{L_1(\B{W}_j),L_2(\B{H}_j),L_3(\B{X})\}~~~\text{and}~~~
    \ell_j= \min\{L_1(\B{W}_j),L_2(\B{H}_j),L_3(\B{X})\}\nonumber
\end{align}
respectively. By substituting the values of $L_1(\B{W}_j),L_2(\B{H}_j),L_3(\B{X})$, 
\begin{align}
    L_j&=\max\{2(\|\boldsymbol{W}_j^T\boldsymbol{W}_j\|_2+\lambda),2\max\{\|\boldsymbol{H}_j\boldsymbol{H}_j^T\|_2,\varepsilon\},2\lambda\|\boldsymbol{X}\boldsymbol{X}^T\|_2\}\\
    \ell_j&=\min\{2(\|\boldsymbol{W}_j^T\boldsymbol{W}_j\|_2+\lambda),2\max\{\|\boldsymbol{H}_j\boldsymbol{H}_j^T\|_2,\varepsilon\},2\lambda\|\boldsymbol{X}\boldsymbol{X}^T\|_2\}.
\end{align}
As $\B{W}_j,\tilde{\B{W}}_j$ are simplex matrices and bounded, and considering the cost function $\Psi(.,.,.)$ is bounded from above, $\B{H}_j$ needs to be bounded. Consequently, $L_j$ is uniformly bounded from above. In addition, by the assumption $\lambda>0$, $\ell_j$ is uniformly bounded away from zero. As a result, the Lipschitz constants across three blocks are uniformly bounded from above and bounded away from zero from below---a condition of Theorem 2 of \citet{xu2017globally}. Other conditions of Theorem 2 of \citet{xu2017globally} are satisfied and therefore, this implies the convergence of Algorithm~\ref{algorithm1}.\\
Part 2) Let 
\begin{equation}
    \B{T}_j=\max\{0,\boldsymbol{H}_j-[1/L_1(\B{W}_j)](-{\boldsymbol{W}_j}^T[\boldsymbol{X}-\boldsymbol{W}_j\boldsymbol{H}_j]+\lambda[\boldsymbol{H}_j-\tilde{\boldsymbol{W}}_j\boldsymbol{X}])\}.
\end{equation}
Note that $\B{T}_j\to \B{T}$ where $\B{T}$ is defined in \eqref{Tcond}. In addition by the assumption of the second part of the theorem on $\B{T}$, $P_{\ell}(\B{T})$ is unique. First, we show that $P_{\ell}(\B{T}_j)\to P_{\ell}(\B{T})$. Let us consider two cases:
\begin{enumerate}
    \item $\|\boldsymbol{T}\|_0>\ell$: In this case, there exists $i^*\geq 1$ such that the support of $P_{\ell}(\boldsymbol{T}_i)$ and $P_{\ell}(\boldsymbol{T})$ are the same for $i\geq i^*$. Therefore, for $i\geq i^*$, for $(r,u)\in S(P_{\ell}(\boldsymbol{T}_i))=S(P_{\ell}(\boldsymbol{T}))$ [recall that $S(\B{T})$ is the support of $\B{T}$], 
    $$P_{\ell}(\boldsymbol{T}_i)_{r,u}=\boldsymbol{T}^i_{r,u}\to \boldsymbol{T}_{r,u}=P_{\ell}(\boldsymbol{T})_{r,u}$$
    and for $(r,u)\in S(P_{\ell}(\boldsymbol{T}_i))^c=S(P_{\ell}(\boldsymbol{T}))^c$,
    $$P_{\ell}(\boldsymbol{T}_i)_{r,u}=0=P_{\ell}(\boldsymbol{T})_{r,u}.$$
    \item $\|\boldsymbol{T}\|_0\leq \ell$: In this case, $S(\boldsymbol{T})=S(P_{\ell}(\boldsymbol{T}))$ and there exists $i^*\geq 1$ such that for $i\geq i^*$, $S(\boldsymbol{T})\subseteq S(P_{\ell}(\boldsymbol{T}_i))$. As a result, for $i\geq i^*$, for $(r,u)\in S(\boldsymbol{T})$,
     $$P_{\ell}(\boldsymbol{T}_i)_{r,u}=\boldsymbol{T}^i_{r,u}\to \boldsymbol{T}_{r,u}=P_{\ell}(\boldsymbol{T})_{r,u}$$
     and for $(r,u)\in S(\boldsymbol{T})^c$,
     $$P_{\ell}(\boldsymbol{T}_i)_{r,u}\in\{0,\boldsymbol{T}^i_{r,u}\}$$
     and as $\boldsymbol{T}^i_{r,u}\to \boldsymbol{T}_{r,u}=0$,
     $$P_{\ell}(\boldsymbol{T}_i)_{r,u}\to P_{\ell}(\boldsymbol{T})_{r,u}=0.$$
\end{enumerate}
In addition, note that for any bounded convex set $C\subseteq \mathbb{R}^{m}$, if $\boldsymbol{x}_i\to \boldsymbol{x}^*$, $P_C(\boldsymbol{x}_i)\to P_C(\boldsymbol{x}^*)$ where $P_C$ is the projection onto $C$. This along with the fact that $P_{\ell}(\B{T}_j)\to P_{\ell}(\B{T})$, is sufficient to show stationarity: 
\begin{align*}
   \boldsymbol{H}^* & =\lim_{j\to\infty} \boldsymbol{H}_{j+1}\\
   & \stackrel{(a)}{=}\lim_{j\to\infty}P_{\ell}\left(\B{T}_j\right) \\
   & \stackrel{(b)}{=} P_{\ell}\left(\B{T}\right) \\
   & = P_{\ell}(\max\{0,\boldsymbol{H}^*-[1/L_1(\B{W}^*)](-{\boldsymbol{W}^*}^T[\boldsymbol{X}-\boldsymbol{W}^*\boldsymbol{H}^*]+\lambda[\boldsymbol{H}^*-\tilde{\boldsymbol{W}}^*\boldsymbol{X}])\}) \\
   & = \argmin_{\substack{\boldsymbol{H}\geq 0 \\ \|\B{H}\|_0\leq \ell}} \left\Vert\boldsymbol{H}-\left(\boldsymbol{H}^*-\frac{1}{2L_1(\B{W}^*)}\nabla_{\boldsymbol{H}}\Psi(\boldsymbol{H}^*,\B{W}^*,\tilde{\B{W}}^*)\right)\right\Vert_F^2
\end{align*}
where $(a)$ is by the definition of the iterate $\B{H}_{j+1}$ and $(b)$ was proved above, showing stationarity for the block $\B{H}$. 
\end{proof}

\subsection{Proof of Proposition \ref{ridgepro}}
\begin{proof}
Let $\phi(\B{H},\tilde{\B{W}})=\|\B{H}-\tilde{\B{W}}\B{X}\|_F^2$ be the objective function of \eqref{infilambda}. Suppose $\B{H}^*,\tilde{\B{W}}^*$ are optimal solutions to problem \eqref{infilambda} and let $\B{H}'=\B{0}$. Suppose $j$ is such that $\|\B{X}_{j,.}\|_2=\min_{u\in[m]}\|\B{X}_{u,.}\|_2$ and $\B{e}_j\in\R^m$ is the vector with all coordinates equal to zero except coordinate $j$ equal to one. Let $\tilde{\B{W}}'=\B{1}_k\B{e}_j^T$. Hence, $\tilde{\B{W}}'\B{X}= \B{1}_k\B{X}_{j,.}$. Note that $\B{H}',\tilde{\B{W}}'$ are feasible for \eqref{infilambda}.\\ 
We prove the statement of this proposition by the method of contradiction. 
Suppose there exists $i_0\in [k]$ such that  
\begin{equation}\label{contrad}
  \|\boldsymbol{H}^*_{i_0,.}\|_2> \max_{u\in[m]}\|\boldsymbol{X}_{u,.}\|_2+\sqrt{k}\min_{u\in[m]} \|\boldsymbol{X}_{u,.}\|_2.  
\end{equation}
Note that for any $\boldsymbol{v}\in\text{Conv}(\boldsymbol{X})$, there exists $\alpha_1,\cdots,\alpha_m\geq 0$ such that they sum to one and $\boldsymbol{v}=\sum_{i=1}^m \alpha_i \boldsymbol{X}_{i,.}$. As a result,
\begin{align*}
    \|\boldsymbol{v}\|_2= \|\sum_{i=1}^m \alpha_i \boldsymbol{X}_{i,.}\|_2 \leq \sum_{i=1}^m \alpha_i \|\boldsymbol{X}_{i,.}\|_2
     \leq \sum_{i=1}^m\alpha_i \max_{u\in [m]}\|\boldsymbol{X}_{u,.}\|_2=\max_{u\in [m]}\|\boldsymbol{X}_{u,.}\|_2
\end{align*}
which when used with~\eqref{contrad} leads to:
\begin{equation}\label{vsmall}
    \|\boldsymbol{H}^*_{i_0,.}\|_2\geq \|\boldsymbol{v}\|_2.
\end{equation}
One has
\begin{align*}
    k\min_{u\in[m]}\|\B{X}_{u,.}\|_2^2 & \stackrel{(a)}{=} \phi(\B{H}',\tilde{\B{W}}') \\
    & \stackrel{(b)}{\geq} \phi(\B{H}^*,\tilde{\B{W}}^*) \\
    & \geq \|\boldsymbol{H}^*_{i_0,.}-\tilde{\B{W}}_{i_0,.}\B{X}\|_2^2 \\
    & \geq D(\B{H}^*_{i_0,.},\B{X})=\min_{\boldsymbol{v}\in\text{Conv}(\boldsymbol{X})} \|\boldsymbol{H}^*_{i_0,.}-\boldsymbol{v}\|_2^2 \\
     & \stackrel{(c)}{\geq} \min_{\boldsymbol{v}\in\text{Conv}(\boldsymbol{X})} |\|\boldsymbol{H}^*_{i_0,.}\|_2-\|\boldsymbol{v}\|_2|^2 \stackrel{(d)}{\geq} |\|\boldsymbol{H}^*_{i_0,.}\|_2-\max_{\boldsymbol{v}\in\text{Conv}(\boldsymbol{X})}\|\boldsymbol{v}\|_2|^2 \\
    & = (\|\boldsymbol{H}^*_{i_0,.}\|_2-\max_{u\in[m]}\|\boldsymbol{X}_{u,.}\|_2)^2\\
    & \stackrel{(e)}{>} k\min_{u\in[m]}\|\B{X}_{u,.}\|_2^2,
\end{align*}
where $(a)$ is true by definition of $\B{H}',\tilde{\B{W}}'$, $(b)$ is due to the optimality of $\B{H}^*,\tilde{\B{W}}^*$, $(c)$ is true as for any two vectors $\B{a},\B{b}$, $\|\B{a}-\B{b}\|_2^2 \geq (\|\B{a}\|_2-\|\B{b}\|_2)^2$, $(d)$ is due to \eqref{vsmall} and $(e)$ is because of \eqref{contrad}. This is a contradiction. Hence, for any $i\in[k]$, $$\|\B{H}^*_i\|_2\leq\max_{u\in[m]} \|\boldsymbol{X}_{u,.}\|_2+\sqrt{k}\min_{u\in[m]} \|\boldsymbol{X}_{u,.}\|_2.$$
\end{proof}

\subsection{Proof of Proposition \ref{dualprop}}
\begin{proof}
Part 1) The cost function of \eqref{ftomin} is jointly convex in $\B{H},\tilde{\B{W}},\B{Y}$ and the feasible set of \eqref{ftomin} is convex. Therefore, $F$ is a marginal minimization of a jointly convex function (w.r.t. $\B{H},\tilde{\B{W}}$) over a convex set and is convex (see Section 3.2.5 of \citet{boyd2004convex}). \\
Part 2) For the second part, note that we can rewrite~\eqref{ftomin} as:
\begin{align}\label{ftomin2}
    F(\B{Y})=\min_{\B{H},\tilde{\B{W}},\B{U}} \quad &   \|\B{H}-\B{U}\|_F^2 \\
     \text{s.t.} \quad & \boldsymbol{H} \geq 0,~~\tilde{\boldsymbol{W}} \geq 0,~~\tilde{\boldsymbol{W}}\boldsymbol{1}_m=\boldsymbol{1}_k \nonumber \\
     & \B{H}_{i,j}\leq \sqrt{b}\B{Y}_{i,j}\quad \forall (i,j)\in[k]\times[n]\nonumber\\
     & \B{U}=\tilde{\B{W}}\B{X}.\nonumber
\end{align}
We start by obtaining the dual problem of \eqref{ftomin2}. Note that by enhanced Slater's condition \citep{boyd2004convex}, strong duality holds for this problem. The Lagrangian of \eqref{ftomin2} can be written as
\begin{align}
L(\B{H},\tilde{\B{W}},\B{U},\B{M},\B{\Lambda},\B{\mu}) =& \|\B{H}-\B{U}\|_F^2 -\langle\B{M}_1,\B{H}\rangle-\langle\B{M}_2,\tilde{\B{W}}\rangle+
    \langle\B{M}_3,\B{U}-\tilde{\B{W}}\B{X}\rangle \nonumber \\ 
    & +\langle\B{\mu},\tilde{\B{W}}\B{1}_m-\B{1}_k\rangle
    +\langle\B{\Lambda},\B{H}-\sqrt{b}\B{Y}\rangle \nonumber\\
=&\left[\|\B{H}-\B{U}\|_F^2-\langle\B{M}_1-\B{\Lambda},\B{H}\rangle+\langle\B{M}_3,\B{U}\rangle\right] \nonumber\\
&          +\left[-\langle\B{M}_2,\tilde{\B{W}}\rangle-\langle\B{M}_3,\tilde{\B{W}}\B{X}\rangle+\langle\B{\mu},\tilde{\B{W}}\B{1}_m\rangle\right] \label{lag0} \\
 &         + \left[-\langle\B{\mu},\B{1}_k\rangle-\langle\B{\Lambda},\sqrt{b}\B{Y}\rangle\right], \nonumber
         \end{align}
where $\B{M}_1,\B{M}_2,\B{M}_3,\B{\mu},\B{\Lambda}$ are the corresponding Lagrangian variables. By considering the optimality conditions wrt $\B{H},\B{U},\tilde{\B{W}}$, we achieve
\begin{align}
    2(\B{H}-\B{U})& =\B{M}_1-\B{\Lambda},\label{lag1} \\
    2(\B{H}-\B{U})& =\B{M}_3,\label{lag2}\\
 \B{M}_2+\B{M}_3\B{X}^T&=\B{\mu}\B{1}_m^T.\label{lag3}
\end{align}
Using \eqref{lag1}, \eqref{lag2}, \eqref{lag3} in~\eqref{lag0}, we get the dual of \eqref{ftomin2}:
\begin{align}\label{dualftomin}
   F(\B{Y})= \max_{\B{M}_1,\B{M}_2,\B{\Lambda}\geq0,\B{M}_3,\B{\mu}} \quad &   -\frac{1}{4}\|\B{M}_3\|_F^2 -\langle\B{\mu},\B{1}_k\rangle-\langle\B{\Lambda},\sqrt{b}\B{Y}\rangle \\
     \text{s.t.} \quad & \B{M}_2+\B{M}_3\B{X}^T=\B{\mu}\B{1}_m^T\nonumber \\
     & \B{M}_1-\B{\Lambda}=\B{M}_3. \nonumber
\end{align}
At optimality, note that for $(i,j)\in [k]\times [n]$,  $\B{\Lambda}_{i,j}=-\B{M}_{i,j}^3$ if $\B{M}_{i,j}^3<0$ and otherwise $\B{\Lambda}_{i,j}=0$ as $\Lambda_{i,j},\B{Y}_{i,j}\geq 0$ and the cost function is higher if $\Lambda_{i,j}$ is smaller. 
By using Danskin's Theorem~\citep{bertsekas1997nonlinear},
if $\B\Lambda$ is the optimal solution to~\eqref{dualftomin}, 
$-\sqrt{b}\B{\Lambda}$ is a subgradient of $F$. 
In addition, at optimality, based on KKT conditions, $\B{M}_3=2(\B{H}-\B{U})$ by \eqref{lag2} and $\B{U}=\tilde{\B{W}}\B{X}$ by feasibility, completing the proof.
\end{proof}

\subsection{Proof of Proposition~\ref{OGprop}}
\begin{proof}
Note that based on the upper/lower bounds returned by the outer approximation algorithm, for any $\B{H}\in\R^{k\times n}_{\geq 0}$ such that $\|\B{H}\|_0\leq \ell$,
$$D({\B{H}},\B{X})\geq \text{LB}~~~\text{and}~~~ D({\B{H}}^*_{\infty},\B{X})=\text{UB}.$$
As a result,
$$\frac{D({\B{H}}^*_{\infty},\B{X})-D(\B{H},\B{X})}{D({\B{H}}^*_{\infty},\B{X})}\leq \frac{\text{UB}-\text{LB}}{\text{UB}}=\text{OG}$$
or
\begin{equation}\label{ogineq}
  D(\B{H},\B{X})  \geq (1-\text{OG})D({\B{H}}^*_{\infty},\B{X}).
\end{equation}
Thus,
\begin{align}
    D(\B{X},\B{H}^*_{\lambda}) + \lambda D(\B{H}^*_{\lambda},\B{X})& \stackrel{(a)}{\geq} D(\B{X},\hat{\B{H}}_{\lambda}) + \lambda D(\hat{\B{H}}_{\lambda},\B{X}) \nonumber\\
    & \geq \lambda D(\hat{\B{H}}_{\lambda},\B{X}) \nonumber\\
    & \stackrel{(b)}{\geq} \lambda(1-\text{OG})D({\B{H}}^*_{\infty},\B{X})\label{prop-init-1}
\end{align}
where $(a)$ is due to optimality of $\hat{\B{H}}_{\lambda}$ and $(b)$ is due to~\eqref{ogineq}. Moreover, as Algorithm~\ref{algorithm1} is a descent algorithm initialized with $\B{H}_{\infty}^*$,
\begin{equation}
   D(\B{X},\B{H}^*_{\lambda}) + \lambda D(\B{H}^*_{\lambda},\B{X}) \leq D(\B{X},\B{H}^*_{\infty}) + \lambda D(\B{H}^*_{\infty},\B{X}).\label{prop-init-2}
\end{equation}
Hence,
\begin{align}
    \Phi_{\lambda}(\hat{\B{H}}_{\lambda}) - \Phi_{\lambda}(\B{H}^*_{\lambda}) &\stackrel{(a)}{\geq} -D(\B{X},\B{H}^*_{\lambda})-\lambda D(\B{H}^*_{\lambda},\B{X})+\lambda(1-\text{OG})D({\B{H}}^*_{\infty},\B{X})\nonumber\\
    & \stackrel{(b)}{\geq} -D(\B{X},\B{H}^*_{\infty})-\lambda(\text{OG})D({\B{H}}^*_{\infty},\B{X})\label{prop-init-3}
\end{align}
where $(a)$ is a result of~\eqref{prop-init-1} and $(b)$ is a result of~\eqref{prop-init-2}. Additionally, by~\eqref{ogineq},
\begin{equation}
    \lambda(1-\text{OG})D({\B{H}}^*_{\infty},\B{X}) \leq \lambda D(\B{H}^*_{\lambda},\B{X})\leq D(\B{X},\B{H}^*_{\lambda}) + \lambda D(\B{H}^*_{\lambda},\B{X})
\end{equation}
so by~\eqref{prop-init-3},
\begin{align}
    \frac{\Phi_{\lambda}(\hat{\B{H}}_{\lambda}) - \Phi_{\lambda}(\B{H}^*_{\lambda}) }{\Phi_{\lambda}(\B{H}^*_{\lambda})}\geq \frac{-D(\B{X},\B{H}^*_{\infty})-\lambda(\text{OG})D({\B{H}}^*_{\infty},\B{X})}{\lambda(1-\text{OG})D({\B{H}}^*_{\infty},\B{X})}.
\end{align}
\end{proof}

\subsection{Proof of Proposition \ref{loothm}}
\begin{proof}
Similar to proof of Theorem~\ref{noiseremove}, we can consider the matrix $\boldsymbol{U}_1\in \{0,1\}^{k\times k}$ with exactly one $1$ in each row, such that  $\mathcal{L}(\boldsymbol{H}^{-\mathcal{S}},\boldsymbol{H}_0)=\|\boldsymbol{H}^{-\mathcal{S}}-\boldsymbol{U}_1\boldsymbol{H}_0\|_F^2$. By the definition of $d(\boldsymbol{X})$, for any $i\in\mathcal{S}$ there exists $j_i\in[m]\setminus \mathcal{S}$ such that $\|\boldsymbol{X}_{i,.}-\boldsymbol{X}_{j_i,.}\|_2\leq d(\boldsymbol{X})$. Let $\bar{\B{X}}_{0}\in\R^{|\mathcal{S}|\times n}$ be such that for $i\in\mathcal{S}$, $\bar{\B{X}}^{0}_{i,.}=\B{X}^0_{j_i,.}$. Moreover, let $\bar{\B{W}}_0\in \R^{|\mathcal{S}|\times k}$ be such that for $i\in\mathcal{S}$, $\bar{\B{W}}^{0}_{i,.}=\B{W}^0_{j_i,.}$. This implies $\bar{\B{X}}_0=\bar{\B{W}}_0\B{H}_0$. We define $\bar{\B{X}}$ similar to $\bar{\B{X}}_0$. We also let $\bar{\B{Z}}=\bar{\B{X}}-\bar{\B{X}}_0$. One has
\begin{align*}
    D(\boldsymbol{X}_{\mathcal{S}},\boldsymbol{H}^{-\mathcal{S}})^{1/2}&=\min_{\substack{\B{W}\geq 0\\ \boldsymbol{W1}=\B{1}}}\|\B{X}_{\mathcal{S}}-\boldsymbol{W}\boldsymbol{H}^{-\mathcal{S}}\|_F\\
    & \leq \|\B{X}_{\mathcal{S}}-\bar{\boldsymbol{W}}_0\boldsymbol{H}^{-\mathcal{S}}\|_F \\
    & = \|\B{X}_{\mathcal{S}}+\bar{\B{X}}-\bar{\B{X}}-\bar{\boldsymbol{W}}_0\boldsymbol{H}^{-\mathcal{S}}\|_F\\
        & \stackrel{(a)}{\leq} \|\B{X}_{\mathcal{S}}-\bar{\B{X}}\|_F+\|\bar{\B{X}}_0+\bar{\B{Z}}-\bar{\boldsymbol{W}}_0(\boldsymbol{H}^{-\mathcal{S}}-\boldsymbol{U}_1\boldsymbol{H}_0+\boldsymbol{U}_1\boldsymbol{H}_0)\|_F\\
    & \stackrel{(b)}{\leq} \sqrt{|\mathcal{S}|}d(\boldsymbol{X}) + \|\bar{\boldsymbol{W}}_0(\mathbb{I}_k-\boldsymbol{U}_1)\boldsymbol{H}_0\|_F + \|\bar{\boldsymbol{W}}_0(\boldsymbol{H}^{-\mathcal{S}}-\boldsymbol{U}_1\boldsymbol{H}_0)\|_F + \|\bar{\B{Z}}\|_F\\
     & \stackrel{(c)}{\leq} \sqrt{|\mathcal{S}|}\left[d(\boldsymbol{X}) + \|\mathbb{I}_k-\boldsymbol{U}_1\|_F\|\boldsymbol{H}_0\|_F + \|\boldsymbol{H}^{-\mathcal{S}}-\boldsymbol{U}_1\boldsymbol{H}_0\|_F+ \max_{j\in[m]}\|\boldsymbol{Z}_{j,.}\|_2\right]\\
     & \leq \sqrt{|\mathcal{S}|}\left[d(\boldsymbol{X}) + k\|\boldsymbol{H}_0\|_F + \max_{j\in[m]}\|\boldsymbol{Z}_{j,.}\|_2 + \mathcal{L}(\boldsymbol{H}^{-\mathcal{S}},\boldsymbol{H}_0)^{1/2}\right]
\end{align*}
where $(a)$ is due to $\bar{\B{Z}}=\bar{\B{X}}-\bar{\B{X}}_0$, $(b)$ is due to the definition of $\bar{\B{X}}$ and considering $\bar{\B{X}}$ has $|\mathcal{S}|$-many rows, and $(c)$ is true because $\bar{\B{W}}_0$ has $|\mathcal{S}|$-many rows that each has $\ell_2$ norm bounded by 1 (see~\eqref{wnorm}). \\
We now turn our attention to the second part of the proof. Similar to the first part of the proof, there exists $\B{U}_2\in\{0,1\}^{k\times k}$ such that $\mathcal{L}(\boldsymbol{H}_0,\boldsymbol{H}^{-\mathcal{S}})=\|\boldsymbol{U}_2\boldsymbol{H}^{-\mathcal{S}}-\boldsymbol{H}_0\|_F^2$. Moreover, similar to the proof of Theorem~\ref{noiseremove},
$$\bar{\B{W}}_0\B{U}_2\in\{\B{W}:\B{W}\geq 0, \B{W1}_k=\B{1}_{|\mathcal{S}|}\}.$$
Consequently, 
\begin{align*}
     D(\boldsymbol{X}_{\mathcal{S}},\boldsymbol{H}^{-\mathcal{S}})^{1/2}&=\min_{\substack{\B{W}\geq 0\\ \boldsymbol{W1}=\B{1}}}\|\B{X}_{\mathcal{S}}-\boldsymbol{W}\boldsymbol{H}^{-\mathcal{S}}\|_F\\
    & \leq \|\B{X}_{\mathcal{S}}+\bar{\B{X}}-\bar{\B{X}}-\bar{\boldsymbol{W}}_0\B{U}_2\boldsymbol{H}^{-\mathcal{S}}\|_F \\
    & \leq \|\B{X}_{\mathcal{S}}-\bar{\B{X}}\|_F +\|\bar{\B{X}}-\bar{\boldsymbol{W}}_0\B{U}_2\boldsymbol{H}^{-\mathcal{S}}\|_F \\
    &\leq \|\B{X}_{\mathcal{S}}-\bar{\B{X}}\|_F +\|\bar{\B{X}}_0-\bar{\boldsymbol{W}}_0\B{U}_2\boldsymbol{H}^{-\mathcal{S}}\|_F +\|\B{Z}\|_F \\
    & \leq \sqrt{|\mathcal{S}|}\left[d(\boldsymbol{X}) +\max_{j\in m}\|\B{Z}_{j,.}\|_2\right] + \|\bar{\B{W}}_0\|_F\|\B{H}_0-\B{U}_2\B{H}^{-\mathcal{S}}\|_F \\
    & \leq \sqrt{|\mathcal{S}|}\left[d(\boldsymbol{X}) +\max_{j\in m}\|\B{Z}_{j,.}\|_2+\mathcal{L}(\boldsymbol{H}_0,\boldsymbol{H}^{-\mathcal{S}})^{1/2}\right]. 
\end{align*}

\end{proof}

\section{Additional Numerical Experiments}
\subsection{Synthetic Data}\label{syntheticadd}
\textbf{Robustness and Initialization:} In Section~\ref{initvsobj} we showed that using the MIP-based initialization and local search helps to improve the objective function achieved by the solution. Here, we show that our proposed framework improves the robustness of the resulting model compared to the baseline. To this end, we consider the same setup as in Section~\ref{initvsobj} and report the strong robustness quantity $\mathcal{L}(\hat{\B{H}},\B{H}_0)$ (where $\B{H}_0$ is the underlying archetypes matrix and $\hat{\B{H}}$ is the estimated one) in Table~\ref{robusobj}. As it can be seen, our framework of SAA (and SAA+LS) not only lead to lower objective values (as discussed in Section~\ref{initvsobj}), but also leads to more robust solutions compared to a simple initialization in most case, specially when the noise is low. 
\begin{table}[h!]

\centering
\begin{tabular}{ |c|c|c|c|c| } 
\hline
 $\sigma_z$ & Method &  $\ell/nk=0.5$ & $\ell/nk=0.65$ & $\ell/nk=0.8$ \\
\hline
\multirow{4}{*}{0.01}& Zero &  0.7010 & 0.5865 & 0.4914 \\
& Projections & 0.1398 & 0.0191 & 0.0021  \\
& SAA &  0.0736 & 0.0082 & 0.0002  \\
&SAA+LS &    \textbf{0.0735} & \textbf{0.0081} & \textbf{0.0002 } \\
\hline

\multirow{4}{*}{0.1}&Zero &   0.7038 & 0.5877 & 0.4952  \\
& Projections & 0.1728 & 0.0520 & 0.0298   \\
& SAA &   0.1055 & \textbf{0.0370} & 0.0280\\
&SAA+LS  &  \textbf{0.1054} & \textbf{0.0370} & \textbf{0.0279} \\
\hline

\multirow{4}{*}{0.5}&Zero & 0.7927 & 0.6691 & 0.5849  \\
& Projections &  0.7956 & 0.6306 & \textbf{0.5680} \\
& SAA &   0.7771 & 0.6147& 0.5780  \\
&SAA+LS &     \textbf{0.7769} & \textbf{0.6146}& 0.5780   \\
\hline
\end{tabular}
\caption{\small  Comparison of zero initialization, Projections, SAA and SAA+LS in Appendix \ref{syntheticadd}. }
\label{robusobj}
\end{table}
 
\subsection{Cancer Gene Expression Example}\label{geneapp}

 {\textbf{The effect of initialization:} To show the effectiveness of our proposed initialization and local search framework, we consider the same setup as discussed in Section~\ref{genedata}. However, we initialize our method as discussed in Section~\ref{initvsobj} (we consider three cases Zero, SAA and SAA+LS as discussed). We compare the objective value achieved by these three approaches, in addition to their clustering performance captured by purity and entropy, in Table~\ref{cluster-app}. As it can be seen and in agreement with our observations from the synthetic examples, our initialization and local search frameworks improve the quality of the solution significantly compared to the baseline (Zero). 
} \\

\begin{table}[h!]
\centering
\begin{tabular}{ |c|c|c|c| } 
\hline
  & Zero & SAA & SAA+LS   \\
\hline
Objective & $1.7\times 10^{12}$  &  $5.2\times 10^{11}$ &  $5.1\times 10^{11}$  \\
Purity &  0.364 & 0.660 &    0.662\\
Entropy & 0.636 & 0.361 &  0.360 \\
\hline

\end{tabular}
\caption{\small   Performance of different initializations for the gene expression data set in Appendix~\ref{geneapp} }
\label{cluster-app}
\end{table}

\noindent\textbf{The effect of varying $\lambda$:} To show how the choice of $\lambda$ affects the performance of our method, we consider the following experimental setup. We choose 9 values of $\lambda$ between 0.1 and 20 and run our method on the data. We randomly select 40 data points as the validation set and use other 158 data points to estimate $\B{H}$. The left panel in Figure~\ref{fig:app} shows the validation loss (on the held-out data) for different values of $\lambda$. We also use the whole data set to investigate the clustering performance of our method (captured by purity) for different values of $\lambda$. The right panel in Figure~\ref{fig:app} shows the purity of the resulting estimator for different values of $\lambda$. As it can be seen, the best clustering performance is achieved for $\lambda\approx 0.5-1.5$. The lowest validation loss is also achieved for the value of $\lambda=1$. Overall, it can be seen choosing $\lambda$ too small or too large can negatively affect the performance of the model. However, our validation method can find a good value of $\lambda$.    \\

\begin{figure*}[t!]
     \centering
\begin{tabular}{lclc}
&  Validation Loss & & Purity \\
     \rotatebox{90}{~~~~~~~~~~~~~~~~~~~$\text{V}_{\lambda}$}& \includegraphics[width=0.4\linewidth,trim =0.5cm 0cm 1cm 0cm, clip = true]{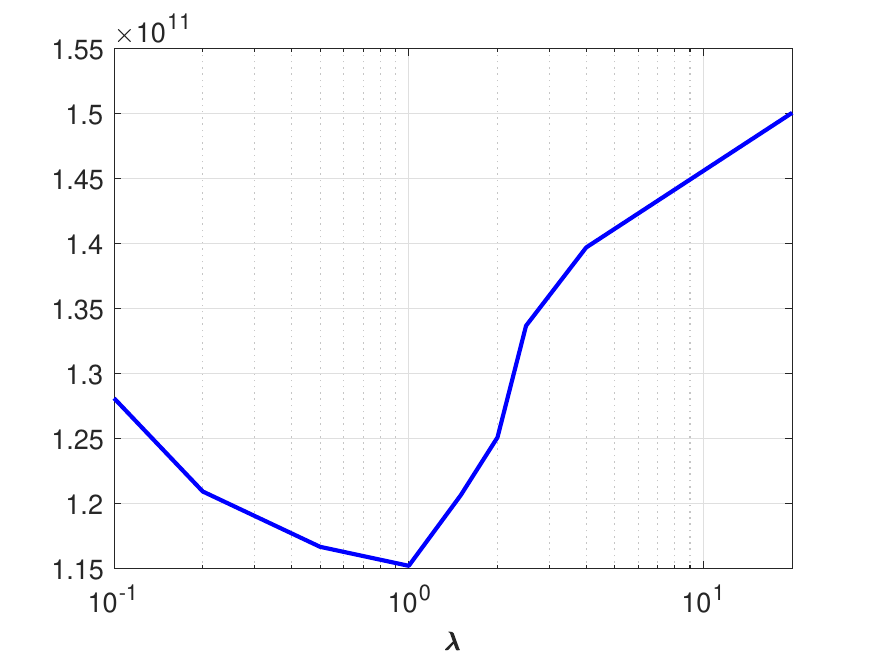}& 
 \rotatebox{90}{~~~~~~~~~~~~~~~~Purity}&        \includegraphics[width=0.4\linewidth,trim =0.5cm 0cm 1cm 0cm, clip = true]{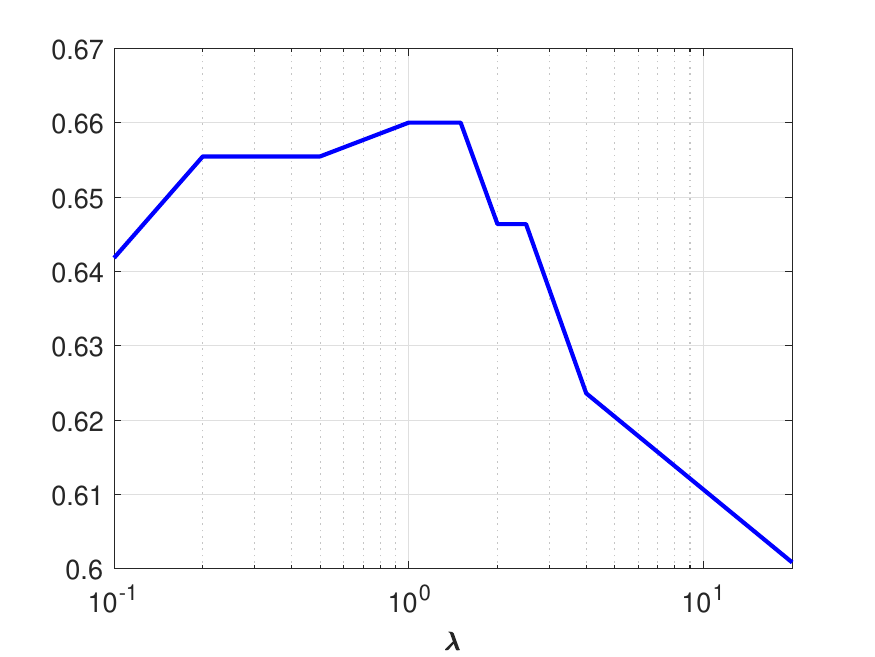}\\ 
\end{tabular}
        \caption{\small Comparison of validation loss and purity of the clustering result for gene expression data in Appendix~\ref{geneapp}}
        \label{fig:app}
\end{figure*}

\noindent \textbf{The effect of varying sparsity:} Finally, we show our algorithm outperforms other methods with different values of sparsity. We vary the value of the target sparsity $\ell$ between $0.35nk$ and $0.95nk$ and tune other methods to result in solutions with the target sparsity (as discussed in Section~\ref{synthnumundrob}). Our simulation setup is the same as in Section~\ref{genedata}. The purity results for these cases are shown in Figure~\ref{fig:app2}. We observe that our method outperforms other sparse NMF methods with different values of sparsity and even performs better than two non-sparse methods.

\begin{figure*}[t!]
     \centering
 \begin{tabular}{lc}
 & Purity \\
     \rotatebox{90}{~~~~~~~~~~~~~~~~~Purity}& \includegraphics[width=0.4\linewidth,trim =0.5cm 0cm 1cm 0cm, clip = true]{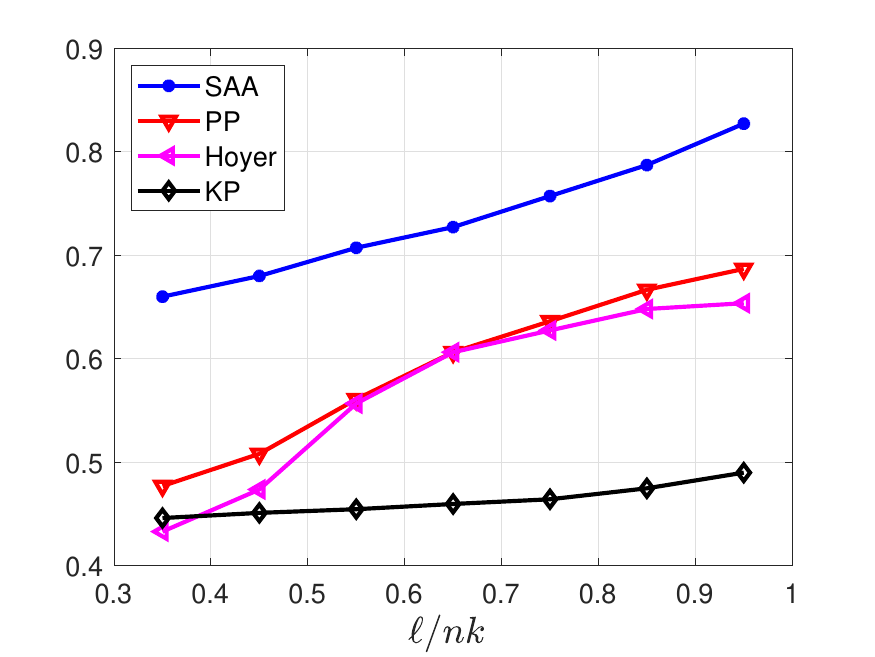}\\ 
\end{tabular}
        \caption{\small  Comparison of effect of sparsity for gene expression data in Appendix~\ref{geneapp}}
        \label{fig:app2}
\end{figure*}

\subsection{Hyperspectral Unmixing Example}\label{app:hyperdata}
In this section, we present additional results for the experiments conducted in Section~\ref{hyperdata}. First, we show how the results from SAA appear to be more interpretable. To this end, consider Figure~\ref{figpines2}. 

\begin{figure}[t!]

     \centering
\begin{tabular}{cccc}
     Ground Truth & Kmeans & AA/Chen & SAA \\
 \includegraphics[width=0.23\linewidth,trim = 1.5cm 1cm 1.5cm 1cm, clip = true]{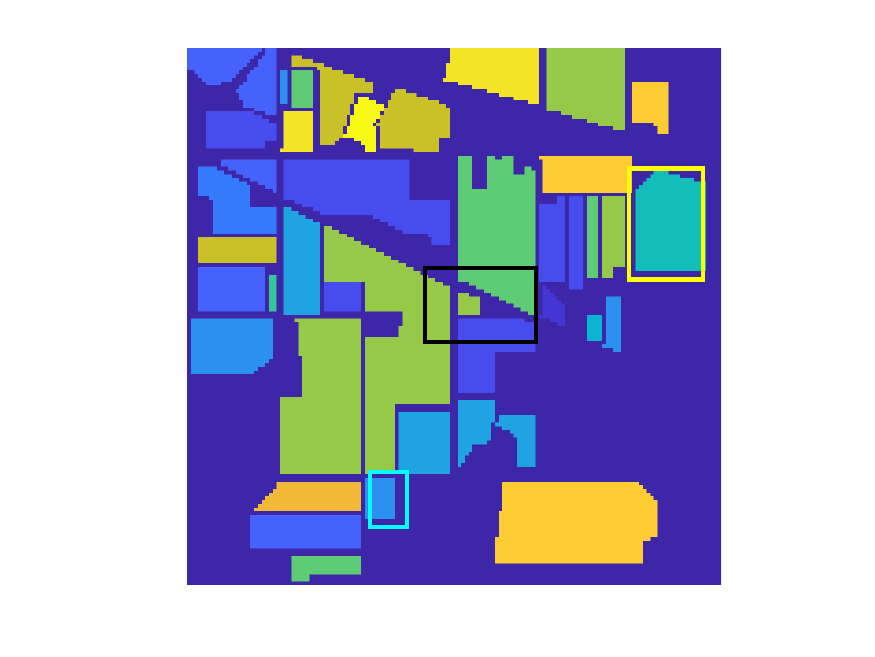} & \includegraphics[width=0.23\linewidth,trim = 1.5cm 1cm 1.5cm 1cm, clip = true]{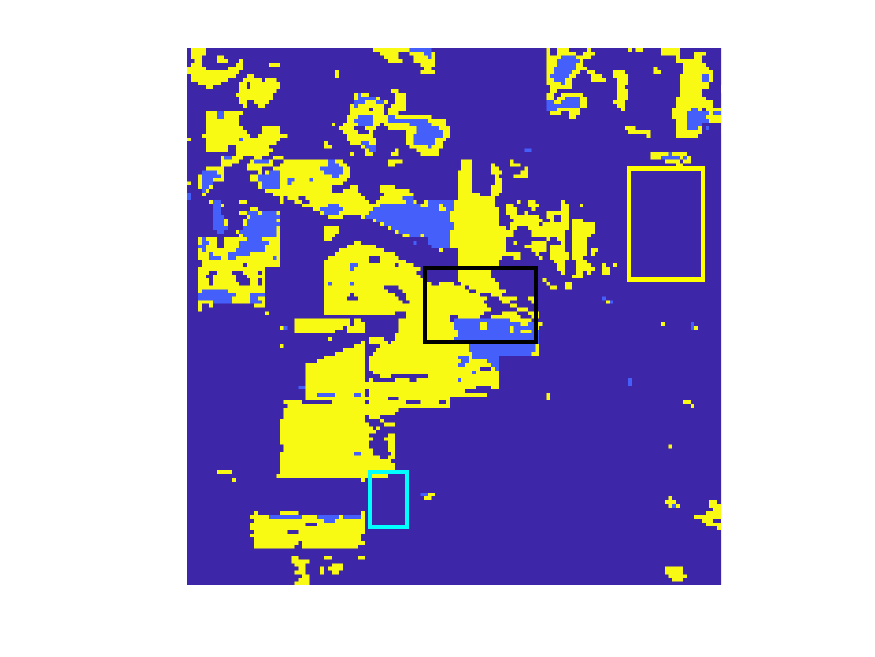} &  \includegraphics[width=0.23\linewidth,trim = 1.5cm 1cm 1.5cm 1cm, clip = true]{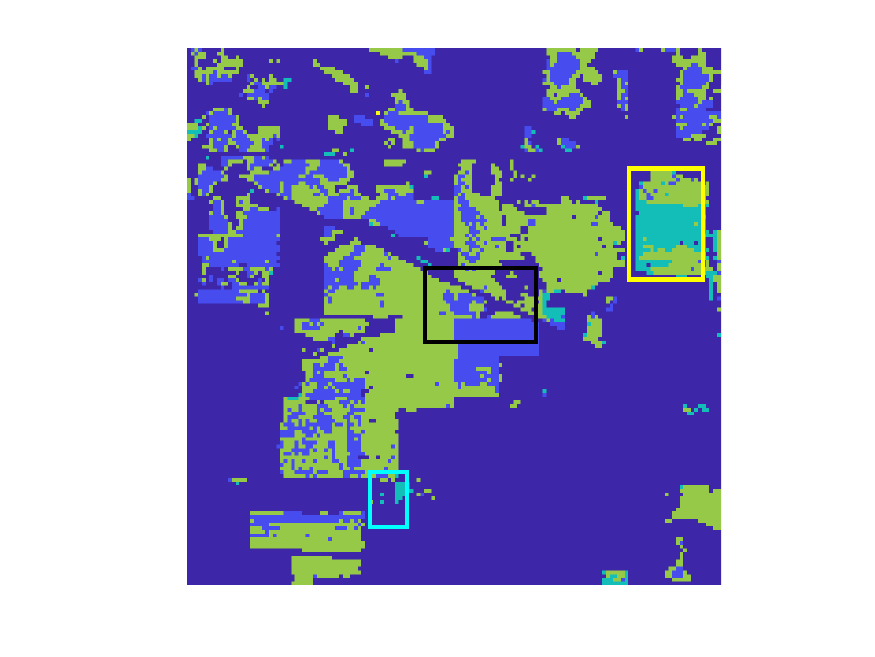}&  \includegraphics[width=0.23\linewidth,trim = 1.5cm 1cm 1.5cm 1cm, clip = true]{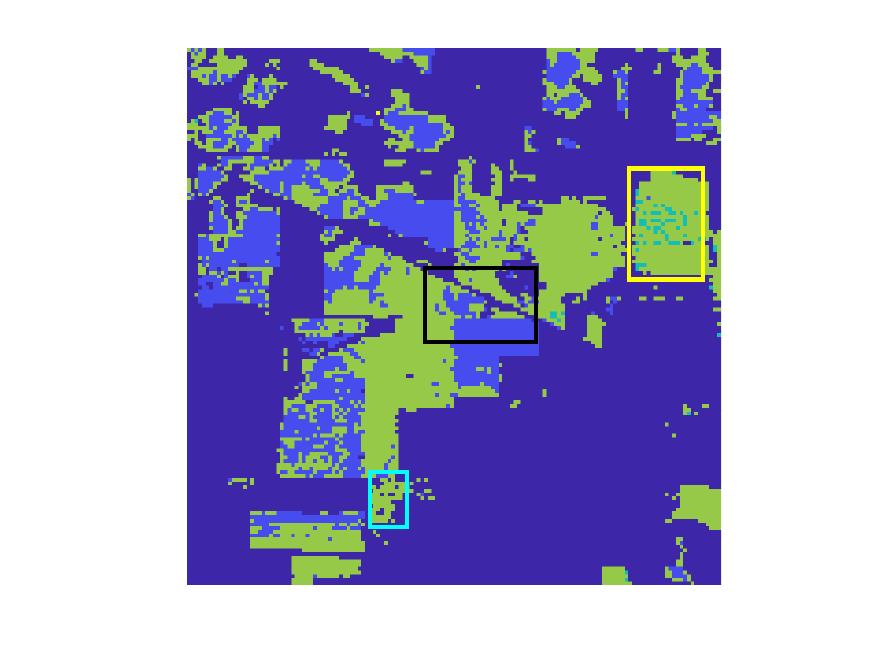}\\
         \end{tabular}
     \caption{\small Visual comparison for  hyperspectral unmixing data in Appendix~\ref{app:hyperdata}  }    \label{figpines2}
\end{figure}
The black box in figures above shows a part of the image where AA/SAA/Chen are able to properly identify a road. Moreover, the yellow and cyan boxes highlight two areas of the image, where SAA appears to have better recovery. As can be seen in Figure~\ref{figpines2}, AA/Chen miss most of the area in the cyan box, where SAA has a better recovery. Similarly, the area in the yellow box is mostly put into a single cluster by SAA, as expected from the ground truth, but this is not the case for AA/Chen (we note the color coding between the estimated clusters and the ground truth might not be unique, as it is not possible to map clusters from different examples to each other perfectly). \\
\textbf{The effect of varying sparsity:} Similar to the previous section, we explore the effect of sparsity on the quality of the solution for different methods. The results for this case are shown in Figure~\ref{fig:apppinesell}. As it can be seen, as long as the solution is not too sparse, SAA is able to deliver a good performance. This shows that sparse NMF, and particularly SAA, can obtain solutions that perform well, while compressing the solution by enforcing sparsity.
\begin{figure*}[t!]
     \centering
   \begin{tabular}{lc}
 & Purity \\
     \rotatebox{90}{~~~~~~~~~~~~~~~~~Purity}& \includegraphics[width=0.4\linewidth,trim =0.5cm 0cm 1cm 0cm, clip = true]{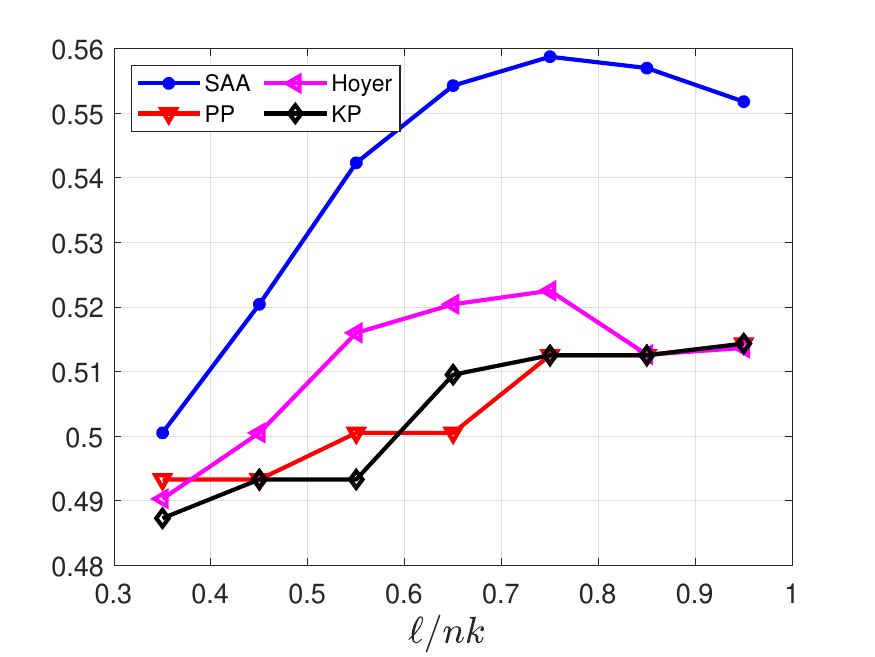}\\ 
\end{tabular}
        \caption{\small Comparison of effect of sparsity for hyperspectral unmixing data in Appendix~\ref{app:hyperdata}}
        \label{fig:apppinesell}
\end{figure*}

\end{document}